\documentclass[10pt]{article}

\usepackage[left=2.5cm,bottom=2.5cm,right=2.5cm,top=2.5cm]{geometry}

\usepackage{amsmath}
\usepackage{amssymb}
\usepackage{amsfonts}
\usepackage{multirow}
\usepackage{amsthm}

\newtheorem{theorem}{Theorem}
\newtheorem{lemma}{Lemma}
\newtheorem{corollary}{Corollary}
\newtheorem{proposition}{Proposition}
\theoremstyle{definition}

\newtheorem{remark}{Remark}

\usepackage{float}
\usepackage{array}
\usepackage[caption=false,font=normalsize,labelfont=sf,textfont=sf]{subfig}
\usepackage[ruled,vlined,linesnumbered]{algorithm2e}
\usepackage{textcomp}
\usepackage{stfloats}
\usepackage{url}
\usepackage{verbatim}
\usepackage{graphicx}
\usepackage{mathtools}
\usepackage{caption}
\usepackage{diagbox}
\usepackage{cases}
\usepackage{enumerate}
\usepackage{tabularx}
\usepackage{multirow}
\usepackage{bm}
\usepackage{authblk}
\usepackage{indentfirst}
\usepackage{setspace}
\usepackage{color}
\usepackage[utf8]{inputenc} 
\usepackage[T1]{fontenc}    
\usepackage{hyperref}       
\usepackage{url}            
\usepackage{booktabs}       
\usepackage{amsfonts}       
\usepackage{nicefrac}       
\usepackage{microtype}      
\usepackage{lipsum}		
\usepackage{graphicx}
\usepackage{doi}
\usepackage{amssymb}                  
\usepackage{mathrsfs} 
\usepackage[resetlabels]{multibib}
\usepackage[bottom,perpage,flushmargin]{footmisc}
\newtheorem{assumption}{Assumption}

\usepackage[utf8]{inputenc}
\usepackage{natbib}
\SetKwInput{KwData}{Input}
\SetKwInput{KwResult}{Output} 

\usepackage[table]{xcolor}

\usepackage{tikz}
\usetikzlibrary{patterns}
\usetikzlibrary{decorations.pathreplacing,calligraphy}
\usetikzlibrary{patterns.meta}
\usetikzlibrary{arrows.meta, positioning, shapes.multipart, calc, fit}
\usetikzlibrary{shapes.geometric}
\usetikzlibrary{shadows}
\usetikzlibrary{intersections,calc}

\begin{document}

\title{Adversarial Contamination Meets Hard Thresholding: An Iterative Algorithm with Signal Adaptivity and Minimax Optimality}

\author[1]{Shixiang Liu}
\author[2]{Hanming Yang}

\affil[1]{\footnotesize School of Statistics, Renmin University of China}
\affil[2]{\footnotesize Institute of Statistics and Big Data, Renmin University of China}

\date{}
\maketitle \sloppy

\begin{abstract}
Pervasive data contamination---stemming from measurement errors, outliers, or adversarial corruption---has motivated the development of robust statistical methods. 
In this context, we propose a two-stage Adversarial Contamination-resistant Iterative Hard Thresholding (AC-IHT) algorithm for high-dimensional regression with contamination.
Our nonconvex algorithm achieves minimax near-optimal (up to logarithmic terms) estimation by iteratively updating the coefficient vector and the contamination vector with different thresholding scales.
We further demonstrate that our AC-IHT estimator is signal-adaptive: under proper signal conditions, it adaptively attains a sharper estimation rate and more accurate support recovery.
Moreover, it enjoys the strong oracle property, laying a theoretical foundation for asymptotic inference.
Numerical experiments confirm its superior finite-sample performance.
Finally, we discuss theoretical extensions of the proposed procedure to generalized linear models and to heavy-tailed noise settings.
\end{abstract}

\begin{keywords}
Adversarial contamination, Iterative hard thresholding, Non-convex optimization, Signal adaptivity, Strong oracle property.
\end{keywords}

\section{Introduction}

Adversarial contamination has become a significant concern in both statistical theory and its applications.
Models that explicitly address contamination yield enhanced robustness, improved generalization, and a more realistic representation of real-world data.
From a statistical perspective, this paper considers a basic high-dimensional linear model with adversarial contamination, given by
\begin{equation}\label{adv}
    Y = X \beta^* + \sqrt n \theta^* + \xi ,
\end{equation}
where $Y\in \mathbb R^n$ is the response vector, $X\in\mathbb R^{n\times p}$ is the design matrix, and $\beta^*\in\mathbb R^p$ is the coefficient vector.
The term $\sqrt n \theta^*\in \mathbb R^n$ represents an adversarial contamination that models potentially malicious or outlying observations in the response vector.
The $\sqrt n$ factor is introduced for technical convenience, ensuring that the columns of the augmented design matrix $(X \mid \sqrt{n} I_n) \in \mathbb R^{n \times (p+n)}$ are of comparable magnitude.
The noise term $\xi$ is an independent centered $\sigma$-subGaussian random vector, i.e., $\mathbf{E}(e^{\lambda^\top \xi})\leq \exp(\sigma^2\|\lambda\|_2^2/2)$ for all $\lambda \in \mathbb R^n$, and $\xi$ is independent of $X$.  
We assume that $\beta^*$ is $s$-sparse and $\theta^*$ is ${\tilde o}$-sparse, i.e., $\| \beta^{*} \|_0 = s < p$ and $\| \theta^{*} \|_0 = {\tilde o }< n$. 

Such a formulation provides a unified framework that can potentially benefit various domains, including robust principal component analysis with missing data \citep{chen2021bridging}, figure classification with covariate-shifted samples \citep{heng2025detecting}, and econometric models with heterogeneous treatment effects \citep{pinkham2024contamination}.
To address model \eqref{adv} in the high-dimensional regime $p\gtrsim n$, this paper introduces a two-stage \textbf{A}dversarial \textbf{C}ontamination-resistant \textbf{I}terative \textbf{H}ard \textbf{T}hresholding (AC-IHT) algorithm and analyzes how the signal strengths of $\beta^*$ and $\theta^*$ affect the statistical inference of $\beta^*$.

\subsection{Literature review}\label{Related work/Literature review}
\noindent
{\bf Adversarial contamination~}
To achieve robust statistical inference, one often treats the contamination as additional covariates, thereby reducing its influence \citep{sardy2001robust,gannaz2007robust}. 
In high-dimensional settings, \citet{nguyen2012robust} applied simultaneous $\ell_1$ penalties to both coefficient $\beta$ and contamination $\theta$, deriving a joint error bound.  
Alternative penalty forms have also been explored: \citet{she2011outlier} introduced nonconvex penalties for outlier detection, \citet{lee2012regularization} studied $\ell_2$ regularization on contamination within regression and classification frameworks, and \citet{Dehan2018} proposed an adaptive $\ell_1$ penalty with weights determined from an initial robust fit.
From an algorithmic viewpoint, \citet{bhatia2015robust,bhatia2017consistent,suggala2019adaptive} analyzed a series of iterative hard thresholding methods under sparse contamination, establishing their convergence properties.

Recent work on adversarial contamination has shifted toward the non-asymptotic minimax estimation rates.
Following the general minimax theory for $\epsilon$-Huber contamination model proposed by \citet{gaochao16ejs, chengaoren18aos}, \citet{gaochao20bernoulli} introduced a multivariate-depth estimator that is minimax-optimal (but not efficiently computable). 
To bridge this gap, efficient $\ell_1$-regularized methods achieving near-optimal rates were developed \citep{dalalyan19nips, Chinot20EJS, sasai2020robust}. 
Subsequent contributions produced estimators that adapt to the noise level $\sigma$, sparsity $s$, and contamination $o$ \citep{finocchio2021robust, Ndaoud24slope}.
There are also advanced extensions considering non-sparse settings \citep{PolingLoh24jasa, hammouda2024outlier}, low-rank matrix regression under contamination \citep{thompson2020outlier, Xia2025AOS}, and models where both covariates and responses are contaminated \citep{sasai2025outlier}. 
For a comprehensive overview, we refer readers to \citet{loh25review}.

\noindent
{\bf Iterative hard thresholding and signal adaptivity~}
Iterative optimization has emerged as a central analysis paradigm in statistics \citep{jain2014iterative, huang18jmlr, she2021Bregman} that goes beyond the traditional Empirical Risk Minimization (ERM) framework. This is because the iteration trajectory itself acts as a form of regularization, such as early stopping \citep{Fan2023implicit} and iterative thresholding \citep{blumensath2008iterative, BLUMENSATH2009265, liu2020between}, and provides a finer balance between computational efficiency and statistical precision.
Furthermore, iterative schemes provide superior flexibility by allowing for the dynamic tuning of hyperparameters, such as learning rates \citep{she2023slow, Xia2025AOS}, truncation thresholds \citep{NM20}, compression coefficients \citep{she2023slow}, and so on.
As a widely adopted iterative scheme, Iterative Hard Thresholding (IHT, \citet{blumensath2008iterative}) is particularly noted for applying no additional shrinkage to the selected signals. Notably, \citet{NM20} demonstrated that IHT exhibits a remarkable \textbf{signal adaptivity} property: if the signal scale satisfies $\min_{i : \beta^*_i \ne 0} |\beta_i| \ge C \sigma \sqrt{n^{-1} \log(p/s) }$, the IHT estimator adaptively achieves an $\ell_2$-estimation rate of order $\sigma\sqrt{s/n}$. This refined rate not only improves upon the standard high-dimensional minimax rate $\sigma\sqrt{n^{-1}  s \log(p/s) }$ \citep{Raskutti2011minimax}, but also aligns with the minimax phase transition phenomenon \citep{MN19}.

\subsection{Motivation and inspiration}\label{Motivation and inspiration}
In the high-dimensional contamination setting, most existing works established minimax (near) optimal estimations \citep{dalalyan19nips, finocchio2021robust, She22outlier, Ndaoud24slope, Xia2025AOS}, without studying how the signal strengths of $\beta^*$ and $\theta^*$ affect the estimation accuracy. 
Moreover, support recovery and asymptotic distributions remain unexplored: \citet{MINSKER2025SPL} proved support recovery for the Lasso estimator under the incoherence condition, but did not address its asymptotic behavior.
Therefore, we address the following questions:
\begin{quote}
\emph{In the high-dimensional contamination model \eqref{adv}, do the signal strengths of $\beta^*$ and $\theta^*$ influence the recovery of $\beta^*$?
Does the IHT estimator possess signal-adaptivity to $\beta^*$ estimation, and retain the strong oracle property?
Are these results supported by minimax-optimal guarantees?}
\end{quote}
The following works inspire our study. 
From the upper-bound perspective, \citet{dalalyan19nips, Ndaoud24slope} showed that consistent estimation of $\beta^*$ is possible only if the columns of $X\in\mathbb{R}^{n\times p}$ are ``nearly uncorrelated'' with those of the identity matrix $I_n$.
This insight motivates our incoherence-type requirement in Proposition \ref{prop: rii}. 
From the lower-bound perspective, \citet{chengaoren18aos, Chinot20EJS} described how to construct least-favorable distributions specific to contamination settings; in the uncontaminated sparse model, \citet{butucea18AOS} derived a minimax lower bound via a Bayesian-risk approach, which crucially guides our lower bound construction.

\subsection{Main contributions}
We affirmatively address the questions posed in Section \ref{Motivation and inspiration}.
The main contributions of this paper are threefold:
\begin{itemize}
\item \textbf{Signal adaptivity and minimax near-optimality}~
We propose a two-stage AC-IHT algorithm for sparse linear regression under adversarial contamination. 
We establish that our estimator $\tilde \beta$ exhibits signal adaptivity, and explicitly characterize how the signal strengths of $\beta^*$ and $\theta^*$ affect the statistical inference of $\beta^*$, as summarized in Table \ref{table: Theoretical Result}.
Our results are supported by {minimax near-optimal} guarantees.

\item \textbf{Strong oracle property}~
Under proper signal strength conditions, we prove that the AC-IHT algorithm converges to the oracle estimator, enabling exact support recovery of $\beta^*$. 
Furthermore, we establish asymptotic normality for our estimator $\tilde \beta$, providing a foundation for valid inference.

{\color{black}
\item \textbf{Additional theoretical extensions}~ We extend the AC-IHT algorithm to generalized linear models, accommodating a diverse range of response distributions.
Furthermore, we establish a theoretical connection between adversarial contamination and heavy-tailed noise, demonstrating that AC-IHT achieves minimax near optimality in both settings. 
}

\end{itemize}

\begin{table}[htbp]
\centering
\caption{The signal adaptivity of the AC-IHT estimator $\tilde{\beta}$ in signal estimation and support recovery.}\label{table: Theoretical Result}
\resizebox{\linewidth}{!}{
\begin{tabular}{ccc}
\hline
\diagbox[width=4.5cm,height=2cm]{Signal \\ Condition}{Theoretical\\  Result} & \parbox[c][2.5cm][c]{5.5cm}{\centering \textbf{Convergence Rate} \\ \ \\ $\|\tilde{\beta} - \beta^* \|_2$} &  \parbox[c][2.5cm][c]{5cm}{\centering \textbf{Support Recovery}} \\
\hline
\parbox[c][2.0cm][c]{4.5cm}{\centering None} & \parbox[c][2.0cm][c]{5.5cm}{\centering {\color{black}$\sigma \left( \sqrt{ \frac{s \log p}{n}} + \frac{o \log n}{n}\right)$} \\ \ \\ \centering Theorem \ref{th2: Signal adaptive estimation}} & \parbox[c][2.0cm][c]{5cm}{\centering {\small $\left | \text{supp}(\beta^*)\  \triangle \ \text{supp}(\tilde{\beta})\right | = O(s)$}\\ \  \\ \centering Theorem \ref{th2: Signal adaptive estimation}} \\
\parbox[c][2.0cm][c]{4.5cm}{\centering With Signal Condition :\\ \  \\ $\min_{i : \beta^*_i \ne 0} |\beta_i^*| \ge \beta^{\ddag}$} & \parbox[c][2.0cm][c]{5.5cm}{\centering {\color{black} $\sigma \left( \sqrt{\frac{s + \log(1/\varrho)}{n}} + \frac{s \log p+ o \log n}{n} \right)$} \\ \  \\ \centering Theorem \ref{th2: Signal adaptive estimation}}  & \parbox[c][2.0cm][c]{5cm}{\centering {\small $\left | \text{supp}(\beta^*)\  \triangle \ \text{supp}(\tilde{\beta})\right | \prec s$}\\ \ \\ \centering Corollary \ref{co1: almost full recovery}} \\
\parbox[c][2.0cm][c]{4.5cm}{\centering With Signal Conditions: \\ $\min_{i : \beta^*_i \ne 0} |\beta_i^*| \ge \beta^{\ddag},$\\$\min_{k : \theta^*_k \ne 0} |\theta_k^*| \ge \theta^{\ddag}$ } & \parbox[c][2.0cm][c]{5cm}{\centering {\color{black}$\sigma \sqrt{\frac{s+ \log(1/\varrho) }{n-o}}$ } \\ \ \\ \centering Theorem \ref{th3: Strong Oracle estimation}} & \parbox[c][2.0cm][c]{5cm}{\centering {\small $\text{supp}(\tilde{\beta}) = \text{supp}(\beta^*)$}\\ \ \\ \centering Theorem \ref{th3: Strong Oracle estimation}} \\
\hline
\end{tabular}
}
\parbox{\linewidth}{\footnotesize
\raggedright
\textit{Note:}
Here $\beta^{\ddag} := C_\beta   \sigma \left\{\frac{\log p}{n} +  \frac{o^2 \log^{2 }n}{ n^2 s }\right\}^{1/2}$, $\theta^{\ddag} := C_\theta   \sigma \left\{\frac{\log n}{n} +  \frac{s^2 \log^2 p}{n^2 o}\right\}^{1/2}$, where {\color{black}$o= \|\theta^* \|_0 \vee 1$}, and $C_\beta,C_\theta>0 $ are some absolute constants.
{\color{black} All results hold with a probability greater than $1- \varrho - O(p^{-2} + n^{-3})$.}
} 
\end{table}
However, solely estimating $\beta^*$ is far more challenging than jointly estimating $ \beta^{*}$ and $\theta^{*}$.
To overcome this, we innovatively employ separate thresholding levels for $\tilde \beta$ and $\tilde \theta$ in each iteration. 
This separation enables a refined analysis of how the nuisance component $\theta^*$ impacts the estimation of $\beta^*$, and yields both delicate sparsity patterns and sharp $\ell_2$-error bounds for our estimator $\tilde\beta$.

\subsection{Organization of the paper}
The present paper is organized as follows. 
Section~\ref{section:algorithm} establishes the procedure of the two-stage AC-IHT algorithm. 
Section~\ref{section:Theoretical guarantee} presents the theoretical guarantees of our algorithm.
Section~\ref{section:Numerical experiment} presents numerical experiments that illustrate our theoretical findings.
Section~\ref{section:Discussion} discusses extensions of AC-IHT to generalized linear models, its connection to heavy-tailed regression, and related future directions.
Detailed proofs and additional simulations are available in the supplementary material.

\textbf{Notation}
For sequences $a_n$ and $b_n$, we write $a_n = O(b_n)$ (or $a_n \lesssim b_n$) if $a_n \le Cb_n$ for some constant $C>0$ and all large $n$, 
and $a_n \prec b_n$ if $a_n/b_n \to 0$ as $n \to \infty$.
We write that $a_n \asymp b_n$ if $a_n = O(b_n)$ and $b_n  = O(a_n)$.
Let $[m] = \{1,2,\dots,m\}$, and $\mathbf{1} (\cdot)$ be the indicator function. Define $x \vee y = \max\{x,y\}$. Let $S^* = \{i: \beta^*_i \ne 0\} \subseteq [p]$ and $O^* = \{k: \theta^*_k \ne 0\} \subseteq [n]$ denote the support sets of $\beta^*$ and $\theta^*$, respectively.
For sets $A$ and $B$ with sizes $|A|$ and $|B|$, let $\beta_A = (\beta_j)_{j \in A} \in \mathbb{R}^{|A|}$, $X_{\cdot, A} = (X_j)_{j \in A} \in \mathbb{R}^{n \times |A|}$, and $X_{A,B} \in \mathbb{R}^{|A| \times |B|}$ be the submatrix of $X \in \mathbb R^{ n \times p}$ with rows and columns in $A$ and $B$. 
The symmetric difference of $A$ and $B$ is defined as $A \triangle B = (A \setminus B) \cup (B \setminus A)$.
For a vector $\beta$, denote $\|\beta\|_2$ as its Euclidean norm, $\|\beta\|_0$ as the number of its nonzero entries, and $\operatorname{supp}(\beta)$ as its support set. For matrix $X$, denote $\| X \|_2 $ as its operator norm.

\section{Two-stage AC-IHT Algorithm}\label{section:algorithm}
\setcounter{equation}{0}
This section presents the two-stage AC-IHT algorithm, with its first and second stages detailed in Sections \ref{subsection:First stage} and \ref{subsection:Section stage}, respectively. 
The first-stage procedure provides an initial estimator with a near-optimal estimation rate. 
The second-stage algorithm refines this estimate to obtain a final estimator with the desirable theoretical properties summarized in Table \ref{table: Theoretical Result}.

\subsection{The first stage: dynamic thresholding iteration}\label{subsection:First stage}

We start by defining the squared $\ell_2$ loss for model \eqref{adv} as
$$
L(\beta,\theta) = \frac1{2n}\big\lVert Y - X\beta - \sqrt{n}\,\theta\big\rVert_2^2.
$$
Based on $L(\beta,\theta)$, we propose the first-stage AC-IHT Algorithm \ref{alg1}. 
Each iteration in Algorithm \ref{alg1} can be summarized in three steps: \textbf{gradient update}, \textbf{threshold parameters update}, and \textbf{hard thresholding operation}.

\textbf{Gradient update:} We derive the partial derivatives of $L(\beta,\theta)$ with respect to $\beta$ and $\theta$ as
\begin{align*}
    \frac{\partial L}{\partial \beta} (\beta^t, \theta^t) &= -\frac1n X^\top\bigl(Y - X\beta^t - \sqrt{n}\,\theta^t\bigr),\\
    \frac{\partial L}{\partial \theta} (\beta^t, \theta^t) &= -\frac1{\sqrt{n}}\,\bigl(Y - X\beta^t - \sqrt{n}\,\theta^t\bigr),
\end{align*}
and employ the gradient descent approach to update the parameters:
\begin{equation}\label{alg1:step 1}
\textbf{Step 1.1: }\quad
\begin{aligned}
    H^{t+1}_{\beta} &\gets \beta^t - {\color{black} \eta} \frac{\partial L}{\partial \beta} (\beta^t, \theta^t),
    \\ H^{t+1}_{\theta} &\gets \theta^t - {\color{black} \eta} \frac{\partial L}{\partial \theta} (\beta^t, \theta^t),
\end{aligned}
\end{equation}
where ${\color{black} \eta} > 0$ denotes the learning rate, the explicit choice of which will be specified in Theorem \ref{th1: Initial estimation}.

\textbf{Threshold parameters update:} 
For $\lambda>0$, define the hard thresholding operator $\mathcal{T}^m_{\lambda}:\mathbb R^m\to\mathbb R^m$, such that
\begin{equation}\label{eq: ht}
\Big(\mathcal{T}^m_{\lambda}(z)\Big)_j = z_j \times \mathbf{1}\left( |z_j|\geq \lambda \right), \quad \text{for every } z\in\mathbb R^m \text{ and } j \in [m].
\end{equation}
In Algorithm \ref{alg1}, the thresholding parameters in operator $\mathcal{T}^m_{\lambda}$ are dynamically updated at each iteration.
Specifically, the thresholds $\lambda_{\beta,0}$ and $\lambda_{\theta,0}$ are initialized to sufficiently large values.
These thresholds are then iteratively decreased and used at each iteration for hard-thresholding operations, until they reach their respective universal statistical levels $\lambda_{\beta,\infty}$ and $\lambda_{\theta,\infty}$ (see Theorem \ref{th1: Initial estimation} for their specific forms).
The following scheme guarantees that the sequences $\{ \lambda_{\beta,t} \}_{t \ge 0}$ and $\{ \lambda_{\theta,t} \}_{t \ge 0}$ are monotonically non-increasing:
\begin{equation}\label{alg1:step 2}
\textbf{Step 1.2: }\quad
    \begin{aligned}
        \lambda_{\beta,t+1} &\gets (\kappa \times\lambda_{\beta,t}) \vee\lambda_{\beta,\infty},\\
        \lambda_{\theta,t+1} &\gets (\kappa \times\lambda_{\theta,t}) \vee\lambda_{\theta,\infty}.
    \end{aligned}
\end{equation}
Here, $\kappa \in (0,1)$ controls the decay rate, typically chosen as $0.9$ in practice.
This process not only provides an explicit stopping time for the procedure, but also enables sparsity control in each iteration.

\textbf{Hard thresholding operation:} 
In this step, we apply the hard-thresholding operator \eqref{eq: ht}, together with the updated thresholds \eqref{alg1:step 2}, to the raw updates in \eqref{alg1:step 1}, ensuring a dynamic regularization at each iteration: 
\begin{equation}\label{alg1:step 3}
\textbf{Step 1.3: }\quad
    \begin{aligned}
        \beta^{t+1} &\gets \mathcal{T}^p_{\lambda_{\beta,t+1}}(H^{t+1}_{\beta}),\\
        \theta^{t+1} &\gets \mathcal{T}^n_{\lambda_{\theta,t+1}}(H^{t+1}_{\theta}).
    \end{aligned}
\end{equation}

\begin{algorithm}[t!]
\setstretch{1.5}
\SetAlgoLined
\KwData{$\beta^0 = \mathbf 0_p,\ \theta^0 = \mathbf 0_n,\ Y,\ X,\ \lambda_{\beta,0},\ \lambda_{\theta,0},\ \lambda_{\beta,\infty},\ \lambda_{\theta,\infty},\ \kappa,\ t = 0$}
\KwResult{$\widehat{\beta},\ \widehat{\theta}$}
\While{$t < \max\left\{\log_{1/\kappa}(\lambda_{\beta,0}/\lambda_{\beta,\infty}),\log_{1/\kappa}(\lambda_{\theta,0}/\lambda_{\theta,\infty})\right\}$}{
    \textbf{Step 1.1:} Update $H^{t+1}_{\beta}$ and $H^{t+1}_{\theta}$ using \eqref{alg1:step 1}\;
    \textbf{Step 1.2:} Update $\lambda_{\beta,t+1}$ and $\lambda_{\theta,t+1}$ using \eqref{alg1:step 2}\;
    \textbf{Step 1.3:} Update $\beta^{t+1}$ and $\theta^{t+1}$ using \eqref{alg1:step 3}\;
    $t \gets t + 1$\;
}
    $\widehat{\beta}\gets\beta^{t}, \quad \widehat{\theta}\gets\theta^{t}$ 
\caption{The first stage of the AC-IHT algorithm}
\label{alg1}
\end{algorithm}

Steps 1.2 and 1.3 are inspired by the strategy of \citet{NM20}, using gradually decreasing hard-threshold levels to limit variable inclusion in each iteration.
It also aligns with the motivation of the LARS algorithm \citep{efron04lLARS}.
This approach ensures computational efficiency and sparsity in the outputs.
A key novelty of our Algorithm \ref{alg1} is the separate handling of $\beta^t$ and $\theta^t$ updates, which allows us to derive a delicate sparse pattern and a {\color{black} minimax near-optimal} $\ell_2$-error bound for $\widehat\beta$, as shown in Theorem \ref{th1: Initial estimation}. 
In contrast, choosing common threshold levels $\lambda_{\beta,\infty} \asymp \lambda_{\theta, \infty} \asymp \sigma \{\log(p+n)/n\}^{1/2}$ may not deliver comparably precise estimates.

\subsection{The second stage: fixed thresholding
iteration}\label{subsection:Section stage}

While the first-stage Algorithm \ref{alg1} delivers a {\color{black}minimax near-optimal} initial estimate $\widehat{\beta}$ (shown in Theorem \ref{th1: Initial estimation}), it tends to omit some true support variables and falls short in estimation accuracy in practice, even when the support signals of $\beta^*$ are relatively strong.
Therefore, a 
refinement step is required: Starting from $\widehat{\beta}$ and $\widehat{\theta}$, we execute successive iterations, as detailed in Algorithm \ref{alg2}.
The second-stage Algorithm \ref{alg2} differs from Algorithm \ref{alg1} in two ways:
First, it initializes at $\tilde{\beta}^0 = \widehat{\beta}$ and $\tilde{\theta}^0= \widehat{\theta}$, the estimators produced by the first-stage Algorithm \ref{alg1}.
Second, instead of updating the threshold parameters in each iteration, in the second stage, we use two \textbf{fixed} values $\lambda_{\beta}$ and $\lambda_{\theta}$ (see Theorem \ref{th2: Signal adaptive estimation} for their specific forms).
Each iteration in the second stage consists of two steps: a {\bf gradient update} (identical to \textbf{Step 1.1} \eqref{alg1:step 1}), and a {\bf hard thresholding operation} using the fixed thresholds:
\begin{equation}\label{alg2:step2}
\textbf{Step 2.2:}\quad 
    \begin{aligned}
        \tilde{\beta}^{t+1} &\gets \mathcal{T}^p_{\lambda_{\beta}}(H^{t+1}_{\beta}),\\
        \tilde{\theta}^{t+1} &\gets \mathcal{T}^n_{\lambda_{\theta}}(H^{t+1}_{\theta}).
    \end{aligned}
\end{equation}

The complete second-stage algorithm is presented in Algorithm \ref{alg2}, and our two-stage AC-IHT algorithm combines Algorithm \ref{alg1} (initial estimation) and Algorithm \ref{alg2} (refinement):  
{\color{black} Algorithm~\ref{alg1} iteratively updates the estimator while decreasing the thresholding levels $\lambda_{\beta,t}$ and $\lambda_{\theta,t}$ until they reach their limiting values $\lambda_{\beta,\infty}$ and $\lambda_{\theta,\infty}$. Algorithm~\ref{alg2} then continues the iteration with fixed thresholding levels, yielding a refined final output. In this sense, the two algorithms together form a two-step ``debiased'' procedure, which does not require data splitting.
}

\begin{algorithm}[t!]
\setstretch{1.5}
\SetAlgoLined
\KwData{$\tilde{\beta}^0 = \widehat{\beta},\ \tilde{\theta}^0 = \widehat{\theta},\ Y,\ X,\ \lambda_{\beta},\ \lambda_{\theta},\ t = 0$}
\KwResult{$\tilde{\beta},\ \tilde{\theta}$}
\While{$t \le  C \log n $}{
    \textbf{Step 2.1:} Update $H^{t+1}_{\beta}$ and $H^{t+1}_{\theta}$ using \eqref{alg1:step 1}\;
    \textbf{Step 2.2:} Update $\tilde{\beta}^{t+1}$ and $\tilde{\theta}^{t+1}$ using \eqref{alg2:step2}\;
    $t \gets t + 1$\;
}
    $\tilde{\beta}\gets\tilde{\beta}^{t}, \quad \tilde{\theta}\gets\tilde{\theta}^{t}$
\caption{The second stage of the AC-IHT algorithm}
\label{alg2}
\end{algorithm}

\begin{remark}[Practical Role of Algorithm~\ref{alg1}]
The second-stage Algorithm \ref{alg2} performs a refined bias correction on initial estimates $\widehat \beta$.
Moreover, any initial estimator satisfying the guarantees of Theorem \ref{th1: Initial estimation}, such as the Lasso estimator \citep{dalalyan19nips} or the square-root Slope estimator \citep{Ndaoud24slope}, can be used as input to Algorithm \ref{alg2}.
This demonstrates the generality of the proposed IHT framework. 
{\color{black}
However, Algorithm~\ref{alg1} still has its practical advantages: It provides a computationally efficient initial estimation, while the refinement of estimation accuracy is handled by Algorithm~\ref{alg2}. 
Moreover, as shown in Theorem \ref{th2: Signal adaptive estimation}, the limiting thresholds $\lambda_{\beta,\infty}$ and $\lambda_{\theta,\infty}$ used in Algorithm~\ref{alg1} can be directly used in Algorithm~\ref{alg2}, avoiding additional tuning and reducing computational cost.
}
\end{remark}

\section{Theoretical Guarantees}\label{section:Theoretical guarantee}
\setcounter{equation}{0}

In this section, we study the statistical properties of the two-stage AC-IHT algorithm.
{\color{black} Define $o :=\| \theta^* \|_0\vee 1$, and assume $\| \beta^*\|_0 =s \ge 1$ without loss of generality. 
Recall that the noise $\xi$ in \eqref{adv} is assumed to be $\sigma$-subGaussian.}
We further impose two key assumptions.

{\color{black}
\begin{assumption}[Sub-Gaussian design]\label{assumption: RIP}
The design matrix $X\in\mathbb{R}^{n\times p}$ is row-wise independent and sub-Gaussian: each row $X_{i, \cdot} \overset{d}{=} Z_{i} \Sigma^{1/2} $, where $Z_{i } = (Z_{i1}, \cdots, Z_{ip}) \in \mathbb R^{1 \times p}$ and each $Z_{ij}$ is i.i.d. centered 1-sub-Gaussian random variable such that $\mathbf E ( Z_{i}^\top Z_{i} ) = I_p$.
The population covariance $\Sigma \in \mathbb R^{p \times p}$ satisfies
$$
M^{-1} \le  \Lambda_{\min}(\Sigma ) \le \Lambda_{\max}(\Sigma ) \le M,
$$
where $M>1$ is a universal constant.
\end{assumption}
}

\begin{assumption}[Sample size]\label{assumption: sample}
The sample size $n$ satisfies
$
\max \left( s \log p ,~ o \log n \right) \lesssim n.
$
\end{assumption}

{ \color{black}
Assumption \ref{assumption: RIP} controls the correlation among covariates and is commonly used in the literature on iterative algorithms \citep{Fan2023implicit, Han2025adaptivedebiased}.
These assumptions yield the following proposition, which underpins the theoretical guarantees of our iterative procedure.

\begin{proposition}[Restricted isometry and incoherence]\label{prop: rii}
For any fixed constant \(C_1>0\), assume Assumption~\ref{assumption: RIP} holds and Assumption~\ref{assumption: sample} holds in the specific form
$
n\ge 30M^2C_1C_\Sigma \max(s\log p,o\log n),
$
where \(C_\Sigma>0\) is a constant depending only on \(\|\Sigma\|_2\).
Then the following properties hold with probability greater than \(1-4\exp(-2C_1s\log p)\):
\begin{enumerate}
\item \textbf{(Restricted isometry)} For every index set $S \subset [p]$ with $|S| \le C_1 s$, the sample covariance matrix satisfies:
\begin{equation}\label{eq: rii}
 \frac{1}{2M} \|u\|_2^2 ~\le ~ u^\top \left( \frac{ X^\top X}{n} \right)_{S,S} u ~\le ~2M\|u\|_2^2, \text{  for every } u \in \mathbb R^{|S|}.
\end{equation}
\item \textbf{(Restricted incoherence)} There exists a constant $C_M>0$ depending only on $M$ such that:
\begin{equation}\label{eq: inco}
\sup_{S\subset[p]:~ |S| \le {\color{black}C_1} s} ~
\sup_{O\subset[n]:~ |O| \le {\color{black}C_1}o}
\left\| X_{O,S} \right\|_2 \le {\color{black}\sqrt{C_1}} C_M \sqrt{s \log p + o \log n}.
\end{equation}
\end{enumerate}
\end{proposition}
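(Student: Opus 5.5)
Both parts will come from a single device: fix the index sets, use sub-Gaussian concentration of quadratic forms together with an $\epsilon$-net on the relevant unit spheres, and then take a union bound over all supports of the allowed size. Throughout write $X = Z\Sigma^{1/2}$, where $Z$ has i.i.d. centered 1-sub-Gaussian entries.

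\textbf{Restricted isometry.} Fix $S$ with $|S|\le C_1 s$ and a unit vector $u$ supported on $S$. Then $Xu = Z(\Sigma^{1/2}u)$ has i.i.d. sub-Gaussian entries with variance $u^\top\Sigma u\in[M^{-1},M]$. Bernstein's inequality applied to $\|Xu\|_2^2/n$ gives
\[
\bigl|\|Xu\|_2^2/n - u^\top\Sigma u\bigr| \le \delta \quad\text{with probability at least } 1-2\exp(-c\,n\min(\delta^2,\delta)/\|\Sigma\|_2^2).
\]
Choose $\delta = 1/(4M)$ and take a $1/4$-net of the sphere in $\mathbb R^{|S|}$, which has at most $9^{C_1 s}$ points. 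Then union bound over the at most $\binom{p}{C_1 s}\le \exp(C_1 s\log(ep))$ supports. The total exponent is $C_1 s\log(9ep) - c n/(M^2\|\Sigma\|_2^2)$. The condition $n\ge 30M^2C_1C_\Sigma s\log p$, with $C_\Sigma$ absorbing $\|\Sigma\|_2^2/c$, makes this at most $-2C_1 s\log p$. The standard net argument then upgrades the bound to all $u$, with deviation $\le 2\delta = 1/(2M)$. This yields eigenvalues in $[M^{-1}-1/(2M),\,M+1/(2M)]\subset[1/(2M),2M]$.

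\textbf{Restricted incoherence.} For fixed $S,O$ we have
\[
\|X_{O,S}\|_2=\sup_{u,v} v^\top X_{O,S}u ,
\]
with $u,v$ unit vectors in $\mathbb R^{|S|}$ and $\mathbb R^{|O|}$. For fixed $u,v$, the quantity $\sum_{i\in O} v_i (X_{i,S}u)$ is a sum of independent sub-Gaussians. Its sub-Gaussian norm is $\lesssim \sqrt{M}$, since the rows are independent and $\|v\|_2=1$. Use $1/4$-nets on both spheres, of cardinality $9^{C_1 s}\cdot 9^{C_1 o}$, which gives $\|X_{O,S}\|_2\le 2\max_{\text{net}}$. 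Union bound over the at most $\exp(C_1 s\log(ep)+C_1 o\log(en))$ pairs $(S,O)$. A tail level
\[
t = \sqrt{C_1}C_M\sqrt{s\log p+o\log n}
\]
then has failure probability $\exp\bigl(C(C_1 s\log p + C_1 o\log n) - t^2/(CM)\bigr)\le \exp(-2C_1 s\log p)$, once $C_M$ is large enough in terms of $M$. The supremum over smaller sets is dominated by the maximal-size case, because the operator norm is monotone under taking submatrices.

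Adding the two failure probabilities, each at most $2\exp(-2C_1 s\log p)$, gives the stated $1-4\exp(-2C_1 s\log p)$. The main obstacle is the bookkeeping of constants in part 1. The sample-size constant $30M^2C_1C_\Sigma$ has to simultaneously beat the combinatorial entropy $C_1 s\log(ep)$ and leave the surplus $2C_1 s\log p$, with $\delta\propto 1/M$ producing the $M^2$ factor. A secondary point is that part 2 needs no sample-size condition at all: the bound there scales with the entropy itself, not with $n$.
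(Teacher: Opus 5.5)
Your proof is correct. It shares the paper's skeleton: concentration for fixed supports, then a union bound over the $\binom{p}{C_1 s}$ (resp.\ $\binom{p}{C_1 s}\binom{n}{C_1 o}$) index sets of maximal size. The concentration step itself is done differently.

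\textbf{Restricted isometry.} The paper quotes the covariance-estimation bound for sub-Gaussian rows (Remark 5.40 of Vershynin's notes), $\|\frac1n X_{\cdot S}^\top X_{\cdot S}-\Sigma_{SS}\|_2\le\max(\iota,\iota^2)$ at deviation level $3C_1 s\log p$, and finishes with Weyl's inequality. Your Bernstein-plus-$1/4$-net argument is precisely the proof of that black box. So here you are only unpacking it, which has the benefit of showing where the $M^2$ factor and the $\|\Sigma\|_2$-dependence of $C_\Sigma$ come from.

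\textbf{Restricted incoherence.} Here the routes genuinely differ. The paper applies the same covariance bound to the $C_1 o$-row submatrix, $\|\frac{1}{C_1 o}X_{O,S}^\top X_{O,S}-\Sigma_{SS}\|_2\le\max(\tau,\tau^2)$ at deviation level $3C_1(s\log p+o\log n)$. It then takes the square root of $C_1 o(\|\Sigma_{SS}\|_2+\tau+\tau^2)$. You instead bound the bilinear form $v^\top X_{O,S}u$ directly, using nets on both spheres.

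The paper's version economizes by reusing one lemma for both parts. Yours is more direct: it does not route an operator-norm bound through a Gram-matrix deviation inequality for a short $C_1 o\times C_1 s$ block. It also makes plain that the incoherence bound is pure entropy counting, with $C_M^2$ a constant multiple of $M$ and no sample-size requirement.

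\textbf{Minor point.} The Bernstein exponent is really $cn\min(\delta^2/K^2,\delta/K)$ with $K\asymp\|\Sigma\|_2$. It reduces to your $cn\delta^2/K^2$ only because $\delta=1/(4M)\le 1/M\le\|\Sigma\|_2$, and that step is worth stating explicitly.
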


This proposition characterizes both the restricted isometry property of the design matrix $X$ and the restricted correlation between columns of $X$ and the identity matrix $I_n$ (since $X_{O,S}^\top = X_{ \cdot,S}^\top~ I_{ \cdot,O}$). The latter controls how adversarial contamination can distort the estimation of $\beta^*$, and such an incoherence-type condition is standard in outlier analyses \citep{dalalyan19nips, Ndaoud24slope}. 
If \eqref{eq: inco} fails, for example, in the case $n=p$ and $X = \sqrt{n}\,I_n$, then model~\eqref{adv} reduces to $Y = \sqrt{n}(\beta^* + \theta^*) + \xi$, making it impossible to estimate $\beta^*$ alone consistently.

\begin{remark}[Sub-Gaussian limitation]\label{remark: subgaussian}
Proposition \ref{prop: rii} relies on a sub-Gaussian design and a spectrally bounded covariance matrix $\Sigma$, and thus cannot be directly generalized to heavy-tailed designs addressed in some robust statistics literature \citep{Sun02012020, PolingLoh24jasa}. 
We speculate that such extensions may require some technical tools like truncation (Section 4 in \cite{Sun02012020}), and leave this for future work. 
\end{remark}
}

\subsection{Property of the first stage estimation}
We begin with the statistical guarantee of the first-stage Algorithm \ref{alg1}.

{\color{black}
\begin{theorem}[Initial Estimation]\label{th1: Initial estimation}
Assume that Assumptions \ref{assumption: RIP} and \ref{assumption: sample} hold.
For tuning parameters in Algorithm~\ref{alg1}, suppose that the learning rate $\eta \in \left[\frac{2M}{4M^2 + 1}, \frac{4M}{4M^2 + 1} \right]$, the decay rate $\kappa \in \left(\frac{4M^2}{4M^2 +1}, 1 \right)$, and the initial thresholds satisfy $\sqrt{s}\lambda_{\beta,0} > \| \beta^* \|_2$ and $\sqrt{o} \lambda_{\theta,0} > \| \theta^* \|_2$.
Let
\begin{equation}\nonumber
\begin{aligned}
\lambda_{\beta, \infty} &=C_{\beta,1} \sigma \sqrt{\frac{M \log p}{n}} + C_{\beta,2} \sigma\sqrt{\frac{s\log p+ o\log n}{ns}} \sqrt{\frac{o \log n}{n}}, \\
\lambda_{\theta, \infty} &= \frac{3C_{\beta,1}\sigma}{4} \sqrt{\frac{\log n}{n}} + \frac{4C_{\beta,2}\sigma \sqrt M}{3} \sqrt{\frac{s\log p+ o\log n}{no}} \sqrt{\frac{s \log p}{n}},
\end{aligned}
\end{equation}
where $C_{\beta,1},~C_{\beta,2} $ are two constants depending on $M, \kappa$, and $\eta$.
Let $(\widehat{\beta}, \widehat{\theta})$ denote the output of Algorithm~\ref{alg1}.
Then with probability at least $1 - O(p^{-2}+n^{-3})$, both estimators are $\ell_0$-sparse, i.e.,
$
\|\widehat{\beta}\|_0 \lesssim s,~\|\widehat{\theta}\|_0 \lesssim o,
$
and satisfy
\begin{equation}\label{eq: upper1}
\begin{aligned}
\| \widehat{\beta} - \beta^* \|_2^2 \lesssim &\sigma^2 \left( \frac{s \log p}{n} + \frac{o^2 \log^{2} n}{n^2} \right),\\
\| \widehat{\theta} - \theta^* \|_2^2 \lesssim& \sigma^2 \left( \frac{o\log n}{n} + \frac{s^2 \log^{2} p}{n^2} \right). 
\end{aligned}
\end{equation}
\end{theorem}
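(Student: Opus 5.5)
The plan is an induction over the iterations of Algorithm~\ref{alg1}, in the style of \citet{NM20}. We work throughout on the intersection of the event in Proposition~\ref{prop: rii} (with $C_1$ a large constant) and a noise event $\mathcal E$ on which
\[
\|X^\top\xi/n\|_\infty \le c\sigma\sqrt{\log p/n}, \qquad \|\xi/\sqrt n\|_\infty\le c\sigma\sqrt{\log n/n}.
\]
On the same event we also control the restricted quantities $\sup_{|S|\le C_1 s}\|(X^\top\xi)_S\|_2/n$ and $\sup_{|O|\le C_1 o}\|\xi_O\|_2/\sqrt n$. The event $\mathcal E$ holds with probability $1-O(p^{-2}+n^{-3})$ by sub-Gaussian maxima, conditioning on $X$ and using that the column norms are $\lesssim\sqrt n$.

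Write $\Delta^t_\beta=\beta^t-\beta^*$ and $\Delta^t_\theta=\theta^t-\theta^*$. Since $Y-X\beta^t-\sqrt n\theta^t=-X\Delta^t_\beta-\sqrt n\Delta^t_\theta+\xi$, the updates read
\begin{align*}
H^{t+1}_\beta-\beta^* &= \Bigl(I-\tfrac{\eta}{n}X^\top X\Bigr)\Delta^t_\beta-\tfrac{\eta}{\sqrt n}X^\top\Delta^t_\theta+\tfrac{\eta}{n}X^\top\xi,\\
H^{t+1}_\theta-\theta^* &= (1-\eta)\Delta^t_\theta-\tfrac{\eta}{\sqrt n}X\Delta^t_\beta+\tfrac{\eta}{\sqrt n}\xi.
\end{align*}
The induction hypothesis at step $t$ has three parts:
\begin{enumerate}
\item[(i)] $\|\beta^t\|_0\le C s$ and $\|\theta^t\|_0\le Co$, with false-positive sets $\operatorname{supp}(\beta^t)\setminus S^*$ of size $\le s$ and $\operatorname{supp}(\theta^t)\setminus O^*$ of size $\le o$;
\item[(ii)] $\|\Delta^t_\beta\|_2\le A\sqrt s\,\lambda_{\beta,t}$;
\item[(iii)] $\|\Delta^t_\theta\|_2\le A\sqrt o\,\lambda_{\theta,t}$.
\end{enumerate}
At $t=0$ this holds by the assumption on $\lambda_{\beta,0},\lambda_{\theta,0}$.

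For the inductive step we first handle sparsity. A coordinate $j\notin S^*$ survives thresholding only if $|(H^{t+1}_\beta-\beta^*)_j|\ge\lambda_{\beta,t+1}$. Let $F$ be the set of such coordinates and suppose $|F|>s$; take any $F'\subset F$ with $|F'|=s$. Then
\[
\sqrt s\lambda_{\beta,t+1}\le \|(H^{t+1}_\beta-\beta^*)_{F'}\|_2 .
\]
We bound the right side by three terms. The first is $\|((I-\eta X^\top X/n)\Delta_\beta^t)_{F'}\|_2\le\rho\|\Delta^t_\beta\|_2$, using the restricted isometry \eqref{eq: rii} on sets of size $O(s)$; the admissible range of $\eta$ gives $\rho\le (4M^2-1)/(4M^2+1)<\kappa$. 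The second is the cross term $\tfrac{\eta}{\sqrt n}\|X_{O,F'}^\top\Delta_{\theta}^t\|_2$ with $O=\operatorname{supp}(\Delta^t_\theta)$, bounded by $\eta\sqrt{(s\log p+o\log n)/n}\,\|\Delta^t_\theta\|_2$ via \eqref{eq: inco}. The third is the noise term, at most $c\sigma\sqrt{s\log p/n}$. Since $\lambda_{\beta,t+1}\ge\lambda_{\beta,\infty}$, the constants $C_{\beta,1}$ and $C_{\beta,2}$ in $\lambda_{\beta,\infty}$ are chosen exactly so that the noise and cross terms are each a small fraction of $\sqrt s\lambda_{\beta,t+1}$. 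For the cross term this uses $\sqrt o\lambda_{\theta,t}\sqrt{(s\log p+o\log n)/n}\lesssim\sqrt s\lambda_{\beta,t}$, which is precisely how the second summands of $\lambda_{\beta,\infty}$ and $\lambda_{\theta,\infty}$ are balanced against each other. This yields a contradiction, and the $\theta$-block is handled symmetrically with $X_{O,S}$ and the $\|\xi_O\|$ bound.

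Next the error bound. Let $T=S^*\cup\operatorname{supp}(\beta^{t+1})$. Hard thresholding incurs an error of at most $\lambda_{\beta,t+1}$ per coordinate in $T$, hence at most $\sqrt{|T|}\lambda_{\beta,t+1}$ in $\ell_2$. Restricting the same three-term decomposition to $T$ then gives
\[
\|\Delta^{t+1}_\beta\|_2\le\rho A\sqrt s\lambda_{\beta,t}+(\text{small})\cdot A\sqrt s\lambda_{\beta,t}+C'\sqrt s\lambda_{\beta,t+1}.
\]
Because $\lambda_{\beta,t+1}\ge\kappa\lambda_{\beta,t}$ and $\rho<\kappa$, choosing $A$ large closes the induction, provided the two blocks decay at the same ratio $\kappa$. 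They do: both thresholds shrink by $\kappa$ until they saturate, and after saturation the recursion stays bounded at the fixed levels.

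The conclusion follows by evaluating at the final iterate. The loop terminates when both thresholds equal their limits, so
\[
\|\widehat\beta-\beta^*\|_2^2\lesssim s\lambda_{\beta,\infty}^2\asymp\sigma^2\Bigl(\frac{s\log p}{n}+\frac{(s\log p+o\log n)\,o\log n}{n^2}\Bigr).
\]
The cross term satisfies $s\log p\cdot o\log n/n^2\le s\log p/n+o^2\log^2n/n^2$ by AM--GM, which gives \eqref{eq: upper1}; the bound for $\widehat\theta$ is analogous. The main obstacle is the coupled contraction. The cross terms of the two blocks must be absorbed simultaneously, and this requires the mixed-scale thresholds to be balanced so that both $\sqrt{(s\log p+o\log n)/n}\cdot\sqrt o\lambda_{\theta}\lesssim\sqrt s\lambda_\beta$ and its mirror image hold, with constant small enough while the learning-rate window still keeps $\rho<\kappa$. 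If the constants do not close directly, I would run the induction on the weighted potential $\|\Delta_\beta\|_2/\sqrt s\lambda_\beta+\|\Delta_\theta\|_2/\sqrt o\lambda_\theta$ instead.
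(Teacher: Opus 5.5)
Your induction does not close as set up, because the false-positive budget in hypothesis (i) is too small: at most $s$ spurious coordinates for $\beta$ and $o$ for $\theta$. In the sparsity contradiction, the dominant term on the right is $\rho\|\Delta^t_\beta\|_2\le(\rho A/\kappa)\sqrt s\,\lambda_{\beta,t+1}$. You only argue that the cross and noise terms are small fractions of $\sqrt s\,\lambda_{\beta,t+1}$, but the contradiction also requires $\rho A/\kappa<1$. Your error step, on the other hand, reads $\|\Delta^{t+1}_\beta\|_2\le(\rho A/\kappa+\sqrt2+\text{small})\sqrt s\,\lambda_{\beta,t+1}$ (since $|T|\le 2s$), so it forces $A\ge\sqrt2/(1-\rho/\kappa)$. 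Both conditions hold only if $\rho/\kappa<\sqrt2-1$; even with the sharper bias bound described below, you would still need $\rho/\kappa<1/2$. The contraction you can certify on the restricted-isometry event is $\max(1-\eta/(2M),\,2M\eta-1)=1-\eta/(2M)\in[(4M^2-1)/(4M^2+1),\,4M^2/(4M^2+1)]$ over the admissible $\eta$-window; your value $(4M^2-1)/(4M^2+1)$ is only its best case. This exceeds $3/5$ for $M>1$, while $\kappa<1$. So no choice of $A$ works anywhere in the theorem's parameter range. The weighted-potential fallback does not help either, since the conflict lies inside a single block.

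The missing idea, which is how the paper closes the argument, has two parts.

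First, charge the hard-thresholding bias only on $S^*$. An index outside $S^*$ is either zeroed (no error, as $\beta^*_j=0$) or kept unchanged, so the bias costs at most $\sqrt s\,\lambda_{\beta,t+1}$, not $\sqrt{|T|}\,\lambda_{\beta,t+1}$.

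Second, allow $Bs$ and $Bo$ false positives, with $B$ a constant depending on $\kappa$ and $\delta:=4M^2/(4M^2+1)$. The contradiction then compares $\rho\|\Delta^t_\beta\|_2$ with $\sqrt{Bs}\,\lambda_{\beta,t+1}$, which grows with $B$. The paper takes $\sqrt B=(\kappa+\delta)/(\kappa-\delta)$ and $A=\sqrt B+1=2\kappa/(\kappa-\delta)$, for which $\frac{\delta}{\kappa}(\sqrt B+1)=\sqrt B-1$ and $1+\frac{\delta}{\kappa}(\sqrt B+1)=\sqrt B$. Both steps then leave exactly one unit of $\sqrt s\,\lambda_{\beta,t+1}$ for the cross term and the noise term. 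The noise term is now $\eta\sqrt{Bs}\,\|X^\top\xi/n\|_\infty$ and is absorbed through $C_{\beta,1}$. Restricted isometry is needed on sets of size $(2B+1)s$, and incoherence on $(B+1)s\times(B+1)o$ blocks.

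A secondary point concerns the initial thresholds. Your balance $\sqrt o\,\lambda_{\theta,t}\sqrt{(s\log p+o\log n)/n}\lesssim\sqrt s\,\lambda_{\beta,t}$, and its mirror, must already hold at $t=0$. This is because the first $\beta$-update contains $\eta X^\top\theta^*/\sqrt n$, since $\theta^0=0$. The stated conditions on $\lambda_{\beta,0},\lambda_{\theta,0}$ do not control their ratio. The paper's proof imposes $\lambda_{\beta,0}/\lambda_{\theta,0}=\lambda_{\beta,\infty}/\lambda_{\theta,\infty}$, so that the threshold pair moves along a ray into the feasible region of the resulting linear system. You need a condition of this kind too.
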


The $\ell_2$ error of $\widehat{\beta}$ consists of two terms: (i) the estimation rate of an $s$-sparse vector $ (\sigma^2 s\log p) / n$, and (ii) the contamination proportion $ (\sigma^2 o^2 \log^2 n) / n^2$.
Furthermore, by interpolating the $\ell_0$ sparsity and $\ell_2$ rate, we conclude that
$$
\| \widehat{\beta} - \beta^* \|_q^q \lesssim \sigma^q \left\{ s \left(\frac{ \log p}{n}\right)^{q/2}   + s^{1-q/2} \left( \frac{ o \log n}{n } \right)^q 
 \right\}, \text{ for every } q \in [1,2],
$$
demonstrating that $\widehat{\beta}$ is minimax near-optimal in terms of the $\ell_q$ error for all $q \in [1, 2]$ (see Theorem \ref{th4: lower estimation} for the lower bound).

\begin{remark}[Two phases of estimation accuracy]
The bound for $\widehat \beta$ in \eqref{eq: upper1} can be rewritten as
$$
\| \widehat{\beta} - \beta^* \|_2^2 \lesssim 
\sigma^2  \max\left( \frac{ s\log p}{n},~ \frac{ o^2\log^2 n}{n^2} \right) 
=
\begin{cases}
\sigma^2 \frac{ s\log p}{n},  & \text{ if }  o \le \frac{\sqrt{ns \log p}}{\log n},\\
 \sigma^2 \frac{ o^2\log^2 n}{n^2},  & \text{ if } \frac{\sqrt{ns \log p}}{\log n} < o \lesssim \frac{n}{\log n}.
\end{cases}
$$
Thus, there are two distinct regimes: 
When the number of contaminated samples $o$ is relatively small, the estimation error attains the uncontaminated near-optimal rate $(\sigma^2 s \log p )/ n$ \citep{Raskutti2011minimax}.
As $o$ increases, the error of $\widehat\beta$ becomes dominated by the squared contamination proportion (up to a logarithmic term).
\end{remark}

\begin{remark}[Symmetric rate and joint bound]
Within the bounds of \eqref{eq: upper1}, the terms $o\log n$ and $s\log p$ play symmetric roles: swapping them in the bound of $ \widehat{\beta}$ yields the corresponding bound of $ \widehat{\theta}$.
This symmetry reflects a duality between $\beta^*$ and $\theta^*$: If one views the model \eqref{adv} as a sparse linear regression, then $\sqrt{n}\theta^*$ constitutes an $\ell_0$-sparse contamination of the observations; conversely, if one views \eqref{adv} as a (sub-Gaussian) location model, the term $X\beta^*$ acts as contamination, representing the image of an $\ell_0$-sparse vector $\beta^*$ into the observation space via $X$.  

Summing the individual bounds in \eqref{eq: upper1} gives the joint error control: 
\begin{equation}\label{eq: joint upper}
\|\widehat \beta - \beta^* \|_2^2 + \|\widehat \theta - \theta^* \|_2^2 \lesssim \sigma^2\left(\frac{s \log p + o \log n}{n}  \right) ,
\end{equation}
which is minimax near-optimal (see Theorem 6 in \citet{She22outlier} for the joint lower bound ).
A comparison of \eqref{eq: upper1} and \eqref{eq: joint upper} shows that focusing solely on $\beta^*$ delivers a more refined guarantee. 
\end{remark}
}

\subsection{Property of the two-stage estimation}

This subsection establishes signal adaptivity and the strong oracle property of the final estimator $\tilde{\beta}$, which is obtained from the second-stage Algorithm \ref{alg2}. 
We first specify the signal condition imposed on $\beta^*$.
\begin{assumption}[Signal condition for $\beta^*$]\label{assumption: betamin}
There exists a constant $C_{\beta}>0$ such that
{\color{black}
\begin{equation}\label{eq: beta min}
    \min_{i \in S^*} |\beta_i^*| \ge C_\beta   \sigma \left( \sqrt{\frac{\log p}{n}} +  \frac{o \log n}{n \sqrt s} \right), 
\end{equation}
}
\end{assumption}
This signal condition guarantees that all nonzero components of $\beta^*$ are well separated from zero, thereby facilitating sharper estimation rates.

\begin{theorem}[Signal-adaptive estimation]\label{th2: Signal adaptive estimation}
 Assume that Assumptions \ref{assumption: RIP} and \ref{assumption: sample} hold. 
{\color{black} We set the tuning parameters in Algorithm \ref{alg2} as $\eta \in \left[\frac{2M}{4M^2 + 1}, \frac{4M}{4M^2 + 1} \right]$, $\lambda_{\beta} = \lambda_{\beta,\infty}$, and $\lambda_{\theta} = \lambda_{\theta,\infty}$, where $\lambda_{\beta,\infty}, \lambda_{\theta,\infty}$ are specified in Theorem~\ref{th1: Initial estimation}.
Let $(\tilde{\beta}, \tilde{\theta})$ denote the output of Algorithm~\ref{alg2}.
Then, for any given $\varrho \in (0,1)$, with probability at least $1 -\varrho- O(p^{-2} + n^{-3})$, we have $\|\tilde{\beta} \|_0 \lesssim s,~\|\tilde{\theta} \|_0 \lesssim o$, 
and the estimation error is signal-adaptive:
\begin{equation}\label{eq: SA}
    \begin{aligned}
  \|\tilde{\beta} - \beta^* \|_2^2  \lesssim \begin{dcases}
  \sigma^2 \left( {\frac{s + \log(1/\varrho)}{n}} + \frac{(s \log p+o\log n)^2}{n^2}\right), & \text{ if Assumption \ref{assumption: betamin} holds,} \\ 
  \sigma^2 \left(  {\frac{s \log p}{n}} + \frac{o^2 \log^{2} n}{n^2}\right),  & \text{ otherwise.}
\end{dcases}
\end{aligned}
\end{equation}
}
\end{theorem}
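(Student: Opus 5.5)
We argue by induction over the iterations of Algorithm~\ref{alg2}, conditioning on the event $\mathcal E$ where Proposition~\ref{prop: rii} holds (with a large enough $C_1$) and where the noise terms are controlled. On $\mathcal E$ we take $\|n^{-1}X^\top\xi\|_\infty\lesssim\sigma\sqrt{\log p/n}$ and $\|\xi/\sqrt n\|_\infty\lesssim\sigma\sqrt{\log n/n}$. We also require, uniformly over supports $S$ with $|S|\lesssim s$, that $\sup_{|S|\le C s}\|(n^{-1}X^\top\xi)_S\|_2\lesssim\sigma\sqrt{s\log p/n}$. All of these hold with probability $1-O(p^{-2}+n^{-3})$. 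Theorem~\ref{th1: Initial estimation} supplies the base case: $\tilde\beta^0=\widehat\beta$ and $\tilde\theta^0=\widehat\theta$ are sparse and satisfy \eqref{eq: upper1}.

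\textbf{Induction step.} Write $\Delta_\beta^t=\tilde\beta^t-\beta^*$ and $\Delta_\theta^t=\tilde\theta^t-\theta^*$. The gradient step gives
\begin{align*}
H_\beta^{t+1}-\beta^* &= (I-\eta\widehat\Sigma)\Delta_\beta^t-\eta n^{-1/2}X^\top\Delta_\theta^t+\eta n^{-1}X^\top\xi,\\
H_\theta^{t+1}-\theta^* &= (1-\eta)\Delta_\theta^t-\eta n^{-1/2}X\Delta_\beta^t+\eta n^{-1/2}\xi.
\end{align*}
\emph{Sparsity.} We show $\|\tilde\beta^{t}\|_0\lesssim s$ and $\|\tilde\theta^t\|_0\lesssim o$ are preserved. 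A coordinate $j\notin S^*$ survives thresholding only if $|(H_\beta^{t+1})_j|\ge\lambda_\beta$. The noise part is at most $\lambda_\beta/4$, so the number of false survivors is at most $16\lambda_\beta^{-2}$ times the squared $\ell_2$ norm of the remaining terms. We bound the cross term $\|n^{-1/2}X_{\cdot,S}^\top\Delta_\theta\|_2$ by $n^{-1/2}\|X_{O,S}\|_2\|\Delta_\theta\|_2$ using the incoherence bound \eqref{eq: inco}. The second component of $\lambda_{\beta,\infty}$ is tailored exactly so that this contribution gives at most $O(s)$ false survivors; the argument for $\theta$ is symmetric.

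\emph{Contraction.} On the union of supports, the restricted isometry bound \eqref{eq: rii} together with the choice of $\eta$ makes $\|(I-\eta\widehat\Sigma)_{S,S}\|_2\le\kappa_0<1$. Hard thresholding costs at most a factor of the form $\|\mathcal T(H)-\beta^*\|_2\le\|(H-\beta^*)_{\tilde S}\|_2+\sqrt{|\tilde S\cup S^*|}\lambda_\beta$. Together with the incoherence cross terms this yields a coupled linear recursion for $(\|\Delta_\beta^t\|_2,\|\Delta_\theta^t\|_2)$. We then show its contraction matrix has spectral radius below one, so the iterates remain within the rates \eqref{eq: upper1} for all $t$. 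This already proves the ``otherwise'' branch of \eqref{eq: SA}.

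\textbf{Adaptive branch.} Under Assumption~\ref{assumption: betamin}, $\min|\beta^*_i|\gtrsim\lambda_\beta$, and we replace the crude bound $\sqrt{s}\lambda_\beta$ by a refined decomposition. Split $S^*$ into coordinates where $|(H^{t+1}_\beta-\beta^*)_j|<|\beta^*_j|-\lambda_\beta$, which are kept with no thresholding bias, and the rest. Each coordinate in the second group has error at least of order $\lambda_\beta$, so their number is at most $\lambda_\beta^{-2}\|H_\beta-\beta^*\|_2^2$, which is a small fraction of $s$. Likewise, false positives off $S^*$ are few. The error is then dominated by $\|(I-\eta\widehat\Sigma)\Delta_\beta\|$ plus the projection $\|(n^{-1}X^\top\xi)_{\hat S}\|_2$ on a nearly correct support. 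For the $S^*$ part we use the sharp bound $\|(n^{-1}X^\top\xi)_{S^*}\|_2\lesssim\sigma\sqrt{(s+\log(1/\varrho))/n}$, which holds with probability $1-\varrho$ by Hanson--Wright, since $S^*$ is fixed. The off-support contribution is of size (number of mistakes)$\times\lambda_\beta^2$. Combined with the cross-term $n^{-1}\|X_{O,S}\|_2^2\|\Delta_\theta\|_2^2$, this gives the second-order term $\big((s\log p+o\log n)/n\big)^2$. Iterating the contraction for $C\log n$ steps then drives the initial error below these levels.

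\textbf{Main obstacle.} The key difficulty is controlling the number of support mistakes as a function of the current error. Because the mistakes are data-dependent, we cannot use the fixed-support noise bound on them. We must instead show that their count times $\lambda_\beta^2$ is itself $O(\|\Delta^t\|_2^2)$ scaled by a small constant, plus second-order terms, in the spirit of \citet{NM20}. The $\theta$-coupling must also be handled so that it enters only quadratically. If this fails, we would fall back on a self-bounding argument: bound $\#\{\text{mistakes}\}\cdot\lambda_\beta^2\le c\|\Delta^t\|^2+O(\text{rate})$ with $c$ small, using the gap $\min|\beta^*_i|-\lambda_\beta\gtrsim\lambda_\beta$ from Assumption~\ref{assumption: betamin}.
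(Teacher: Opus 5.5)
Your architecture matches the paper's proof. You carry the crude sparsity/error invariant through the second stage. Under Assumption~\ref{assumption: betamin} you prove a one-step bound in which killed true coordinates and noise on false positives are self-bounded, and noise on the fixed set $S^*$ is handled by a quadratic-form tail bound. The $\theta$ cross term is incoherence times the crude bound on $\|\tilde\theta^t-\theta^*\|_2$, which is the source of $(s\log p+o\log n)^2/n^2$, and $C\log n$ iterations remove the initial error.

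The gap is in the self-bounding step you identify as the crux, and it matters because the contraction margin is thin. For the admissible $\eta$, the restricted contraction of $\Phi=\eta X^\top X/n-I_p$ is only $\delta=4M^2/(4M^2+1)$. Write $D:=\Phi(\beta^*-\tilde\beta^t)+\eta n^{-1/2}X^\top(\theta^*-\tilde\theta^t)$. Any overhead multiplying $\|D\|_2$ must then stay below $(1-\delta)/\delta=1/(4M^2)$.

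For false positives you bound the retained noise by $(\#\text{mistakes})\cdot\lambda_\beta^2$, and aim to make this at most $c\|\Delta^t\|_2^2$ with $c$ small via the beta-min gap. That target is unattainable. Beta-min says nothing about $i\notin S^*$, and a false positive only requires $|D_i|\gtrsim\lambda_\beta$. So $\#\mathrm{FP}\cdot\lambda_\beta^2$ can be as large as $\|D_{\mathrm{FP}}\|_2^2$, which gives a recursion factor near $2\delta>1$.

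What must be controlled is the noise actually retained, $\sum_{i\in\mathrm{FP}}(\eta\Xi_i)^2$ with $\Xi=n^{-1}X^\top\xi$. The lever is the threshold-to-noise ratio $r=\|\eta\Xi\|_\infty/\lambda_\beta$. On a false positive $|D_i|\ge(1-r)\lambda_\beta$, hence $|\eta\Xi_i|\le\frac{r}{1-r}|D_i|$ pointwise. The paper's constant $C_{\beta,1}=16\eta\kappa/(\kappa-\delta)$ gives $r\le\frac{\kappa-\delta}{4\kappa}$, and so $\sum_{\mathrm{FP}}(\eta\Xi_i)^2\le(\frac{\kappa-\delta}{3\kappa+\delta})^2\|D_{\mathrm{FP}}\|_2^2$. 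Your bound $r\le 1/4$ would only give the factor $1/3$, which is still too large.

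For killed true coordinates your mechanism is right. However, the beta-min constant must likewise be of order $1/(\kappa-\delta)$; the paper uses $(\frac{4\kappa}{\kappa-\delta}+\frac{\kappa-\delta}{4\kappa})\lambda_\beta$. This yields the same factor and the overall contraction $\frac{4\kappa\delta}{3\kappa+\delta}<1$.

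The same thin margin breaks your unconditional branch as written. Hard thresholding biases only the killed true coordinates, so the correct term is $\lambda_\beta\sqrt{|S^*\setminus\tilde S|}\le\sqrt s\,\lambda_\beta$, not $\sqrt{|\tilde S\cup S^*|}\,\lambda_\beta$. With your version, the false-survivor count and the error constant feed each other. The natural bookkeeping then closes only if $2\delta<1$, whereas here $\delta>4/5$.

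The paper does not redo this step. With $\lambda_\beta=\lambda_{\beta,\infty}$, Algorithm~\ref{alg2} is just Algorithm~\ref{alg1} continued after its thresholds have reached their floor. Hence the invariant from the proof of Theorem~\ref{th1: Initial estimation} persists for all $t$, with $B=((\kappa+\delta)/(\kappa-\delta))^2$:
$\|\tilde\beta^t_{(S^*)^c}\|_0<Bs$, $\|\tilde\theta^t_{(O^*)^c}\|_0<Bo$, $\|\tilde\beta^t-\beta^*\|_2\le(\sqrt B+1)\sqrt s\,\lambda_\beta$, and $\|\tilde\theta^t-\theta^*\|_2\le(\sqrt B+1)\sqrt o\,\lambda_\theta$.
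This gives the second case of \eqref{eq: SA} directly. It also supplies the uniform bound on $\|\tilde\theta^t-\theta^*\|_2$ that feeds the cross term in the first case.
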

Theorem \ref{th2: Signal adaptive estimation} establishes the signal adaptivity of the second-stage AC-IHT algorithm: 
If the signal condition in Assumption \ref{assumption: betamin} fails, $\tilde \beta$ achieves a near-optimal rate; if it holds, $\tilde \beta$ adaptively achieves a sharper estimation rate than the {\color{black} minimax near-optimal} rate.
This signal adaptivity is a key advantage of hard-thresholding-based methods and is generally difficult to achieve by convex procedures such as the Lasso \citep{bellec2018noise}. 
{\color{black}Furthermore, if Assumption \ref{assumption: betamin} holds and $n \gtrsim s \log^2 p$, from \eqref{eq: SA} we can get a sharper bound:
$$
\|\tilde{\beta} - \beta^*\|_2^2 \lesssim \sigma^2 \left( {\frac{s+\log(1/\varrho)}{n}} + \frac{o^2 \log^{2} n}{n^2}\right),
$$}
which matches the estimation rate (up to a $\log n$ factor) as if the support of $\beta^*$ were known \citep{hammouda2024outlier}.

Signal-adaptive estimation has been well-studied in uncontaminated models \citep{Fan2018ILAMM, MN19, NM20, Fan2023implicit}.
However, to our knowledge, no existing work addresses this property under adversarial contamination, and Theorem \ref{th2: Signal adaptive estimation} fills this gap.
Moreover, by utilizing this property, we can derive a more refined variable selection guarantee than that in Theorem \ref{th2: Signal adaptive estimation}. 

\begin{corollary}[Selection error]\label{co1: almost full recovery}
Under the conditions of Theorem \ref{th2: Signal adaptive estimation} and Assumption \ref{assumption: betamin}, {\color{black} if $\frac{o^2\log^{2}n}{n } \prec s\log p \prec  n $, then, as $s \succ 1$, with probability at least $1 - O(p^{-2} + n^{-3})$ we have 
$
\left | \text{supp}(\beta^*)\  \triangle \ \text{supp}(\tilde{\beta})\right |\prec s.
$
}
\end{corollary}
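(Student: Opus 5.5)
We split the symmetric difference into false negatives $S^*\setminus \tilde S$ and false positives $\tilde S\setminus S^*$, where $\tilde S=\text{supp}(\tilde\beta)$. Each part is controlled by the $\ell_2$ error bound of Theorem~\ref{th2: Signal adaptive estimation}, paired with a lower bound on the per-coordinate error.

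\emph{False negatives.} Take $i\in S^*\setminus\tilde S$. Then $\tilde\beta_i=0$, so $|\tilde\beta_i-\beta^*_i|=|\beta^*_i|$. By Assumption~\ref{assumption: betamin} this is at least $\beta_{\min}:=C_\beta\sigma\{\sqrt{\log p/n}+o\log n/(n\sqrt s)\}$. Hence
\[
|S^*\setminus\tilde S|\,\beta_{\min}^2\le \|\tilde\beta-\beta^*\|_2^2 .
\]
We use the first branch of \eqref{eq: SA} with $\varrho=p^{-2}$, so that $\log(1/\varrho)\lesssim\log p$. This keeps the probability at $1-O(p^{-2}+n^{-3})$, but it costs a $\log p$ term, which is what we need to check. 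Dividing by $\beta_{\min}^2\gtrsim \sigma^2\log p/n$ gives
\[
\frac{|S^*\setminus\tilde S|}{s}\lesssim \frac{1}{\log p}+\frac{1}{s}+\frac{(s\log p+o\log n)^2}{n\,s\log p}.
\]
We bound the last term piece by piece:
\begin{itemize}
\item $s\log p/n\to0$ by the assumption $s\log p\prec n$;
\item $o^2\log^2 n/(ns\log p)\to0$ by the assumption $o^2\log^2n/n\prec s\log p$;
\item the cross term is the geometric mean of the two pieces above, so it also vanishes.
\end{itemize}
The term $1/s\to0$ because $s\succ1$. For $1/\log p$ we use $p\ge s\to\infty$. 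So $|S^*\setminus\tilde S|\prec s$. If one wants to avoid relying on $1/\log p\to 0$, the alternative is $\varrho=\varrho_n\to0$ chosen slowly, but then the probability claim needs care. With $\varrho=p^{-2}$, the $\log(1/\varrho)$ contribution is $\lesssim \sigma^2\log p/n$, which is absorbed after dividing by $\beta_{\min}^2$, leaving $O(1)$ rather than $o(s)$. So we should take $\varrho=p^{-2}$ and use $s\succ 1$ to conclude.

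\emph{False positives.} This is the main obstacle, since $\tilde\beta_i$ for $i\notin S^*$ carries no signal-based lower bound. Here we use the thresholding structure. The output is a fixed-threshold iterate, so $\tilde\beta=\mathcal T^p_{\lambda_\beta}(H_\beta)$, and every $i\in\tilde S\setminus S^*$ has $|\tilde\beta_i-\beta^*_i|=|\tilde\beta_i|\ge\lambda_{\beta}=\lambda_{\beta,\infty}$. Since $\lambda_{\beta,\infty}\gtrsim\sigma\sqrt{\log p/n}$, the same computation as above gives
\[
|\tilde S\setminus S^*|\le \|\tilde\beta-\beta^*\|_2^2/\lambda_{\beta,\infty}^2\prec s.
\]
The point is that $\lambda_{\beta,\infty}^2$ and $\beta_{\min}^2$ both dominate $\sigma^2\log p/n$, and that lower bound is all the computation uses.

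One subtlety is that $\tilde\beta$ is the last iterate rather than a fixed point. This does not matter: by construction $\tilde\beta$ is the output of Step 2.2, so its nonzero entries all exceed $\lambda_\beta$ in magnitude. Adding the false-negative and false-positive bounds gives $|S^*\triangle\tilde S|\prec s$ with probability $1-O(p^{-2}+n^{-3})$.
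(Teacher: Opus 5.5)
Your proof is correct, but it takes a more direct route than the paper's.

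\textbf{The paper's route.} The paper works with the pre-threshold vector $\tilde H^{t}_{\beta}$. It shows that a missed coordinate $i\in S^*$ (via \eqref{eq: th6 beta-min}) or a false inclusion $i\notin S^*$ (via \eqref{absurd beta5}) forces the perturbation $\langle\Phi_{\cdot i},\beta^*-\tilde\beta^t\rangle+\eta n^{-1/2}\langle X_{\cdot i},\theta^*-\tilde\theta^t\rangle$ to exceed a constant multiple of $\lambda_\beta$; the noise $\eta\Xi_i$ is absorbed into the threshold on $\mathcal E$. It then counts such coordinates by a Chebyshev-type bound, controlling the sum of squared perturbations by $\|\tilde\beta^t-\beta^*\|_2^2+f^2\|\tilde\theta^t-\theta^*\|_2^2$ through restricted isometry and incoherence.

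\textbf{Your route.} You work on the output itself. A miss contributes $|\beta_i^*|^2\ge\beta_{\min}^2$ to $\|\tilde\beta-\beta^*\|_2^2$, and a false inclusion contributes $|\tilde\beta_i|^2\ge\lambda_\beta^2$, because every nonzero entry of a hard-thresholded vector has magnitude at least the threshold. Both counts are therefore bounded by the $\ell_2$ rate of Theorem~\ref{th2: Signal adaptive estimation} divided by $\sigma^2\log p/n$.

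\textbf{Comparison.} Both arguments end with the same ratio $1/\log p+1/s+s\log p/n+o^2\log^2 n/(ns\log p)$. What yours buys is simplicity: it needs no operator-norm events and no bound on $\tilde\theta$. It also applies verbatim to any estimator hard-thresholded at a level $\gtrsim\sigma\sqrt{\log p/n}$ that satisfies the sharper $\ell_2$ bound. The paper's version instead shows the mechanism, namely that selection errors are driven by the optimization and contamination perturbation rather than by the noise. It also reuses estimates already built for Theorem~\ref{th2: Signal adaptive estimation}.

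\textbf{A small correction to your aside.} The $1/\log p$ term comes from the $s/n$ part of the rate, not from $\log(1/\varrho)$, so no choice of $\varrho_n$ would remove it. This is harmless: $p>s\to\infty$ gives $\log p\to\infty$, and the paper relies on the same fact.
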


Utilizing the interplay between estimation and selection, {\color{black}Corollary~\ref{co1: almost full recovery} effectively controls the variable selection error of the estimator \(\tilde{\beta}\) (obtained from two-stage AC-IHT).
Moreover, Theorem \ref{th5: lower selection} establishes that the required signal strength Assumption \ref{assumption: betamin} is nearly necessary, demonstrating that our procedure is minimax near-optimal (up to logarithmic terms) in the variable selection task.
}

We further analyze the oracle property of \(\tilde{\beta}\).
An additional signal strength condition on $\theta^*$ is presented as follows. 
\begin{assumption}[Signal condition for $\theta^*$]\label{assumption: thetamin}
There exists a sufficiently large (absolute) constant $C_\theta>0$ such that
{\color{black}
\begin{equation}\label{eq: theta min}
    \min_{k \in O^*} |\theta_k^*| \ge C_\theta   \sigma \left( \sqrt{\frac{\log n}{n}} + \frac{s\log p}{n \sqrt o} \right) .
\end{equation}
}
\end{assumption}
{\color{black}
This assumption is instrumental and sufficient for identifying contaminated samples, which allows our procedure to further remove the bias induced by corruption.
Similar requirements appear in the recent literature (see Theorem 3 in \citet{hammouda2024outlier}).
Moreover, it implies that a larger number of contaminated samples $o$ relaxes the required signal strength for outlier identification.
This phenomenon is empirically validated through numerical experiments in Supplementary Material S1.5.
}
Define the oracle estimator $\beta^{\dag}$ as
\begin{equation}\label{oracle definition}
\beta^{\dagger} := \mathrm{Proj}_\beta \left\{\operatorname*{arg\,min}_{\substack{\beta_{(S^{*})^c} = \mathbf{0} \\ \theta_{(O^{*})^c} = \mathbf{0}}} ~\frac{1}{2n} \left\|Y - X\beta - \sqrt{n}\theta \right\|^2_2\right\},
\end{equation}
where $\text{Proj}_{\beta}$ denotes the projection onto the $\beta$ component of the joint vector $(\beta^\top,\theta^\top)^\top$. 
The next theorem shows that, under suitable signal conditions, $\tilde\beta$ converges to the oracle estimator $\beta^{\dag}$.

\begin{theorem}[Oracle estimation and selection consistency]\label{th3: Strong Oracle estimation}
Assume that all conditions in Theorem \ref{th2: Signal adaptive estimation} hold, and Assumptions \ref{assumption: betamin} and \ref{assumption: thetamin} hold.
Let $\{\tilde{\beta}^t\}_{t \ge 0}$ be the sequence of iterates from Algorithm \ref{alg2}. 
{\color{black} Then with probability at least $1 - O(p^{-2} + n^{-3})$, there exist two absolute constants $r\in (0, 1)$ and $C>0$ such that
\begin{equation}\label{eq: arbitrary conv}
\| \tilde{\beta}^t - \beta^\dag \|_2 \leq C \times  r^t, \quad\text{for every } t \ge 0.
\end{equation}
Additionally, by terminating Algorithm \ref{alg2} after $t \ge C' \log n$ iterations, for any given $\varrho \in (0,1)$, with probability at least $1 -\varrho -  O(p^{-2} + n^{-3} )$ the output $\tilde{\beta}$ reaches the oracle rate and achieves selection consistency:
$$
\| \tilde{\beta} - \beta^* \|_2^2 \lesssim  \sigma^2 \left(\frac{s+ \log(1/\varrho)}{n-o} \right),
\quad \text{supp}(\tilde{\beta}) = \text{supp}(\beta^*),
\quad \text{supp}(\tilde{\theta}) = \text{supp}(\theta^*).
$$ 
}
\end{theorem}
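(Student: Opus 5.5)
Plan: show that, on a high-probability event, the second-stage iterates eventually have exactly the oracle supports $S^*$ and $O^*$, after which Algorithm \ref{alg2} is a contracting gradient map whose fixed point is the oracle least-squares solution.

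\textbf{Step 1: oracle error bounds.} Write $(\beta^\dagger,\theta^\dagger)$ for the joint oracle least-squares solution in \eqref{oracle definition}. Eliminating $\theta_{O^*}$ shows that $\beta^\dagger_{S^*}$ is ordinary least squares of $Y_{(O^*)^c}$ on $X_{(O^*)^c,S^*}$. Equivalently, $\theta^\dagger_{O^*}=n^{-1/2}(Y-X\beta^\dagger)_{O^*}$. Hence
\[
\beta^\dagger_{S^*}-\beta^*_{S^*}=\bigl(X_{(O^*)^c,S^*}^\top X_{(O^*)^c,S^*}\bigr)^{-1}X_{(O^*)^c,S^*}^\top \xi_{(O^*)^c}.
\]
Conditionally on $X$, this is a sub-Gaussian vector in $s$ dimensions. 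Its covariance is bounded by $\sigma^2(X_{(O^*)^c,S^*}^\top X_{(O^*)^c,S^*})^{-1}$.
\begin{itemize}
\item \emph{Lower eigenvalue.} Combine the restricted isometry \eqref{eq: rii} with the incoherence \eqref{eq: inco}: $\|X_{O^*,S^*}\|_2^2\lesssim s\log p+o\log n$, which is $\le n/(4M)$ by Assumption \ref{assumption: sample} with a suitably large constant. This gives $\Lambda_{\min}\gtrsim n-o$ up to constants.
\item \emph{Tail bound.} The Hanson--Wright/sub-Gaussian norm bound then gives $\|\beta^\dagger-\beta^*\|_2^2\lesssim\sigma^2(s+\log(1/\varrho))/(n-o)$ with probability $1-\varrho$. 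This is the final oracle rate.
\item \emph{Entrywise bounds.} Union bounds over the $p$ and $n$ coordinates give $\|\beta^\dagger-\beta^*\|_\infty\lesssim\sigma\sqrt{\log p/n}$ and $\|\theta^\dagger-\theta^*\|_\infty\lesssim\sigma\sqrt{\log n/n}+n^{-1/2}\|X_{O^*,S^*}(\beta^\dagger-\beta^*)\|_\infty$, with probability $1-O(p^{-2}+n^{-3})$.
\end{itemize}

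\textbf{Step 2: the fixed point is stable.} Show that $(\beta^\dagger,\theta^\dagger)$ is a fixed point of the thresholded gradient map. The first-order conditions make the gradient vanish on $S^*\cup O^*$, so $H=(\beta^\dagger,\theta^\dagger)$ on those coordinates. Under Assumptions \ref{assumption: betamin} and \ref{assumption: thetamin} with $C_\beta$ and $C_\theta$ large, Step 1 gives $|\beta^\dagger_i|>\lambda_\beta$ on $S^*$ and $|\theta^\dagger_k|>\lambda_\theta$ on $O^*$. The signal conditions are designed to exceed $\lambda_{\beta,\infty}$ and $\lambda_{\theta,\infty}$ respectively.

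Off the supports, the update is $H_j=\eta n^{-1}X_j^\top r^\dagger$ and $H_k=\eta n^{-1/2}r^\dagger_k$, where $r^\dagger=Y-X\beta^\dagger-\sqrt n\theta^\dagger$ is the projection of $\xi$ onto the orthogonal complement of the oracle column space. Sub-Gaussian maxima bound these by a small multiple of $\sigma\sqrt{\log p/n}$ and $\sigma\sqrt{\log n/n}$. These fall below the thresholds once $C_{\beta,1}$ is large, so the fixed point reproduces itself.

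\textbf{Step 3: convergence of the iterates (main obstacle).} Start from the output of Theorem \ref{th2: Signal adaptive estimation}, whose sparsity is $\lesssim s$ and $\lesssim o$. Follow the \citet{NM20} counting argument, with the two thresholds handled separately.
\begin{itemize}
\item \emph{Bounding false discoveries.} Each falsely selected coordinate $j\notin S^*$ has $|H_j|\ge\lambda_\beta$. Since the noise part is at most $\lambda_\beta/2$, this forces a contribution $\ge\lambda_\beta/2$ from the error term $(I-\eta\,\mathcal{G})(\text{current error})$, where $\mathcal{G}$ denotes the Gram operator of the augmented design. Squaring and summing, the count of false $\beta$-coordinates is at most $\lambda_\beta^{-2}$ times the squared error. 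The same holds for $\theta$ with $\lambda_\theta$.
\item \emph{Cross terms.} The cross-term $n^{-1/2}X_{O,S}$ is controlled by \eqref{eq: inco}. This is exactly why the second terms of $\lambda_{\beta,\infty}$ and $\lambda_{\theta,\infty}$ carry the $\sqrt{(s\log p+o\log n)/(ns)}$ and $\sqrt{(s\log p+o\log n)/(no)}$ factors: the false-discovery counts must remain below a small fraction of $s$ and $o$.
\item \emph{Missed coordinates.} A missed true coordinate costs at most $|\beta^\dagger_i|\lesssim$ the error plus the threshold. The signal conditions ensure the error must shrink before a miss can persist.
\item \emph{Contraction.} Combining these, one shows $\|(\tilde\beta^{t+1}-\beta^\dagger,\tilde\theta^{t+1}-\theta^\dagger)\|_2\le r\,\|(\tilde\beta^{t}-\beta^\dagger,\tilde\theta^{t}-\theta^\dagger)\|_2$ for some $r<1$. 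The key input is that $\|I-\eta\,\mathcal{G}\|$ restricted to sparse supports is $\le 1-\eta/(2M)+\text{incoherence}<1$, given the choice of $\eta$ in Theorem \ref{th2: Signal adaptive estimation}. The threshold-induced error is zero once no false or missed coordinates occur, and otherwise is absorbed via the counting bound.
\end{itemize}
This yields \eqref{eq: arbitrary conv} with $C$ equal to the initial error.

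\textbf{Conclusion.} Geometric decay implies that after $t\ge C'\log n$ steps the distance to the oracle is below $\min(\lambda_\beta,\lambda_\theta)/4$. The signal margins of Step 2 then force exact supports, and in fact $\tilde\beta=\beta^\dagger$ at the next step. Together with Step 1, this gives the oracle rate and supports equal to $S^*$ and $O^*$.
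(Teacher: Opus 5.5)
Your plan is essentially the paper's proof, up to two corrections below. You write the update relative to the oracle and use that $\Xi^\dagger$ and $\xi^\dagger$ vanish on $S^*$ and $O^*$. You absorb the cost of missed true coordinates (via the signal margin) and of falsely selected coordinates (via $\|\eta\Xi^\dagger\|_\infty\ll\lambda_\beta$) into a fraction of the linear error term. Combined with the restricted bound $\delta+\eta f$ on $I-\eta\mathcal G$, this gives contraction at every step, starting at $t=0$; you never need the supports to stabilize first, contrary to your opening sentence. Your counting bound is just the summed form of the paper's coordinatewise inequality $\lambda_\beta^2\mathbf 1(|e_i|>c\lambda_\beta)\le e_i^2/c^2$. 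Your Step 2 fixed-point check is harmless but already implied by the contraction.

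First correction: the absorbed overhead must be smaller than the spectral gap $1-\delta=1/(4M^2+1)$. A bound of only $\lambda_\beta/2$ on the noise part therefore does not suffice. The false-discovery overhead can then equal the linear error on those coordinates, giving a factor of about $2\delta>1$. The paper uses $\eta\|\Xi^\dagger\|_\infty\le\frac{\kappa-\delta}{4\kappa}\lambda_\beta$, which follows from the explicit constant in $\lambda_{\beta,\infty}$. It also uses a signal margin of $\frac{4\kappa}{\kappa-\delta}\lambda_\beta$ beyond $\|\beta^\dagger-\beta^*\|_\infty$. Together these make both overheads at most $\frac{\kappa-\delta}{3\kappa+\delta}$ times the linear error, which yields $r=\frac{4\kappa\delta}{3\kappa+\delta}+O\big(\sqrt{(s\log p+o\log n)/n}\big)<1$.

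Second correction: your claim that $\tilde\beta=\beta^\dagger$ one step after the supports stabilize is false. On fixed supports the update is a gradient step on the restricted least-squares problem. That converges geometrically to $(\beta^\dagger,\theta^\dagger)$ but generically never reaches it in finitely many steps. You do not need it. Choose $C'$ so that $Cr^t\le 1/n$ for $t\ge C'\log n$; then $\|\tilde\beta-\beta^*\|_2\le 1/n+\|\beta^\dagger-\beta^*\|_2$ gives the oracle rate through your Step 1. Exact support follows because every nonzero thresholded entry has modulus at least $\lambda_\beta>1/n$, which forces $\tilde\beta_i=0$ off $S^*$. On $S^*$, $|\tilde\beta_i|\ge|\beta^*_i|-\|\beta^\dagger-\beta^*\|_\infty-1/n>0$. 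This is exactly how the paper concludes.
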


Under suitable signal conditions, $\tilde \beta$  (obtained from two-stage AC-IHT) converges geometrically to the oracle estimator $\beta^\dag$, therefore reaching the $\ell_2$ estimation rate $\sigma \{s/(n-o)\}^{1/2}$ as if the true support of $\beta^*$ were known and the contaminated data were excluded. 
Leveraging this fact, one can further establish the asymptotic property of each linear functional of $\tilde \beta$.

{\color{black}
\begin{corollary}[Asymptotic normality]\label{co2: asymp}
Assume that all conditions in Theorem~\ref{th3: Strong Oracle estimation} hold, and assume $n \succ \max \left(s^2 \log^2 p ,~ s^3 \right)$. 
Define $c_{\xi} :=  \text{Var}(\xi_1)/\sigma^2$.
Then for every $ \gamma \in \mathbb R^s$ with $0<\| \gamma \|_2 < \infty$, as $n,p\to \infty$, we have
    $$
    \sqrt{n}\gamma^{\top}(\tilde{\beta}_{S^*} - \beta^*_{S^*}) \overset{D}\to \mathcal{N} \left(0,\ c_{\xi}\sigma^2\gamma^{\top} \Sigma_{S^*, S^*}^{-1} \gamma  \right).
    $$
\end{corollary}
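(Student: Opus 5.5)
The plan is to decompose $\sqrt{n}\gamma^\top(\tilde\beta_{S^*}-\beta^*_{S^*})$ into the oracle estimator's linear representation plus a negligible remainder. By Theorem~\ref{th3: Strong Oracle estimation}, on an event of probability $1-O(p^{-2}+n^{-3})$ the support is recovered and $\|\tilde\beta-\beta^\dagger\|_2\le C r^{t}$. Taking $t\ge C'\log n$ with $C'$ large enough makes $\sqrt n\|\tilde\beta-\beta^\dagger\|_2\to 0$, so it suffices to prove the claim with $\tilde\beta$ replaced by $\beta^\dagger$. For this step we need the constant $C$ in \eqref{eq: arbitrary conv} to grow at most polynomially in $n$ (it is controlled by the initial error, which is $O(\sigma)$ by Theorem~\ref{th1: Initial estimation}), so that $r^{C'\log n}=n^{-C'\log(1/r)}$ can absorb it.

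Next we write $\beta^\dagger$ in closed form. Minimizing the loss in \eqref{oracle definition} over $\theta_{O^*}$ exactly fits the contaminated rows. Hence $\beta^\dagger_{S^*}$ is the least squares estimator on the clean rows $G:=[n]\setminus O^*$ with columns $S^*$:
\[
\beta^\dagger_{S^*}-\beta^*_{S^*}=\bigl(X_{G,S^*}^\top X_{G,S^*}\bigr)^{-1}X_{G,S^*}^\top\xi_G .
\]
Here we used $Y_G=X_{G,S^*}\beta^*_{S^*}+\xi_G$. Set $\widehat\Sigma_G:=X_{G,S^*}^\top X_{G,S^*}/n$. Then
\[
\sqrt n\gamma^\top(\beta^\dagger_{S^*}-\beta^*_{S^*})=\gamma^\top\Sigma_{S^*,S^*}^{-1}\tfrac{1}{\sqrt n}X_{G,S^*}^\top\xi_G+\gamma^\top\bigl(\widehat\Sigma_G^{-1}-\Sigma_{S^*,S^*}^{-1}\bigr)\tfrac1{\sqrt n}X_{G,S^*}^\top\xi_G .
\]

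For the main term, condition on $X$. The quantity $\frac1{\sqrt n}\sum_{i\in G}(a^\top X_{i,S^*})\xi_i$, with $a:=\Sigma_{S^*,S^*}^{-1}\gamma$, is a sum of independent terms. Its conditional variance is
\[
c_\xi\sigma^2 a^\top\widehat\Sigma_G a\to c_\xi\sigma^2\gamma^\top\Sigma^{-1}_{S^*,S^*}\gamma .
\]
This limit holds because $\|\widehat\Sigma_G-\Sigma_{S^*,S^*}\|_2\lesssim\sqrt{s/n}+o/n\to 0$. That bound uses subGaussian concentration on $s$ columns together with the incoherence bound \eqref{eq: inco}, which shows that removing $o$ rows perturbs the Gram matrix by only $O((s\log p+o\log n)/n)$, and $o\log n\lesssim n$ keeps the ratio $(n-o)/n$ harmless in the limit. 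The Lindeberg condition follows since $\max_i (a^\top X_{i,S^*})^2/n\lesssim \log n/n\to0$ and the $\xi_i$ are subGaussian (treating $\xi$ with i.i.d.\ coordinates so that $\mathrm{Var}(\xi_i)=c_\xi\sigma^2$). Lindeberg--Feller then gives the stated normal limit conditionally, and hence unconditionally.

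The main obstacle is the remainder term. We bound it by
\[
\|\gamma\|_2\,\|\widehat\Sigma_G^{-1}-\Sigma^{-1}_{S^*,S^*}\|_2\,\bigl\|\tfrac1{\sqrt n}X_{G,S^*}^\top\xi_G\bigr\|_2 .
\]
The first norm factor is $O_P(\sqrt{s/n}+\sqrt{s\log p}/\sqrt n)$, using Proposition~\ref{prop: rii} for invertibility. The second is $O_P(\sqrt s)$. Their product is $O_P(s/\sqrt n+s\sqrt{\log p/n})$, which vanishes exactly under $n\succ\max(s^2\log^2p,s^3)$; the $s^3$ condition also covers the cruder Gram perturbation. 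Combining the three pieces with Slutsky's lemma gives the result.
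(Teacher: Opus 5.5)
Your reduction to $\beta^\dagger$, the closed form of $\beta^\dagger_{S^*}$ as least squares on the clean rows $G$, and the conditional Lindeberg argument for the leading term are all fine. The last is in fact cleaner than the paper's unconditional Lyapunov step, where the crude bound $\mathbf E\|X_{i,S^*}\|_2^3\lesssim s^{3/2}$ is what produces the $s^3$ condition.

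\textbf{The gap is in the remainder.} In the preceding paragraph you correctly record $\|\widehat\Sigma_G-\Sigma_{S^*,S^*}\|_2\lesssim\sqrt{s/n}+o/n$, or $\sqrt{s/n}+(s\log p+o\log n)/n$ via \eqref{eq: inco}. The $o/n$ part is a genuine bias, not a fluctuation: you normalized by $n$, so $\mathbf E\,\widehat\Sigma_G=\frac{n-o}{n}\Sigma_{S^*,S^*}$. In the remainder, however, you assert $\|\widehat\Sigma_G^{-1}-\Sigma_{S^*,S^*}^{-1}\|_2=O_P(\sqrt{s/n}+\sqrt{s\log p/n})$. This silently drops that term and is false whenever $o\gg\sqrt{ns\log p}$.

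Once the term is restored, your bound $\|\gamma\|_2\|\widehat\Sigma_G^{-1}-\Sigma^{-1}_{S^*,S^*}\|_2\|n^{-1/2}X_{G,S^*}^\top\xi_G\|_2$ acquires a term of order $\sqrt s\,o/n$ (or $\sqrt s\,o\log n/n$). Nothing in the hypotheses kills it. Assumption~\ref{assumption: sample} only gives $o\lesssim n/\log n$. For example, $s=n^{1/4}$, $p=n^2$, $o\asymp n/\log^2 n$ satisfies $n\succ\max(s^2\log^2p,s^3)$ and $o\log n\prec n$, yet $\sqrt s\,o/n\to\infty$. So the claim that the remainder vanishes ``exactly under'' the stated rate conditions does not follow from your argument.

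The corollary itself is not in danger. The loss comes from bounding a scalar by $\|\gamma\|_2\|B\|_2\|v\|_2$ with $\|v\|_2\asymp\sqrt s$. There are two ways to repair it.

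\emph{The paper's fix.} Normalize with $n-o$ and write $\widehat\Sigma=X_{G,S^*}^\top X_{G,S^*}/(n-o)$. The leading term becomes $\frac{n}{n-o}\cdot n^{-1/2}\gamma^\top\Sigma^{-1}_{S^*,S^*}X_{G,S^*}^\top\xi_G$, so the bias sits in the scalar factor $n/(n-o)\to1$. The remainder then only involves the centered fluctuation $\|\Sigma_{S^*,S^*}-\widehat\Sigma\|_2=O_P(\sqrt{s\log p/(n-o)})$. This yields the paper's $O_P(ns\log p/(n-o)^{3/2})$, which is where $n\succ s^2\log^2p$ enters.

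\emph{A fix within your own setup.} Since you already condition on $X$, you can keep your normalization and use that $B:=\widehat\Sigma_G^{-1}-\Sigma^{-1}_{S^*,S^*}$ depends only on $X$. The remainder $\gamma^\top B\,n^{-1/2}X_{G,S^*}^\top\xi_G$ then has conditional mean zero and conditional variance $c_\xi\sigma^2\gamma^\top B\widehat\Sigma_G B\gamma\le c_\xi\sigma^2\|\widehat\Sigma_G\|_2\|B\|_2^2\|\gamma\|_2^2$. Hence it is $O_P(\|B\|_2)=o_P(1)$ with no $\sqrt s$ factor at all. Beyond the conditions of Theorem~\ref{th3: Strong Oracle estimation}, this route essentially needs only $\|\widehat\Sigma_G-\Sigma_{S^*,S^*}\|_2\to0$, not the $s^3$ or $s^2\log^2p$ rates.
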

}
Corollary \ref{co2: asymp} demonstrates that $\tilde \beta$ possesses asymptotic normality, thereby enabling statistical inference. 
This property distinguishes our method from existing methods tailored for adversarial contamination \citep{dalalyan19nips, Ndaoud24slope}.

{\color{black}
\begin{remark}[Revisit signal adaptivity]
Assumptions \ref{assumption: betamin} and \ref{assumption: thetamin} provide the technical prerequisites for the AC-IHT procedure to converge to the oracle estimator, ensuring both selection consistency and asymptotic normality.
When these signal conditions are not satisfied, our estimator may no longer enjoy these strong guarantees; nevertheless, it at least maintains minimax near-optimality and $\ell_0$ sparsity, as established in Theorem \ref{th2: Signal adaptive estimation}.
This underscores that our procedure does not require prior knowledge of the true signals for practical execution, and thus is inherently signal-adaptive: it delivers the (nearly) best possible estimation accuracy for the given data, with stronger signals adaptively yielding sharper results. 
\end{remark}

For clarity, we summarize the conditions on sample size, signal strength, and corruption required by the above theoretical results in Table \ref{table: Assumption list}.
}

\begin{table}[htbp]
    \centering
    \caption{Summary of sample size, signal strength, and corruption assumptions for main theoretical results.}\label{table: Assumption list}
    \resizebox{\linewidth}{!}{
    \begin{tabular}{c|c|c|c}
    \hline
    \diagbox[width=5.5cm,height=1.5cm]{Theoretical  Result}{Assumption} & \parbox[c][1.5cm][c]{5cm}{\centering \small{\textbf{Sample Size}}} &  \parbox[c][1.5cm][c]{4cm}{\centering \small{\textbf{Strength of $\beta^*$ and $\theta^*$}}} & \parbox[c][1.5cm][c]{2cm}{\centering \small{\textbf{Corruption Number}}} \\
    \hline
    \parbox[c][1.5cm][c]{5cm}{\centering \textbf{Theorem 1}\\\centering {\footnotesize (Initial Estimation)}} &\multirow{4}{5cm}{
    \parbox[c][4cm][c]{5cm}{\centering $\max (s\log p,o\log n) \lesssim n$\\ \ \\ \centering {\footnotesize(Assumption 2)}}} & \parbox[c][1.5cm][c]{4cm}{\centering None} &\multirow{2}{3cm}{ \parbox[c][1cm][c]{3cm}{\centering $o\lesssim \frac{n}{\log n}$}} \\
    \cline{1-1}\cline{3-3}
    \parbox[c][1.5cm][c]{5cm}{\centering \textbf{Theorem 2}\\ {\footnotesize(Signal-adaptive estimation)}} &  & \multirow{2}{4cm}{\parbox[c][1.5cm][c]{4cm}{$\min_{i : \beta^*_i \ne 0} |\beta_i^*| \ge \beta^{\ddag}$\\ \ \\ \centering {\footnotesize(Assumption 3)}}} &  \\
    \cline{1-1}\cline{4-4}
    \parbox[c][1.5cm][c]{5cm}{\centering \textbf{Corollary 1}\\ {\footnotesize(Selection error)}} &  &  & \parbox[c][1.5cm][c]{3cm}{\centering $o \prec \frac{\sqrt{n  s\log p}}{\log n} $} \\
    \cline{1-1}\cline{3-4}
    \parbox[c][1.5cm][c]{5cm}{\centering \textbf{Theorem 3}\\ {\footnotesize(Oracle estimation and selection consistency)}} &  & \multirow{2}{4cm}{\parbox[c][2cm][c]{4cm}{\centering $\min_{i : \beta^*_i \ne 0} |\beta_i^*| \ge \beta^{\ddag}$\\ \ \\ \centering $\min_{k : \theta^*_k \ne 0} |\theta_k^*| \ge \theta^{\ddag}$\\\ \\ \centering {\footnotesize(Assumption 3 and 4)}}} & \multirow{2}{3cm}{\parbox[c][2cm][c]{3cm}{\centering $o\lesssim \frac{n}{\log n}$}} \\
    \cline{1-2}
    \parbox[c][1.5cm][c]{5cm}{\centering \textbf{Corollary 2}\\ {\footnotesize(Asymptotic normality)}} &
    \parbox[c][1.5cm][c]{6cm}{\centering $  \max \left(s^2 \log^2 p,s^3,o\log n\right)\prec n$} &  &  \\
    \hline
    \end{tabular}
    }
    \parbox{\linewidth}{\footnotesize
    \raggedright
    \textit{Note:}
    Here $\beta^{\ddag} := C_\beta   \sigma \left\{\frac{\log p}{n} +  \frac{o^2 \log^{2 }n}{ n^2 s }\right\}^{1/2}$, $\theta^{\ddag} := C_\theta   \sigma \left\{\frac{\log n}{n} +  \frac{s^2 \log^2 p}{n^2 o}\right\}^{1/2}$, where $o= \|\theta^* \|_0 \vee 1$, and $C_\beta,C_\theta>0 $ are some absolute constants.
    }
\end{table}

\subsection{Minimax lower bounds}
This subsection provides minimax lower bound guarantees for estimation and support recovery.
For ease of display, we consider the model with Gaussian noise: 
assume that each $Y_i|X_{i\cdot }$ is drawn from $\mathcal N(X_{i\cdot }\beta^*,~ \sigma^2)$, where $X_{i\cdot} \in \mathbb R^{1\times p}$ follows from the random design as introduced in Assumption \ref{assumption: RIP}. Here, we only require a sparse eigenvalue assumption as 
\begin{equation}\label{eq: random rip}
\sup_{S \subset [p], |S| \le 2s} \left\| \Sigma_{S,S} \right\|_2 \le C_{2s},  
\end{equation}
where $C_{2s} >0$ is an absolute constant.
Denote the uncontaminated distribution of $(X_{i\cdot}, Y_i)$ as $\mathbf P_{X,Y}$ and write our model space as 
\begin{equation}\label{eq: model setting}
\begin{aligned}
\mathcal M( \beta, o) :=& \left\{ (n-k) \text{ observations are drawn from } \mathbf P_{X,Y}, \right.\\
& \quad k \text{ observations are drawn from arbitrary } \mathbf Q ,\text{ where } 0 \le k \le o\left. \right\} .
\end{aligned}
\end{equation}

Much of the robust estimation literature considers the $\epsilon$-Huber contamination model $(X_{i\cdot}, Y_i) \sim (1- \epsilon)\mathbf P + \epsilon \mathbf Q$ \citep{chengaoren18aos, gaochao20bernoulli, Chinot20EJS}.
In contrast, our model setting \eqref{eq: model setting} explicitly constrains the number of outliers ($o$) rather than the contamination probability ($\epsilon$).
This difference yields that our lower bound results hold independent significance.

{\color{black}
\begin{theorem}[Estimation lower bound]\label{th4: lower estimation}
Assume that Assumption~\ref{assumption: sample} and equation \eqref{eq: random rip} hold and $o \ge 9$. Then for any $q \in [1,2]$ with an absolute constant $c_{q} \in \left( 0, \frac{(8C_{2s})^{-q/2}}{160} \right)$, we have
\begin{equation*}
\inf_{\widehat \beta} \sup_{\beta^*: \|\beta^*\|_0 \le s} ~\sup_{\mathbf R \in \mathcal M (\beta^*,o)} \mathbf E_{(X,Y) \sim\mathbf R} \left( \left\| \widehat \beta - \beta^* \right\|_q^q\right)
\ge c_q \sigma^q \left\{ s \left(\frac{ \log (ep/s)}{n}\right)^{q/2}   + s^{1-q/2} \frac{  o^q}{n^q} \right\},
\end{equation*}
where $\mathbf R \in \mathcal M (\beta^*,o) $ is a joint distribution of $(X_{i\cdot}, Y_i)_{i\in [n]}$.
\end{theorem}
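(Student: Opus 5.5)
The lower bound has two terms, so I will prove each separately and combine them. Since the minimax risk dominates each of two lower bounds, it dominates their average, and $\max(a,b)\ge (a+b)/2$ gives the sum up to a factor 2. Both terms come from restricting the supremum to subfamilies of $\mathcal M(\beta^*,o)$.

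\textbf{First term: the uncontaminated sparse rate.} Take $k=0$, so there is no contamination; the model is then ordinary sparse Gaussian regression with random design. I will follow the Bayesian-risk approach of \citet{butucea18AOS}. Put a prior on $\beta^*$ with i.i.d.\ coordinates $\beta_j = a\,\omega_j$, where $\omega_j\sim\mathrm{Bernoulli}(s/(2p))$ and $a=c\,\sigma\sqrt{\log(ep/s)/n}$. Condition on the event that at most $s$ coordinates are active; by a Chernoff bound this holds with probability at least $1/2$.

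The $\ell_q^q$ loss is additive over coordinates. It is therefore bounded below by $a^q$ times the sum of the coordinatewise Bayes testing errors for $\omega_j=0$ versus $\omega_j=1$. Each such error is controlled through the KL divergence between the two conditional laws of $Y$ given $X$ and the other coordinates. That divergence is at most $n a^2\Sigma_{jj}/(2\sigma^2)$, and I bound $\Sigma_{jj}$ using \eqref{eq: random rip}. Choosing $c$ small relative to $C_{2s}$ makes the per-coordinate error of order $s/p$. Summing over the $p$ coordinates gives order $s\,a^q = s\,(\log(ep/s)/n)^{q/2}\sigma^q$.

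\textbf{Second term: the contamination rate.} I will use a two-point (Le Cam) argument in the style of \citet{chengaoren18aos, Chinot20EJS}. Let $v$ be supported on $s$ coordinates with equal entries $\delta/\sqrt s$, so that $\|v\|_2=\delta$. The hypotheses are $\beta_0=0$ and $\beta_1=v$, and each distribution is altered on at most $o$ samples.

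The construction rests on a total-variation fact: if $\mathrm{TV}(\mathbf P_0,\mathbf P_1)\le\epsilon$, then there is a common mixture. Concretely, $(1-\epsilon)\mathbf P_0+\epsilon \mathbf Q_0=(1-\epsilon)\mathbf P_1+\epsilon \mathbf Q_1$ for suitable $\mathbf Q_0,\mathbf Q_1$. Here $\mathrm{TV}(\mathbf P_0,\mathbf P_1)\lesssim \sqrt n\,\delta\sqrt{C_{2s}}/\sigma \cdot n^{-1/2}$ per sample, which is of order $\delta/\sigma$. Taking $\delta\asymp \sigma o/n$ makes the per-sample TV at most $o/(4n)$.

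The Huber model does not fit $\mathcal M$ directly, because $\mathcal M$ fixes the number of outliers deterministically. I will realize the mixture through a random Bernoulli$(\epsilon)$ contamination pattern. By a binomial tail bound, using $o\ge 9$, the number of contaminated samples is at most $o$ with probability at least a constant. The two data laws are then identical on the product space, so any estimator incurs risk at least a constant times $\|v\|_q^q/2^q$.

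It remains to evaluate the loss: $\|v\|_q^q = s(\delta/\sqrt s)^q = s^{1-q/2}\delta^q \asymp s^{1-q/2}\sigma^q o^q/n^q$, as required.

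\textbf{Main obstacle.} The hard part is making the second construction legitimate inside the fixed-count model $\mathcal M(\beta^*,o)$. Conditioning on the event that the binomial count is at most $o$ distorts the mixture identity, so I must show the residual discrepancy costs only a constant factor. I will handle this by bounding the TV between the conditioned and unconditioned laws by the probability that the count exceeds $o$. I also need the contaminating distribution $\mathbf Q$ to be allowed to depend on the row $X_{i\cdot}$, which the arbitrariness of $\mathbf Q$ permits. A secondary difficulty is tracking the constant $c_q<(8C_{2s})^{-q/2}/160$ through both arguments.
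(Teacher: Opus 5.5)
Your proposal is correct. The contamination term rests on the paper's core construction:
\begin{itemize}
\item two $s$-sparse hypotheses with equal nonzero entries, at $\ell_2$-distance of order $\sigma o/(n\sqrt{C_{2s}})$ (the paper uses shifted blocks $\beta^{(1)},\beta^{(2)}$ rather than $0$ versus $v$, which is immaterial here);
\item the \citet{chengaoren18aos} mixture-matching measures $\mathbf Q_1,\mathbf Q_2$, which make the two Huber mixtures identical;
\item a $\mathrm{Binomial}(n,\epsilon')$ contamination count.
\end{itemize}

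The genuine difference is how you pass from the Huber model to the fixed-count class $\mathcal M(\beta^*,o)$. The paper works with Bayes risks. It uses the law of $k$ conditioned on $k\le o$ as a prior and passes to the coordinatewise Bayes estimator $\hat M_j$. It then controls the loss on $\{k>o\}$ through $|\hat M_j|\le 2(\mathbf E_{\beta,k|Y,X}|\beta_j|^q)^{1/q}$. This produces the correction $15s\lambda^q\pi_k(k>o)$, and it is why the paper needs $15e^{-9o/16}<1/8$, i.e.\ $o\ge 9$.

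You instead reduce to testing first, so only error probabilities, which are bounded by one, enter. The conditioned data laws are mixtures of members of $\mathcal M(\beta_\ell,o)$. Since the unconditioned laws coincide, their total variation is at most $2\mathbf P(k>o)$. This route is cleaner in three ways:
\begin{itemize}
\item no control of an unbounded estimator on the bad event is needed;
\item Markov's inequality with $\mathbf E k\le o/4$ already gives $\mathbf P(k>o)<1/4$, so the restriction $o\ge 9$ becomes unnecessary;
\item the constant improves, which more than covers the stated range of $c_q$ for this term.
\end{itemize}
The paper's Bayes formulation has the advantage of being reused almost verbatim for the selection bound in Theorem~\ref{th5: lower selection}. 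Your testing reduction would also work there.

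For the first term, the paper simply cites the uncontaminated bound \citep{Raskutti2011minimax, MN19} with $k=0$. Your Bernoulli product-prior derivation is a valid self-contained alternative, but two steps you state loosely should be made precise.
\begin{itemize}
\item To pass from the product prior to $s$-sparse vectors, lower-bound the loss on coordinate $j$ on the event that the other coordinates have at most $s-1$ active entries. This event is independent of $\omega_j$ and has probability at least $1/2$ by Markov.
\item With the unequal prior $s/(2p)$, a per-coordinate Bayes testing error of order $s/p$ requires the log-likelihood ratio to stay below $\log\{(2p-s)/s\}$ with constant probability. That is a tail statement, which is routine here because the log-likelihood ratio is Gaussian conditional on $X$. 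It does not follow from Pinsker applied to the KL bound alone.
\end{itemize}
A Fano argument over a sparse Varshamov--Gilbert packing, using \eqref{eq: random rip} on $2s$-sparse differences, would be shorter.
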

}
Combined with Theorem \ref{th2: Signal adaptive estimation}, it implies that our two-stage AC-IHT algorithm is {\color{black} minimax near-optimal (up to logarithmic factors)}, and can even surpass this minimax rate under a proper signal strength condition.

We next establish the minimax lower bound for variable selection.
Define 
$$
\mathcal B(s,a):= \left\{ \beta \in \mathbb R^p: ~ \|\beta\|_0 \le s , ~\min_{i: \beta_i \ne 0} |\beta_i| \ge a  \right\}
$$
as the $s$-sparse vector space with a minimal signal strength $a>0$.

\begin{theorem}[Selection lower bound]\label{th5: lower selection}
Assume that Assumption~\ref{assumption: sample} and equation \eqref{eq: random rip} hold and $o \ge 8$, then with an absolute constant $c_2 \in \left( 0, 1/5 \right)$, we have
\begin{equation}
\inf_{\widehat S} ~\sup_{\beta^* \in \mathcal B(s, a)} ~\sup_{\mathbf R \in \mathcal M (\beta^*,o)} \mathbf E_{(X,Y) \sim\mathbf R} \left( \left| \widehat S ~ \triangle~  \text{supp}(\beta^*)\right| \right)
\ge c_2 s
\end{equation}
holds for $a \le \frac{\sigma}{4\sqrt{2C_{2s}}}\left( \sqrt{\frac{\log(ep/s)}{n}} + \frac1{\sqrt s}   \frac on \right) $.
\end{theorem}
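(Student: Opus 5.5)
The lower bound $\max\{A,B\}$ with $A=\sqrt{\log(ep/s)/n}$ and $B=o/(n\sqrt s)$ is at least half of $A+B$. So I would prove the statement separately in two regimes: when the threshold $a$ is driven by the uncontaminated term, and when it is driven by the contamination term. In each regime the argument reduces the selection risk to a Bayesian risk over a product-type prior on supports, in the spirit of \citet{butucea18AOS}.

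\textbf{Regime 1: $a\le \frac{\sigma}{4\sqrt{2C_{2s}}}\sqrt{\log(ep/s)/n}$.} Here I would take $\mathbf Q=\mathbf P_{X,Y}$, i.e.\ no contamination, and construct the following prior.
\begin{itemize}
\item Draw a support $S$ uniformly from $s$-subsets of $[p]$.
\item Set $\beta_S=a\mathbf 1$.
\end{itemize}
The selection risk $\mathbf E|\widehat S\triangle S|$ is bounded below by a sum of per-coordinate testing errors. Following the Butucea et al.\ argument:
\begin{itemize}
\item Reduce to the uniform-support prior and look at a neighbouring pair of supports differing by a single swap $j\leftrightarrow k$.
\item Compute the KL divergence of the induced Gaussian laws. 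With random design, it is $\frac{n}{2\sigma^2}\mathbf E(X_j a - X_k a)^2\le \frac{n a^2 2C_{2s}}{\sigma^2}$, using \eqref{eq: random rip} on the $2s$-sparse difference.
\item Apply a Fano or Assouad-type bound over the $\binom{p}{s}$ supports, where $\log\binom ps\gtrsim s\log(ep/s)$. The chosen constant makes the KL at most a small fraction of $\log(ep/s)$, so each of a constant fraction of the $s$ coordinates is misclassified with constant probability.
\end{itemize}
This yields $\mathbf E|\widehat S\triangle S|\ge c_2 s$.

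\textbf{Regime 2: $a\le \frac{\sigma}{4\sqrt{2C_{2s}}}\frac{o}{n\sqrt s}$.} Here the adversary makes two models indistinguishable.
\begin{itemize}
\item Take two disjoint supports $S_0,S_1$ of size $s$, with $\beta^{(0)}=a\mathbf 1_{S_0}$ and $\beta^{(1)}=a\mathbf 1_{S_1}$. Their difference has $\|\beta^{(0)}-\beta^{(1)}\|_2^2=2sa^2$.
\item The total variation between the two clean conditional laws is $\lesssim \sqrt{n}\,\|X(\beta^{(0)}-\beta^{(1)})\|/\sigma$ per sample. More precisely, the per-observation TV is of order $\sqrt{2C_{2s}\,2s}\,a/\sigma\le o/(2n)$ by the choice of $a$.
\item Use the standard coupling lemma for contamination models (as in \citet{chengaoren18aos}). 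If $\mathrm{TV}(\mathbf P_0,\mathbf P_1)=\epsilon$, there exist $\mathbf Q_0,\mathbf Q_1$ with $(1-\epsilon)\mathbf P_0+\epsilon \mathbf Q_0=(1-\epsilon)\mathbf P_1+\epsilon\mathbf Q_1$.
\item With $\epsilon\le o/(2n)$, the number of contaminated draws is Binomial$(n,\epsilon)$. It exceeds $o$ with probability at most a small constant, by Chernoff; this is where $o\ge 8$ is used.
\item Conditioning on the count being at most $o$ places both mixtures in $\mathcal M(\beta^{(\cdot)},o)$. The two data laws then coincide up to a small-probability event.
\end{itemize}
Any $\widehat S$ must satisfy $|\widehat S\triangle S_0|+|\widehat S\triangle S_1|\ge|S_0\triangle S_1|=2s$. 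Averaging over the two hypotheses gives risk at least $s(1-\text{small})/2\cdot$const, which is $\ge c_2 s$.

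Combining the two regimes covers every $a$ below the stated threshold.

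The main obstacle is regime 1: getting an $\Omega(s)$ expected symmetric difference, rather than just a constant probability of error. I would handle it with the Butucea et al.\ Bayesian reduction. Take a prior with i.i.d.\ Bernoulli$(s/2p)$ coordinates, conditioned to sparsity at most $s$. Lower bound the Bayes Hamming risk coordinatewise by the optimal test between $\beta_j=0$ and $\beta_j=a$ given the others. Finally, use that $a^2 n/\sigma^2\le \log(ep/s)/(32C_{2s})$ keeps each likelihood ratio too small to beat the prior odds $p/s$.
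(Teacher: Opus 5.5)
There is a genuine gap in your final combination step. Your overall architecture is the paper's: the $\sqrt{\log(ep/s)/n}$ term comes from an uncontaminated information-theoretic bound, and the $o/(n\sqrt s)$ term comes from a two-point construction. In that construction the Chen--Gao--Ren coupling gives $(1-\epsilon)\mathbf P_0+\epsilon\mathbf Q_0=(1-\epsilon)\mathbf P_1+\epsilon\mathbf Q_1$, and the $\mathrm{Binomial}(n,\epsilon)$ outlier count is then conditioned on $\{k\le o\}$.

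The claim that ``combining the two regimes covers every $a$ below the stated threshold'' is false as written. The hypothesis $a\le \frac{\sigma}{4\sqrt{2C_{2s}}}(A+B)$ only gives $a\le \frac{\sigma}{2\sqrt{2C_{2s}}}\max\{A,B\}$, which is all your observation $\max\{A,B\}\ge\frac12(A+B)$ yields. You prove the regimes only for $a\le\frac{\sigma}{4\sqrt{2C_{2s}}}A$ and $a\le\frac{\sigma}{4\sqrt{2C_{2s}}}B$ respectively. For $A=B$ and $a$ at the threshold, neither regime applies.

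The fix is to run both regimes at the doubled level $\frac{\sigma}{2\sqrt{2C_{2s}}}$, which is exactly the paper's $\lambda=\sigma o/(n\sqrt{8C_{2s}s})$. Both of your arguments have enough slack, but only if you compute carefully:
\begin{itemize}
\item In Regime 2 with disjoint supports, $\mathrm{KL}\le C_{2s}sa^2/\sigma^2\le o^2/(8n^2)$ per observation. Pinsker then gives $\mathrm{TV}\le o/(4n)$; your bound $\sqrt{2C_{2s}\cdot 2s}\,a/\sigma$ would only give $o/(\sqrt2\, n)$. So $\epsilon\le o/(4n)$, and Bernstein gives $\pi(k>o)\le e^{-9o/16}$, which is tiny for $o\ge 8$.
\item In Regime 1 the swap KL becomes at most $\log(ep/s)/8$, which your coordinatewise Bayes argument still tolerates.
\end{itemize}

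Two further points. First, in Regime 1 your initial sketch does not deliver $\Omega(s)$. Fano over all $\binom ps$ supports bounds only $\mathbf P(\widehat S\neq S)$. Assouad with single swaps fails because the swap KL is of order $\log(ep/s)$ rather than $O(1)$. The Bernoulli-prior, oracle-coordinate argument you end with does work, provided you handle two things:
\begin{itemize}
\item Average over the random $\|X_{\cdot j}\|_2^2$, for example via Markov, since $\mathbf E\|X_{\cdot j}\|_2^2=n\Sigma_{jj}\le nC_{2s}$.
\item Control the prior mass outside $\{\|\eta\|_0\le s\}$. You may assume $|\widehat S|\le 2s$, since otherwise $\widehat S=\emptyset$ does better for every $s$-sparse truth. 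The loss there is then at most $2s+\|\eta\|_0$.
\end{itemize}
Fano on a Varshamov--Gilbert packing with pairwise symmetric difference at least $s/2$ is an alternative. The paper handles this regime only by citing the uncontaminated literature, so your route is more self-contained.

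Second, in Regime 2 your route differs mildly from the paper's and is cleaner. The paper uses half-overlapping supports and bridges the Huber model to the fixed-count model through a Bayes-estimator argument, ending with $s/4-2se^{-9o/16}$. You can instead argue as follows:
\begin{itemize}
\item Each count-conditioned law is a mixture of members of $\mathcal M(\beta^{(\ell)},o)$, so its risk is dominated by the supremum.
\item The two conditioned laws are within TV $2\pi(k>o)$ of each other.
\item Truncate the loss at $2s$ before transferring via TV, since $|\widehat S\triangle S_\ell|$ is unbounded. Truncation preserves $\min(f_0,2s)+\min(f_1,2s)\ge 2s$.
\end{itemize}
This yields risk at least $s\{1-2\pi(k>o)\}$. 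The cost is needing $p\ge 2s$, instead of the paper's $p\ge 3s/2$ with $s$ even.
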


This result shows that no procedure can recover $\text{supp}(\beta^*)$ with $o(s)$ selection error if the signals of $\beta^*$ are relatively weak.
{\color{black} It further demonstrates that the signal strength Assumption \ref{assumption: betamin} is minimax near-optimal (up to logarithm terms) for the variable selection task in a contamination setting. }

\section{Simulation Studies}\label{section:Numerical experiment}

This section presents numerical experiments that complement our theoretical findings.
We set $p = 1000$, $n = 300$, $s=o= 10$, and assume $X$ is generated from the multivariate normal distribution $\mathcal{N}(\mathbf{0},\Sigma)$, where $\Sigma_{ij} = \rho^{|i-j|}$ for $i,j\in[p]$ and $\rho = 0.25$. The true parameter vectors $\beta^*$ and $\theta^*$ have their first 10 entries set to 0.5 and all remaining entries set to 0. 
All simulations in this section are executed with 300 replications. 

Our two-stage AC-IHT algorithm involves two tuning parameters, $\lambda_{\beta,\infty}$ and $\lambda_{\theta,\infty}$, as suggested by Theorems \ref{th1: Initial estimation} and \ref{th2: Signal adaptive estimation} {\color{black}(see Supplementary S3.4 for the initial tuning of $(\lambda_{\beta,0}, \lambda_{\theta,0})$)}. 
Here, we determine their values using a Massart-type information criterion by minimizing the following objective
$$
\left\| Y - X \tilde \beta - \sqrt n \tilde \theta \right\|_2^2 + A (\tilde s \log p + \tilde o \log n),
$$
where $\tilde \beta = \tilde \beta (\lambda_{\beta,\infty}, \lambda_{\theta,\infty})$ and $\tilde \theta = \tilde \theta( \lambda_{\beta,\infty}, \lambda_{\theta,\infty})$ are the estimators obtained with the candidate parameters $(\lambda_{\beta,\infty}, \lambda_{\theta,\infty})$, and $\tilde s = | \text{supp}(\tilde \beta) |$ and $\tilde o = | \text{supp}(\tilde \theta) |$. 
In this section, we set $A=2$ and search for the best pair $(\lambda_{\beta,\infty}, \lambda_{\theta,\infty})$ over the region $(10^{-2},1)\times(10^{-2},1)$. We set the decay rate $\kappa = 0.9$ and the learning rate $\eta = 0.75$.

For benchmarking, we compare our AC-IHT method with several existing robust estimation methods: IHT-${\ell_1}$ \citep{Xia2025AOS}, {\color{black}the Progressive Iterative Quantile-Thresholding (PIQ) estimator \citep{She22outlier}}, the Adaptive Huber (Ada-Huber) estimator \citep{Sun02012020}, AC-LASSO \citep{thompson2020outlier}, and AC-SCAD (which replaces the $\ell_1$ penalty in AC-LASSO with the SCAD penalty). 
In addition, the Oracle estimator (defined in \eqref{oracle definition}) is included as an ideal reference for comparison.

Five metrics are used to evaluate the estimation performance: (i) the $\ell_2$-error $\|\beta - \beta^*\|_2$; (ii) the $\ell_\infty$-error $\|\beta - \beta^*\|_\infty$; 
(iii) the $\Sigma$-norm-error $\|\beta - \beta^*\|_\Sigma = \big\{ (\beta - \beta^*)^\top\Sigma(\beta - \beta^*) \big\}^{1/2}$; (iv) Matthews correlation coefficient (MCC) and (v) Symmetric difference (Sym\_diff). 
Here, MCC and Sym\_diff are defined as
$$
    \text{MCC} = \frac{\text{TP} \times \text{TN} - \text{FP} \times \text{FN}}{\{(\text{TP}+\text{FP})(\text{TP}+\text{FN})(\text{TN}+\text{FP})(\text{TN}+\text{FN})\}^{1/2}},
$$
$$
    \text{Sym\_diff} = \text{FP} + \text{FN},
$$
where $\text{TP} = |\widehat{S} \cap S^*|,\ \text{TN} = |\widehat{S}^c \cap (S^*)^c|,\ \text{FP} = |\widehat{S} \cap (S^*)^c|, \ \text{FN} = |\widehat{S}^c \cap S^*|$ and $\widehat{S}$ is the support index of the estimator. Among these metrics, a smaller value indicates better estimation or variable selection performance for all except MCC. For MCC, values closer to 1 indicate better selection accuracy.

\setcounter{equation}{0}
\subsection{Estimation accuracy and selection consistency}\label{Estimation}
In this subsection, we consider the contamination model with noise term $\xi$ independently generated from the following three distributions, each with unit variance: the standard Gaussian distribution $\mathcal{N}(0,1)$, the Rademacher distribution, and the uniform distribution $\mathcal{U}(-\sqrt{3},\sqrt{3})$.
Table \ref{tab:estimation} shows that AC-IHT achieves competitive performance: 
It yields the lowest estimation errors and the best support recovery, with results close to the Oracle benchmark $\beta^\dag$.
Compared with other methods such as IHT-$\ell_1$, PIQ, and Adaptive Huber, AC-IHT is more stable across different noise settings. 
Additionally, the AC-SCAD estimator shows intermediate performance between AC-IHT and AC-LASSO, consistent with the SCAD penalty being sandwiched between hard-thresholding and soft-thresholding penalties.

\begin{table}[htbp]
\caption{Comparison of estimation accuracy across different methods under contaminated data.}
\label{tab:estimation}
\resizebox{\linewidth}{!}{%
{\renewcommand{\arraystretch}{1.2} 
\begin{tabular}{lccccc} \hline
\textbf{Method} & $\|\beta - \beta^*\|_2$ & $\|\beta - \beta^*\|_\infty$ & $\|\beta - \beta^*\|_{\Sigma}$ & \textbf{MCC} & \textbf{Sym\_diff} \\[3pt] \hline

\multicolumn{6}{c}{\textbf{Gaussian}} \\ \hline
AC-IHT     & 0.222 (0.004)    & 0.140 (0.003) & 0.213 (0.004) & 0.989 (0.001) & 0.243 (0.031)  \\
IHT-$\ell_1$ & 0.489 (0.014)  & 0.347 (0.010) & 0.458 (0.013) & 0.953 (0.003) & 0.890 (0.050) \\
PIQ        & 0.535 (0.010)    & 0.341 (0.007) & 0.522 (0.010) & 0.912 (0.004) & 2.007 (0.091)  \\
Ada-Huber  & 0.531 (0.005)    & 0.276 (0.003) & 0.616 (0.006) & 0.697 (0.003) & 10.760 (0.185)  \\
AC-LASSO  &  0.484 (0.004)    & 0.229 (0.003) & 0.540 (0.005) & 0.556 (0.011) & 38.993 (2.412)  \\
AC-SCAD    & 0.350 (0.005)    & 0.182 (0.004) & 0.333 (0.004) & 0.561 (0.011) & 32.257 (1.383)   \\
Oracle    &  0.196 (0.003)    & 0.118 (0.002) & 0.186 (0.002) & 1.000 (0.000) & 0.000 (0.000)   \\ \hline

\multicolumn{6}{c}{\textbf{Rademacher}} \\ \hline
AC-IHT     &0.214 (0.004)    & 0.136 (0.003) & 0.202 (0.004) & 0.992 (0.001) & 0.170 (0.025)  \\
IHT-$\ell_1$ & 0.799 (0.004) & 0.499 (0.001) & 0.752 (0.004) & 0.899 (0.002) & 1.900 (0.031)   \\
PIQ          & 0.543 (0.010) & 0.346 (0.007) & 0.530 (0.010) & 0.913 (0.004) & 1.987 (0.097)  \\
Ada-Huber    & 0.528 (0.005) & 0.279 (0.003) & 0.607 (0.005) & 0.659 (0.003) & 13.083 (0.181) \\
AC-LASSO  & 0.489 (0.004)    & 0.238 (0.003) & 0.548 (0.004) & 0.588 (0.010) & 32.317 (2.222)    \\
AC-SCAD   & 0.368 (0.004)    & 0.198 (0.004) & 0.345 (0.004) & 0.591 (0.011) & 30.363 (1.650)  \\
Oracle   &  0.197 (0.003)    & 0.121 (0.002) & 0.185 (0.002) & 1.000 (0.000) & 0.000 (0.000)  \\ \hline

\multicolumn{6}{c}{\textbf{Uniform}} \\ \hline
AC-IHT    &  0.211 (0.004)    & 0.135 (0.003) & 0.202 (0.004) & 0.992 (0.001) & 0.173 (0.025)  \\
IHT-$\ell_1$  & 0.680 (0.010) & 0.463 (0.006) & 0.638 (0.009) & 0.919 (0.002) & 1.533 (0.044) \\
PIQ        & 0.534 (0.010)    & 0.341 (0.007) & 0.523 (0.010) & 0.913 (0.004) & 2.003 (0.091)  \\
Ada-Huber  & 0.532 (0.005)    & 0.276 (0.003) & 0.615 (0.006) & 0.669 (0.003) & 12.463 (0.206) \\
AC-LASSO   & 0.487 (0.004)    & 0.234 (0.003) & 0.547 (0.004) & 0.586 (0.010) & 32.340 (2.141)  \\
AC-SCAD    & 0.367 (0.004)    & 0.196 (0.004) & 0.348 (0.004) & 0.580 (0.011) & 30.857 (1.556)  \\
Oracle    &  0.193 (0.003)    & 0.117 (0.002) & 0.183 (0.002) & 1.000 (0.000) & 0.000 (0.000)    \\ \hline
\end{tabular}}
}
\footnotesize
The numbers in parentheses denote the standard errors.
\end{table}

\subsection{Oracle property}\label{Signal adaptive}
This subsection demonstrates that our AC-IHT method exhibits the oracle property.
Under the standard Gaussian distribution, we first consider the convergence to the oracle estimator with increasing sample size $n$.
Figure \ref{pic:Signal adaptive} shows that, as $n$ increases from 200 to 800, AC-IHT performs well and gradually approaches the oracle estimator, illustrating high accuracy and precise support recovery.
This empirical result aligns with the convergence guarantees established in Theorem \ref{th3: Strong Oracle estimation}.
{\color{black}The simulation results illustrating the asymptotic normality and iteration-wise convergence behavior of AC-IHT are provided in Section~S1.2 and S1.4 of the Supplementary Material.}

\begin{figure}[H] 
    \centering
    \includegraphics[width=1\textwidth]{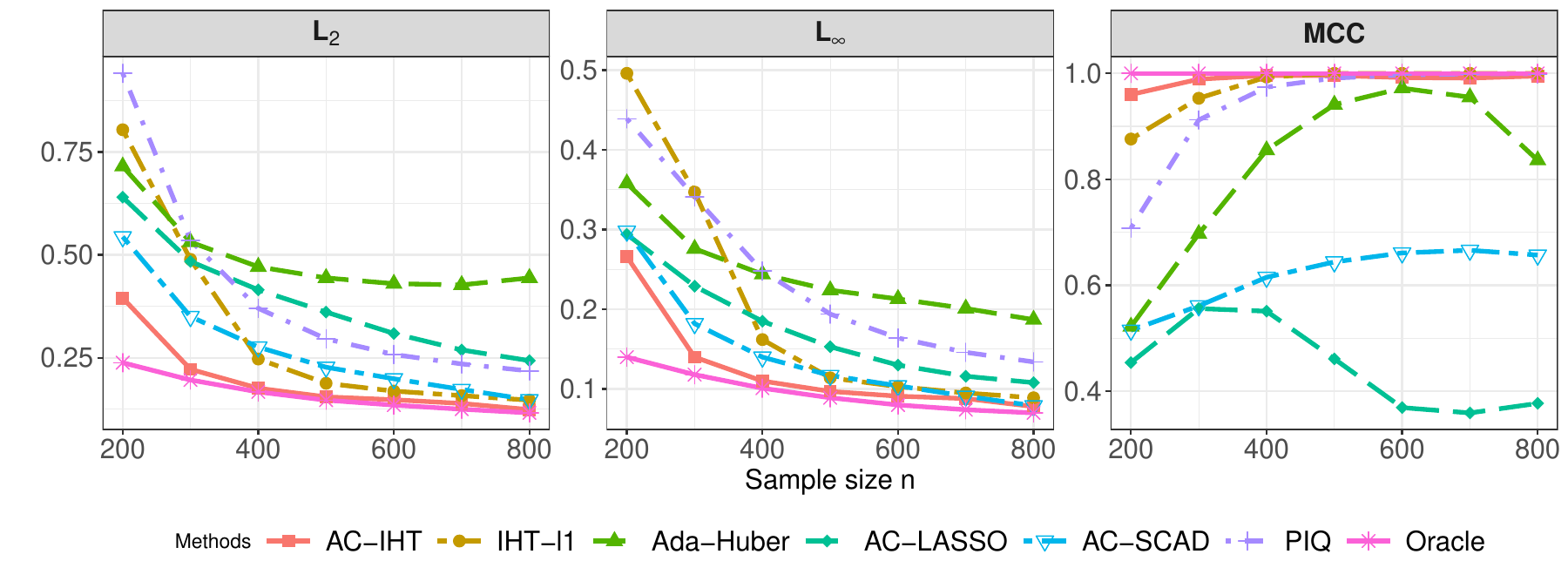}
    \caption{Estimation accuracy and support recovery performance with increasing sample size.}
    \label{pic:Signal adaptive}
\end{figure}

\section{Discussion}\label{section:Discussion}
We conclude by discussing the generalizability of our procedure and several directions for future research.

\subsection{Extension to GLM} 
Here we extend our two-stage AC-IHT algorithm to the Generalized Linear Models (GLMs) setting, thereby allowing for a broader class of response distributions.
Consider a GLM setup with $n$ independent observations $\{(X_{i,\cdot },Y_i)\}^{n}_{i=1}$, where $X_{i,\cdot } \in \mathbb R^{1 \times p}$ is the $i$-th row of $X$. 
The distribution of each $Y_i$ is assumed to follow an exponential family characterized by the natural parameter $\zeta^*_i$ and a scale parameter $a$, with the density function:
\begin{equation}\label{GLM}
    f(Y_i;\zeta^*_i) = \exp\left( \frac{Y_i\zeta^*_i - b(\zeta_i^*)}{a} + c(Y_i,a) \right),
\end{equation}
where $b(\cdot)$ denotes a cumulant function.
We adopt the canonical link function \( b'^{-1} (\cdot) \) and define the linear predictor as \( \zeta_i^* = X_{i,\cdot}  \beta^* + \sqrt{n} \theta_i^* \) under our adversarial contamination setting.
We consider minimizing the negative log-likelihood
$$
-\log \prod_{i\in [n]} f(Y_i;\zeta^*_i) 
\propto  \frac1n \sum_{i \in [n]}\Big\{ b(\zeta_i^*) - Y_i\zeta_i^* - a \times c(Y_i,a) \Big\}.
$$ 
Therefore, only the {\bf gradient update} step in our two-stage AC-IHT algorithm requires modification, while all other steps remain unchanged. The updated gradient step is as follows:
\begin{equation}\label{eq: GLM GD}
\textbf{Step G.1:}\quad
\begin{aligned}
    H^{t+1}_{\beta} &\gets \beta^t - \frac{{\color{black}\eta}}{n}\sum_{i = 1}^{n}X_{i,\cdot}^\top\left(b^{'}(X_{i,\cdot}\beta^t + \sqrt{n}\theta^t_i ) - Y_i\right), \\
    H^{t+1}_{\theta} &\gets \theta^t - \frac{{\color{black}\eta}}{\sqrt{n}}\sum_{i = 1}^{n}e_i \left(b^{'}(X_{i,\cdot}\beta^t + \sqrt{n}\theta^t_i ) - Y_i\right),
\end{aligned}
\end{equation}
where {\color{black}$\eta > 0$ denotes the learning rate,} $e_i \in \mathbb R^{n \times 1}$ is a vector with its $i$-th component equals 1 and all other components are 0.
To implement this variant, we replace {\bf Step~1.1} in Algorithm~\ref{alg1} and Step~2.1 in Algorithm~\ref{alg2} with {\bf Step~G.1} \eqref{eq: GLM GD}.
We introduce the following regularity assumption to establish the theoretical guarantees for the GLM extension. 

\begin{assumption}\label{LU}
The function $b(\cdot)$ is twice-differentiable, and assume that each $\zeta_i^* \in \Theta$, where the parameter space $\Theta \subseteq \mathbb R$ is a closed (finite or infinite) interval.
There exist two constants $0<L \le U < \infty $ such that the function $b'' (\cdot)$ satisfies $L \le \inf_{t \in \Theta} b''(t) \le \sup_{t \in \mathbb R} b''(t) \le U$. 
\end{assumption}
This assumption controls the variance $\text{Var}(Y_i) =ab''(\zeta_i^*)$ and provides uniform strong convexity (on $\Theta$).
It is a standard condition in high-dimensional GLM analyses \citep{abramovich2016model}.

\begin{theorem}[GLM]\label{th6: GLM}
Suppose that Assumptions \ref{assumption: RIP}, \ref{assumption: sample}, and \ref{LU} hold. Let $\tilde{\beta}^{\text{GLM}}$ be the estimator obtained from the two-stage AC-IHT algorithm in the GLM setting. 
For appropriately chosen algorithmic parameters (refer to Supplementary Material S10), it holds with probability at least $1 - \varrho - O(p^{-2} + n^{-3})$ that $\|\tilde{\beta}^{\text{GLM}}\|_0 \lesssim s$. Furthermore, the squared error $\|\tilde{\beta}^{\text{GLM}} - \beta^* \|_2^2$ achieves the same signal-adaptive rate as established in \eqref{eq: SA}.  
\end{theorem}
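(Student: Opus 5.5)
The plan is to reduce the GLM analysis to the linear-model analysis of Theorems \ref{th1: Initial estimation} and \ref{th2: Signal adaptive estimation}. Write the joint parameter as $\omega=(\beta^\top,\theta^\top)^\top$ with augmented design $\tilde X=(X\mid\sqrt n I_n)$. The GLM loss is $\ell(\omega)=n^{-1}\sum_i\{b(\tilde X_{i\cdot}\omega)-Y_i\tilde X_{i\cdot}\omega\}$. By the mean value theorem,
$$\nabla\ell(\omega)-\nabla\ell(\omega^*)=n^{-1}\tilde X^\top D(\omega)\tilde X(\omega-\omega^*),$$
where $D(\omega)$ is diagonal with entries $b''$ evaluated at intermediate points. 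Assumption \ref{LU} gives $D\preceq U I$, and $D\succeq L I$ whenever the intermediate points lie in $\Theta$.

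The score at the truth is $\nabla\ell(\omega^*)=-n^{-1}\tilde X^\top\varepsilon$, where $\varepsilon_i=Y_i-b'(\zeta_i^*)$. The $\varepsilon_i$ are independent, centered, and sub-Gaussian with variance proxy of order $aU$, because the cumulant generating function of $\varepsilon_i$ is $a\{b(\zeta_i^*+\lambda a)-b(\zeta_i^*)\}/a^2-\lambda b'(\zeta_i^*)\le aU\lambda^2/2$. This is the only property of $\xi$ used in the linear proofs, with $\sigma^2$ replaced by $aU$.

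The first step is to combine Proposition \ref{prop: rii} with these bounds to obtain a restricted weighted isometry and incoherence for $n^{-1}\tilde X^\top D\tilde X$ on sparse pairs $(S,O)$ with $|S|\lesssim s$ and $|O|\lesssim o$. The upper bounds follow immediately from $D\preceq UI$. The lower bound uses $D\succeq LI$. The off-diagonal block $n^{-1/2}X_{O,S}^\top D_{O}$ is bounded by $U n^{-1/2}\|X_{O,S}\|_2$ using \eqref{eq: inco}.

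With these constants in place of $(2M)^{\pm1}$, I will rerun the one-step contraction argument from the proof of Theorem \ref{th1: Initial estimation}. The learning rate is now chosen in an interval like $[c/U,\,c'/U]$ and $\kappa$ close enough to $1$ depending on $L/U$, which are the algorithmic parameters referenced in S10. The contraction splits the error into an optimization part, which contracts by a factor $r<1$ on the union of current, next and true supports, and a statistical part, $\max$-norm bounds on $n^{-1}X^\top\varepsilon$ and $n^{-1/2}\varepsilon$, which drive the thresholds $\lambda_{\beta,\infty}$ and $\lambda_{\theta,\infty}$. Induction over $t$ keeps $\|\beta^t\|_0\lesssim s$ and $\|\theta^t\|_0\lesssim o$ by the usual counting argument: any coordinate surviving the threshold outside the true support must carry gradient mass at least $\lambda$. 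The conclusion is the initial bound \eqref{eq: upper1} with $\sigma^2\to aU/L^2$. The second stage then repeats the proof of Theorem \ref{th2: Signal adaptive estimation}. Under Assumption \ref{assumption: betamin} the selected $\beta$-support eventually contains most of $S^*$, and the error on the selected block is controlled by $\|(n^{-1}X^\top\varepsilon)_{S^*}\|_2$ rather than by the threshold. Since this quantity is a quadratic form of a sub-Gaussian vector over a fixed set of size $s$, the Hanson--Wright inequality bounds it by $\sqrt{(s+\log(1/\varrho))/n}$ with probability at least $1-\varrho$, which yields \eqref{eq: SA}.

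The main obstacle is the lower curvature bound. Assumption \ref{LU} only gives $b''\ge L$ on $\Theta$, but the iterates $\tilde X_{i\cdot}\omega^t$ and the mean-value points need not stay in $\Theta$; for the contamination coordinates, $\sqrt n\theta_i^t$ can be large. My first attempt is to show by induction that $\|\omega^t-\omega^*\|_2$ stays small, so that $\|X(\beta^t-\beta^*)\|_\infty\lesssim\sqrt{\log n}\cdot$(small), using sub-Gaussian rows and sparsity. This keeps the linear predictor on uncontaminated rows within a neighborhood of $\Theta$, where curvature is at least $L/2$ by continuity. If $\Theta$ is closed but the neighborhood is not contained in it, I would instead restrict strong convexity to the rows outside $O^t\cup O^*$ and handle contaminated rows through the $\theta$-block. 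There the loss is separable in $\theta_i$, and only upper smoothness $U$ is needed for the contraction, since hard thresholding on $\theta$ only has to absorb those rows. Lemma-level bookkeeping of the extra $o\log n/n$ terms then proceeds as in the linear case.
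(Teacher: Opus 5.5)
Your reduction follows the paper's own route. The paper expands $H^{t+1}_\beta$ and $H^{t+1}_\theta$ by the mean-value theorem with weights $b''(\tilde\zeta_i^t)$. It proves a GLM version of Proposition~1 with isometry constants $L/(2M)$ and $2MU$ and incoherence inflated by $U$. It shows $Y_i-b'(\zeta_i^*)$ is sub-Gaussian with proxy $aU$ from the exponential-family moment generating function. It then reruns the induction of Theorem~1 with $M'=\max(M,M/L,MU)$, $\delta'=4M'^2/(4M'^2+1)$ and $\eta',\kappa'$ chosen as before, and reruns the second-stage argument of Theorem~2 with the same quadratic-form tail bound as in the linear case.

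The obstacle you single out is real, and the paper does not resolve it. Its GLM version of Proposition~1 simply writes $L\le b''(\tilde\zeta_i^t)\le U$ for every row and every $t$. The $\theta$-block of its induction also silently needs $\|\eta\,\mathrm{diag}(b''(\tilde\zeta^t))-I_n\|_2\le\delta'$ on the relevant rows, which is the same lower bound. If you read Assumption~5 that way (curvature at least $L$ at every mean-value point), your plan is complete and coincides with the paper's proof.

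The repair you sketch, however, has a genuine gap.

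\textbf{Upper smoothness does not give contraction in the $\theta$-block.} The induction needs
$\|\theta^{t+1}-\theta^*\|_2\le\sqrt o\,\lambda_{\theta,t+1}+\delta\|\theta^t-\theta^*\|_2+\cdots$ with $\delta<\kappa<1$. This is what makes $\sqrt B+1=2\kappa/(\kappa-\delta)$ and the threshold system feasible. Here $\delta$ is $\max_i|1-\eta b''(\tilde\zeta_i^t)|$ over the relevant rows. With only $0\le b''\le U$ and $\eta\le 1/U$ you get $|1-\eta b''|\le 1$, which is nonexpansiveness, not contraction. So the $\theta$-error bound cannot decay with $\lambda_{\theta,t}$, and through the term $f\|\theta^t-\theta^*\|_2$ neither can the $\beta$-bound. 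Separability does not rescue this: for logistic responses the one-dimensional loss $b(\zeta_i)-Y_i\zeta_i$ in $\theta_i$ has no finite minimizer.

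\textbf{Localizing the predictors near $\Theta$ fails where it is needed.} At $t=0$ the error is $\|\omega^*\|_2$, which is not small. Your induction controls only the $X_{i\cdot}(\beta^t-\beta^*)$ part of $\zeta_i^t-\zeta_i^*$. On the up to $Bo$ false-positive rows $i\in O^t\setminus O^*$ (uncontaminated rows), the $\sqrt n\,\theta_i^t$ part has size at least $\sqrt n\,\lambda_{\theta,\infty}\gtrsim\sqrt{\log n}$. When $b'$ is bounded (again logistic), the secant slope $\{b'(\zeta_i^t)-b'(\zeta_i^*)\}/(\zeta_i^t-\zeta_i^*)$, which is what $b''(\tilde\zeta_i^t)$ stands for, is then $O(1/\sqrt{\log n})$. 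Even on rows outside $O^t\cup O^*$, your bound $|X_{i\cdot}(\beta^t-\beta^*)|\lesssim\sqrt{s\log p+\log n}\,\|\beta^t-\beta^*\|_2$ is $o(1)$ only under an extra sample-size condition of order $n\gg s^2\log^2p$, which Assumption~2 does not provide.

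A rigorous version needs a strengthened hypothesis: for example $\inf_{t\in\mathbb R}b''(t)\ge L$, or an assumption that all iterates' linear predictors, and hence all mean-value points, stay in $\Theta$. That is effectively what the paper's proof uses without stating it.
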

Therefore, in the GLM setting, $\tilde{\beta}^{\mathrm{GLM}}$ maintains both sparsity and signal adaptivity as in \eqref{eq: SA}, establishing the generality of our two-stage AC-IHT algorithm.

{\color{black}
\subsection{Relationship with heavy-tailed regression}
Here, we establish a formal connection between the contaminated model \eqref{adv} and heavy-tailed regression.
Consider the high-dimensional heavy-tailed model
\begin{equation}\label{eq: heavy}
    Y = X \beta^* + \epsilon \in \mathbb R^n,
\end{equation}
where we assume the white noise $\epsilon$ is independent of $X$, and each $\epsilon_i$ is zero-mean and has a bounded $(1+\delta)$-th moment, i.e., $\mathbf E( |\epsilon_i|^{1+\delta}) \le  v_\delta$ for every $i \in [n]$, where $\delta >0$.
For such a heavy-tailed regression, \citet{Sun02012020} established the minimax $\ell_2$ estimation rate as
$$
\begin{cases}
v_\delta^{\frac{1}{1+\delta}} \sqrt s \left( \frac{\log p}{n} \right)^{\frac{\delta}{1+\delta} },  & \text{ if } \delta \le1, \\
v_1^{\frac{1}{2}} \sqrt s \left( \frac{\log p}{n} \right)^{\frac12 },  & \text{ if } \delta >1,
\end{cases}
$$
and achieved this rate via a regularized Huber estimator.

Now we reformulate model \eqref{eq: heavy} as an adversarial contamination model through the following decomposition:
\begin{equation} 
    Y = X \beta^* + \big( \epsilon - \psi_\tau(\epsilon) \big) +  \psi_\tau(\epsilon)=  X \beta^* + \sqrt n \theta^*  + \xi.
\end{equation}
Here, each $\xi_i := \psi_\tau(\epsilon_i), ~i \in [n]$ represents the $\tau$-truncated noise, and $\psi_\tau(\cdot) := \max( \min(\cdot, \tau), -\tau)$ is the truncation operator. 
The remaining term, $\sqrt n \theta^*_i := \epsilon_i - \psi_\tau(\epsilon_i)$, is treated as an outlier component.
Leveraging this decomposition, we apply our AC-IHT algorithm to robust heavy-tailed regression and establish the following theoretical guarantees.

\begin{theorem}[Heavy tailedness]
Consider model \eqref{eq: heavy} and suppose Assumption \ref{assumption: RIP} holds with sample size $n \gtrsim (s + \log n) \log p$.
Run Algorithm~\ref{alg1} and \ref{alg2} with the learning rate $\eta \in \left[\frac{2M}{4M^2 + 1}, \frac{4M}{4M^2 + 1} \right]$, the decay rate $\kappa \in \left(\frac{4M^2}{4M^2 +1}, 1 \right)$, and the initial threshold $\lambda_{\beta,0} > \| \beta^* \|_2/\sqrt{s}$.
Denote by $(\tilde{\beta}, \tilde{\theta})$ the output of Algorithm~\ref{alg2}. Then:  
\begin{enumerate}
\item \textbf{Case $\delta \in (0,1)$.} With $\lambda_{\theta,0} \gtrsim  n^{-1/2} (nv_\delta/\varrho)^{\frac1{1+\delta}}$, $\lambda_{\beta, \infty} = \lambda_\beta \asymp  v_\delta^{\frac{1}{1+\delta}} \left( \frac{\log p}{n} \right)^{\frac{\delta}{1+\delta}} \left( 1+ \sqrt{\frac{\log n}{s}}\right)$, and $\lambda_{\theta, \infty} =  \lambda_\theta \asymp n^{\frac{1-\delta}{2(1+\delta)}} \left( \frac{v_\delta}{\log p} \right)^{\frac{1}{1+\delta}}$, under a probability at least $1 -\varrho-  O(p^{-2})$ we have:
$$
\|\tilde{\beta} \|_0 \lesssim s,
\quad
\|\tilde{\beta} - \beta^* \|_2 \lesssim v_\delta^{\frac{1}{1+\delta}}  \left( \frac{\log p}{n} \right)^{\frac{\delta}{1+\delta}}\sqrt{s+\log n}.
$$

\item \textbf{Case $\delta \ge 1$.} Take $v_1 = \mathbf E(\epsilon_i^2)$. With $\lambda_{\theta,0} \gtrsim \sqrt{v_1/\varrho }$, $\lambda_{\beta, \infty} = \lambda_\beta \asymp  \sqrt{\frac{v_1 \log p}{n} } \left( 1+ \sqrt{\frac{\log n}{s}}\right)$, and $\lambda_{\theta, \infty} =  \lambda_\theta \asymp \sqrt{\frac{v_1}{\log p} }$, under a probability at least $1 -\varrho-  O(p^{-2})$ we have: 
$$
\|\tilde{\beta} \|_0 \lesssim s,
\quad
\|\tilde{\beta} - \beta^* \|_2 \lesssim \sqrt{\frac{v_1 (s+ \log n) \log p}{n} }.
$$
\end{enumerate} 
\end{theorem}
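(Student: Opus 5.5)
The plan is to rewrite the heavy-tailed model as an instance of the contaminated model \eqref{adv} and reuse the deterministic, per-iteration analysis behind Theorems~\ref{th1: Initial estimation} and \ref{th2: Signal adaptive estimation}. Choose the truncation level $\tau$ and set $\xi_i=\psi_\tau(\epsilon_i)$ and $\sqrt n\theta^*_i=\epsilon_i-\psi_\tau(\epsilon_i)$. Two facts about this decomposition are needed.

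First, $\xi$ is bounded by $\tau$, so it is $\tau$-sub-Gaussian around its mean. It is not centered, however: the bias satisfies $|\mathbf E\psi_\tau(\epsilon_i)| = |\mathbf E\{\epsilon_i-\psi_\tau(\epsilon_i)\}| \le v_\delta\tau^{-\delta}$. The effective noise level is then $\sigma_{\rm eff}^2=\mathbf E\psi_\tau(\epsilon_i)^2\le v_\delta\tau^{1-\delta}$ when $\delta<1$, and $\le v_1$ when $\delta\ge1$.

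Second, $\theta^*$ is random and sparse. By Markov's inequality, $\mathbf P(|\epsilon_i|>\tau)\le v_\delta\tau^{-(1+\delta)}$, and a Bernstein/Chernoff bound gives $o=\|\theta^*\|_0\lesssim n v_\delta\tau^{-(1+\delta)}+\log p$ with high probability. Moreover, $\max_i|\epsilon_i|\le (nv_\delta/\varrho)^{1/(1+\delta)}$ with probability $1-\varrho$, which justifies the stated choice of $\lambda_{\theta,0}$, since $\sqrt o\lambda_{\theta,0}\ge\|\theta^*\|_2$.

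Because $\xi$ is independent of $X$ and the contamination $\theta^*$ is arbitrary in \eqref{adv}, the dependence between $\theta^*$ and $\xi$ through $\epsilon$ causes no trouble. The analysis only conditions on the realized $\theta^*$ and uses concentration of $X^\top\xi/n$ and of $\xi$ over sparse index sets. I would verify this by rereading the error decomposition of the iterates. For the $\beta$-block, each iteration contracts up to the stochastic terms $\|(X^\top\xi/n)_S\|$, bounded by $\sigma_{\rm eff}\sqrt{|S|\log p/n}$ plus a bias term from the nonzero mean of $\xi$. That bias term is handled by sub-Gaussian concentration of $X^\top\mathbf 1/n$ over sparse sets, giving $\sqrt{s\log p/n}\,v_\delta\tau^{-\delta}$. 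There is also a cross term through the incoherence bound \eqref{eq: inco}. This yields a bound of the form
$$
\|\tilde\beta-\beta^*\|_2\lesssim \sigma_{\rm eff}\Big(\sqrt{\tfrac{s\log p}{n}}+\tfrac{o\log n}{n}\Big)+\text{bias}.
$$

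Next comes the choice of $\tau$. Set $\tau\asymp (nv_\delta/\log p)^{1/(1+\delta)}$ when $\delta<1$. Then $o\lesssim \log p$ in the worst case; in the accounting the bound reads $o\lesssim \log p+\log n$. Also $\sigma_{\rm eff}\sqrt{\log p/n}\asymp v_\delta^{1/(1+\delta)}(\log p/n)^{\delta/(1+\delta)}$, and the bias $v_\delta\tau^{-\delta}$ is of the same order. Substituting, the $\sqrt{s}$ term gives the main rate. The term $o\log n/n$, after multiplication by $\sigma_{\rm eff}$, contributes the extra $\sqrt{\log n}$; more precisely, the incoherence contribution $\sqrt{s\log p+o\log n}$ produces the factor $\sqrt{s+\log n}$. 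This matches the stated $\lambda_\beta$ (with its factor $1+\sqrt{\log n/s}$) and $\lambda_\theta\asymp\sigma_{\rm eff}\sqrt{\log n/n}\cdot(\cdots)$. Here one checks that $\lambda_\theta$ equals the stated $n^{(1-\delta)/(2(1+\delta))}(v_\delta/\log p)^{1/(1+\delta)}$, which equals $\tau/\sqrt n$ up to a $\log$ ratio. For $\delta\ge1$, take $\tau\asymp\sqrt{v_1 n/\log p}$, and the same computation applies with $\sigma_{\rm eff}^2\le v_1$. No signal condition is invoked, so only the ``otherwise'' branch of \eqref{eq: SA} is used, while the $\ell_0$-sparsity $\|\tilde\beta\|_0\lesssim s$ carries over directly. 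The sample-size condition $n\gtrsim (s+\log n)\log p$ is exactly Assumption~\ref{assumption: sample} with $o\lesssim\log n+\log p$.

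The main obstacle is the nonzero mean of the truncated noise: Theorems~\ref{th1: Initial estimation}–\ref{th2: Signal adaptive estimation} assume a centered $\xi$. I would first try to absorb the mean $\mu=\mathbf E\xi$ into the stochastic term and bound it via $\sup_{|S|\lesssim s}\|X_{\cdot,S}^\top\mathbf 1\|_2/n\lesssim\sqrt{s\log p/n}$. If this fails, the fallback is to move $\mu$ into $\sqrt n\theta^*$, but a dense shift is not sparse and the fallback does not work. The workaround is to treat $\mu$ as deterministic bias of size $\le v_\delta\tau^{-\delta}$ per coordinate, whose effect on the $\theta$-thresholding step stays below $\lambda_\theta$ when divided by $\sqrt n$. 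A secondary difficulty is tracking the $\theta$-threshold so that truncated entries, which are below $\tau$, are not spuriously selected. This requires $\lambda_\theta\gtrsim\tau\sqrt{\log n}/\sqrt n$-type control.
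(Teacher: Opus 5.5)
Your overall architecture matches the paper's proof. You use the same truncation decomposition, a Bernstein bound $o \lesssim n v_\delta \tau^{-(1+\delta)}$, and a Markov/union bound on $\|\epsilon\|_\infty$ to justify $\lambda_{\theta,0}$. You take $\tau \asymp (n v_\delta/\log p)^{1/(1+\delta)}$ so that $o \lesssim \log p$, and rerun the deterministic induction of Theorems 1--2 (``otherwise'' branch only) on a replacement good event.

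The gap is in how the truncated noise enters the two blocks. In the $\theta$-block the induction only uses $\|\xi\|_\infty/\sqrt n$, and the right bound is the deterministic one, $|\xi_i| \le \tau$. It is essentially attained: clipped coordinates equal $\pm\tau$, and order-$\log p$ untruncated $|\epsilon_i|$ can be comparable to $\tau$. So the threshold must be $\lambda_\theta \asymp \tau/\sqrt n$, which is exactly the stated value.

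This is not ``$\sigma_{\mathrm{eff}}\sqrt{\log n/n}$ up to a log ratio''. Since $\tau^2/(\sigma_{\mathrm{eff}}^2 \log n) \ge \tau^{1+\delta}/(v_\delta \log n) \asymp n/(\log p \log n)$, the two differ by a polynomial factor. A threshold at the $\sigma_{\mathrm{eff}}$ scale would break the sparsity step for $\theta$.

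Consequently your displayed bound $\sigma_{\mathrm{eff}}(\sqrt{s\log p/n} + o\log n/n)$ is not what the analysis yields. Its second term is $\lesssim v_\delta \tau^{-\delta} \log n \sqrt{\log p/n}$, which cannot produce the $\sqrt{\log n}$ factor. That factor comes from the cross term $f\sqrt{o}\,\lambda_\theta$, with $f \asymp \sqrt{(s\log p + o\log n)/n}$ and $\lambda_\theta \asymp \tau/\sqrt n$. Using $o \lesssim n v_\delta \tau^{-(1+\delta)}$, this term is at most $\sqrt{s v_\delta \tau^{1-\delta}\log p/n} + v_\delta \tau^{-\delta}\sqrt{\log n}$. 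That is $\sqrt{s+\log n}$ times the rate $v_\delta\tau^{-\delta} \asymp v_\delta^{1/(1+\delta)}(\log p/n)^{\delta/(1+\delta)}$, and it yields the stated $\lambda_\beta$ with its factor $1+\sqrt{\log n/s}$.

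In the $\beta$-block you also cannot simply plug $\sigma_{\mathrm{eff}}$ into the sub-Gaussian bounds used for Theorems 1--2. The truncated noise is only $\tau$-sub-Gaussian, and using $\sigma=\tau$ gives $\tau\sqrt{s\log p/n}$, which diverges like $n^{(1-\delta)/(2(1+\delta))}$ when $\delta<1$. What is needed is the paper's Bernstein argument. From $\mathbf E|\xi_i|^k \le v_\delta \tau^{k-1-\delta}$ and the sub-Gaussian moments of $X_{ij}$, one gets $\|X^\top \xi/n\|_\infty \lesssim v_\delta\tau^{-\delta} + \tau\log p/n$. At the chosen $\tau$, the range term is of the same order as the variance term.

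Finally, the nonzero mean you call the main obstacle is not one. $X$ is centered and independent of $\epsilon$, so $\mathbf E X_{ij}\xi_i = 0$ whatever $\mathbf E\xi_i$ is. In the $\theta$-block only $|\xi_i| \le \tau$ is used, which ignores the mean. Your $X_{\cdot,S}^\top \mathbf 1$ workaround is valid but unnecessary. The extra $\sqrt{\log n}$ in ``$\lambda_\theta \gtrsim \tau\sqrt{\log n}/\sqrt n$'' is likewise not needed, and it would contradict the stated $\lambda_\theta$.
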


The above result confirms that our two-stage AC-IHT algorithm remains minimax near-optimal (up to a $\log n$ term) under heavy-tailed settings, demonstrating the generality of the proposed procedure. 
Numerical simulations of our algorithm under heavy-tailed noise are presented in Supplementary Material S1.3, and are consistent with the theoretical result.
Moreover, this theorem shows that heavy-tailed regression could be connected to the adversarial contamination framework through a truncation-based decomposition, thereby revealing the broader theoretical and practical significance of the contamination framework \eqref{adv}. 
}

\subsection{ {\color{black}Limitations and future directions } }

This paper proposes an algorithmic regularization procedure that preserves signal adaptivity and achieves the strong oracle property. 
In our theoretical analysis, we explicitly balance optimization and statistical errors,
yielding a computationally efficient procedure with near-optimal guarantees up to logarithmic factors. 
{\color{black}However, several limitations remain, including the lack of theoretical guarantees for the adaptive tuning (used in Section \ref{section:Numerical experiment}), 
the remaining logarithmic gap between the upper and lower bounds, 
and the restriction to sub-Gaussian designs for $X$ (discussed in Remark \ref{remark: subgaussian}). 
These theoretical improvements are left for future investigation.
}

\bibliographystyle{unsrtnat} 
\bibliography{sample.bib}

\clearpage

\begin{appendix}
\section{Additional Simulations}\label{supp: simulation}

In this section, we present additional simulation results that are not included in the main text due to space limitations. These supplementary results provide further support for the theoretical findings reported in the paper.
\subsection{Varying the level of sparsity and contamination}
We fix $p=1000,~ n=600$, vary sparsity level $s$ from 7 to 35 and contamination level $o$ from 12 to 120.
All non-zero signals take a value of 1.
The $\ell_2$ estimation errors of $\beta^*$ are shown in Figure \ref{fig: toy}.

\begin{figure}[htbp]
    \centering
    \includegraphics[width=0.7\textwidth]{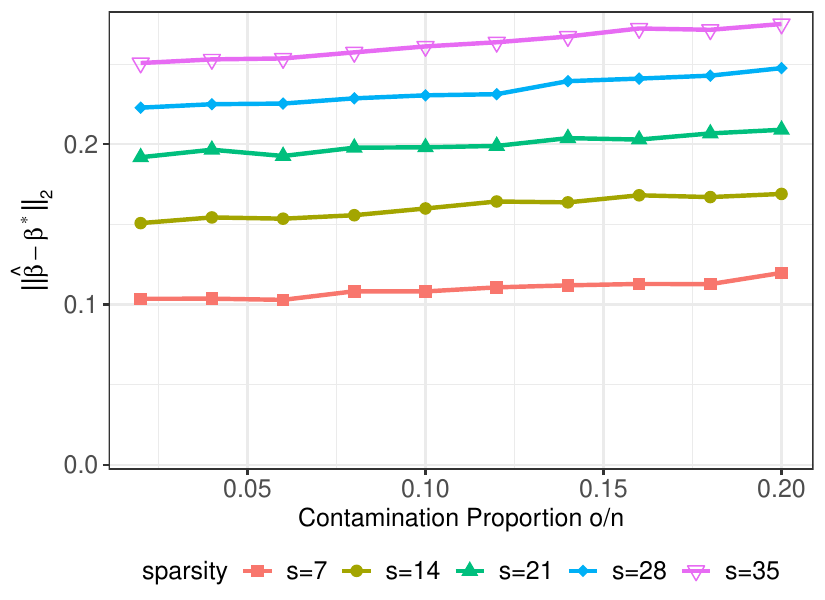}
    \caption{The $\ell_2$ estimation error of the two-stage AC-IHT algorithm with varying $s$ and $o$ under 300 replications.}
    \label{fig: toy}
\end{figure} 

As illustrated in Figure \ref{fig: toy}, the $\ell_2$ estimation error $\|\tilde\beta - \beta^*\|_2$ grows linearly with the contamination proportion $o/n$, yet it grows only on the order of $\sqrt{s}$ of the sparsity level $s$.  
This empirical behavior matches our theoretical guarantees (summarized in Table 1 in our manuscript): 
No matter in which signal cases, for fixed $o$, the $\ell_2$ error bound increases linearly in $\sqrt{s}$; for fixed $s$, it increases linearly in $o$.
In particular, when $o/n \le 1/5$, the $\ell_2$ error bound in Theorem 3 can be rewritten as
$$
\|\tilde\beta - \beta^*\|_2 
\;\asymp\; \sigma\sqrt{\frac{s}{\,n - o\,}}
\;\asymp\; \sigma\sqrt{\frac{s}{n}} \left(1 + \frac on \right),
$$
which explains the linear relationship with $o$.

\subsection{Asymptotic normality}
We then consider the asymptotic normality of the estimator obtained from two-stage AC-IHT by constructing the z-score based on Corollary~2 with 300 replications.
Specifically, we set the 5th and 6th elements of $\gamma$ to 1 and all others to 0. 
We assess the asymptotic normality of AC-IHT, IHT-$\ell_1$, and AC-SCAD by using histograms, Q-Q plots, and the $R^2$ values from the no-intercept linear fit of these Q-Q plots.

As illustrated in Figure~\ref{pic:Section_2 n=300,(5,6).pdf}, AC-IHT exhibits the best asymptotic normality performance: Its histogram closely aligns with the normal density curve, and the points in its Q-Q plot lie almost perfectly along the diagonal, with an $R^2 = 0.9891$.
In contrast, IHT-$\ell_1$ and AC-SCAD show noticeable deviations and lower $R^2$ values.
\begin{figure}[H]
    \centering
    \includegraphics[width=0.8\textwidth]{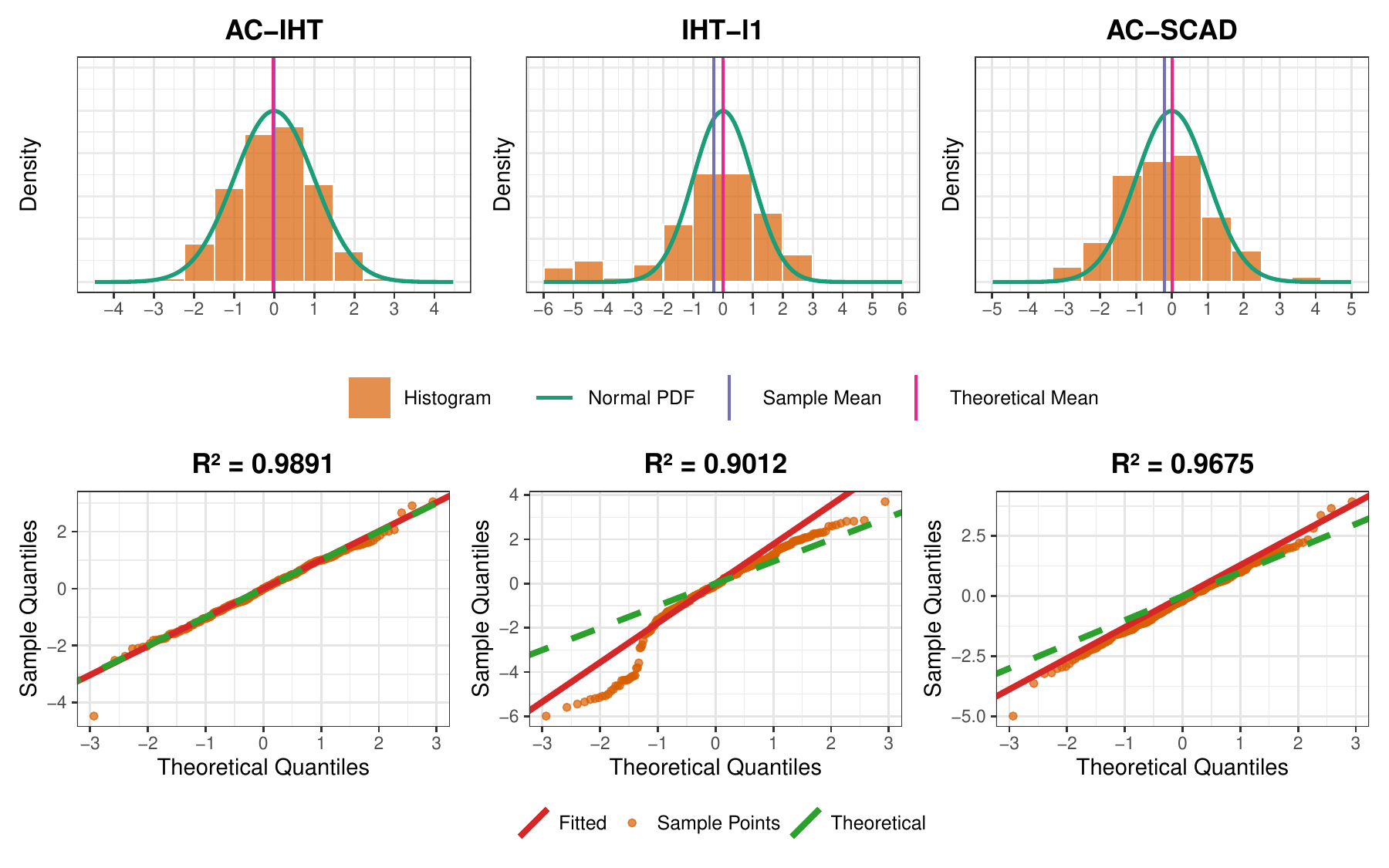}
    \caption{Comparison of asymptotic normality across AC-IHT, IHT-$\ell_1$, and AC-SCAD methods.}
    \label{pic:Section_2 n=300,(5,6).pdf}
\end{figure}

\subsection{Heavy tailedness} 
Following the theoretical extension in the discussion section,  we conduct supplementary experiments under heavy-tailed noise settings. 
Specifically, we generate i.i.d. white noise from Student's t-distributions with 2 and 3 degrees of freedom and assume no adversarial contamination, while keeping all other model configurations identical to those in Section 4.
And we include the Oracle Huber estimator as an ideal reference, which is the Huber estimator with the known support set.
Results over 100 replications are reported in Table \ref{tab: heavy}, with standard errors shown in parentheses. 
The findings demonstrate that our AC-IHT method achieves favorable performance in both estimation accuracy and support recovery, which aligns with its minimax near-optimal guarantee in Theorem 7.

\begin{table}[htbp]
\caption{Comparison of estimation accuracy under heavy-tailed noise.}
\label{tab: heavy}
\resizebox{\linewidth}{!}{%
{\renewcommand{\arraystretch}{1.2}
\begin{tabular}{lccccc} \hline
\textbf{Method} & $\|\beta - \beta^*\|_2$ & $\|\beta - \beta^*\|_\infty$ & $\|\beta - \beta^*\|_{\Sigma}$ & \textbf{MCC} & \textbf{Sym\_diff} \\[3pt] \hline
\multicolumn{6}{c}{\textbf{t(df=2)}} \\ \hline
AC-IHT       & 0.610 (0.027) & 0.378 (0.014)          & 0.569 (0.024)          & 0.907 (0.009) & 1.940 (0.193)  \\
IHT-$\ell_1$ & 0.757 (0.013) & 0.494 (0.005)          & 0.713 (0.014)          & 0.895 (0.005) & 1.970 (0.083)  \\
Ada-Huber     & 0.683 (0.011) & 0.354 (0.007)         & 0.770 (0.012)          & 0.563 (0.006) & 21.440 (0.549) \\
Oracle Huber & 0.269 (0.007) & 0.162 (0.005)          & 0.254 (0.006)          & 1.000 (0.000) & 0.000 (0.000) \\ \hline
\multicolumn{6}{c}{\textbf{t(df=3)}} \\ \hline
AC-IHT       & 0.435 (0.020) & 0.301 (0.016)          & 0.403 (0.018)            & 0.959 (0.005) & 0.850 (0.107)  \\
IHT-$\ell_1$ & 0.624 (0.023) & 0.416 (0.015)          & 0.578 (0.021)            & 0.926 (0.005) & 1.400 (0.096)  \\
Ada-Huber    & 0.598 (0.010) & 0.310 (0.006)          & 0.683 (0.011)            & 0.632 (0.005) & 15.120 (0.409) \\
Oracle Huber & 0.252 (0.006) & 0.153 (0.004)          & 0.235 (0.005)            & 1.000 (0.000) & 0.000 (0.000)  \\ \hline
\end{tabular}}
}
\end{table}

\subsection{Dynamic of convergence}
In this subsection, we examine the iteration-wise convergence behavior of AC-IHT under varying sample sizes and contamination levels. Specifically, we consider sample sizes $n\in\{300, 500, 700\}$ and numbers of contaminated observations $o\in\{10, 30\}$, while keeping all other simulation settings the same as in the beginning of Section~4. Estimation accuracy is assessed using the $\ell_2$ error ($L_2$) and the $\ell_\infty$ error ($L_{\max}$), and support recovery accuracy is evaluated by the Matthews Correlation Coefficient (MCC). The corresponding iteration trajectories are displayed in Figure \ref{pic:Convergence dynamic}.

Figure \ref{pic:Convergence dynamic} shows a consistent convergence pattern across all configurations. 
The errors $L_2$ and $L_{\max}$ decrease during the initial iterations and then level off after about 20 iterations, whereas MCC rises toward one after about 15 iterations and remains stable thereafter, indicating reliable support recovery.
Moreover, larger sample sizes improve estimation and support recovery performance, as evidenced by smaller $L_2/L_{\max}$ and higher MCC. When the contamination level increases from $o=10$ to $o=30$, convergence becomes slightly slower, and the terminal errors are mildly larger, particularly for smaller $n$, which aligns with our theoretical results.

\begin{figure}[H]
\centering
\includegraphics[width=1\textwidth]{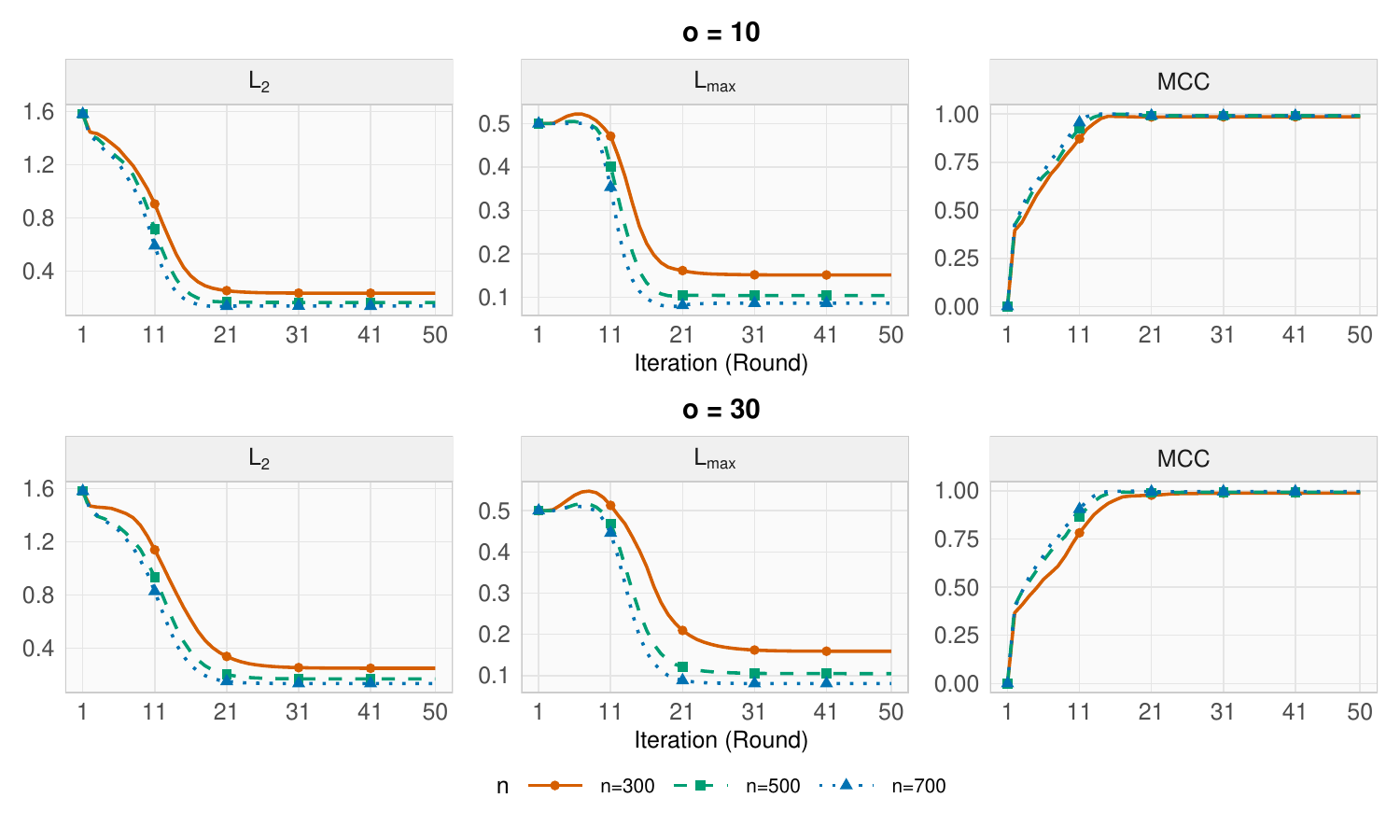}
\caption{Convergence dynamics of AC-IHT over iterations for different values of $n$ and $o$. Each $(o,n)$ setting undergoes 100 repeated simulations.}
\label{pic:Convergence dynamic}
\end{figure}

\subsection{Relationship between signal strength and sparsity.}
Assumption 4 indicates that the required signal strength (for \(\theta^*\)) increases as \(o\) decreases. The following simulation validates this phenomenon.
Following the simulation settings in Section 4, we fix the non-zero entries of \(\theta^*\) at 0.2, and examine the support recovery of \(\theta^*\) as \(o\) increases under standard Gaussian noise. 
The simulation result in Figure \ref{fig: increase o} (with 100 replications) shows that as \(o(= \| \theta^*\|_0) \) increases, the MCC (Matthews Correlation Coefficient) rises and the averaged Hamming loss \(\left(\frac{\left | \text{supp}(\theta^*) \triangle \text{supp}(\hat{\theta})\right |}{o}\right)\) decreases.
This indicates that a larger \(o\) makes the ``theta-min'' condition easier to satisfy, yielding progressively more accurate outlier identification. 
And this result aligns with the relationship between $o$ and the signal strength in Assumption 4.

\begin{figure}
    \centering
    \includegraphics[width=0.8\linewidth]{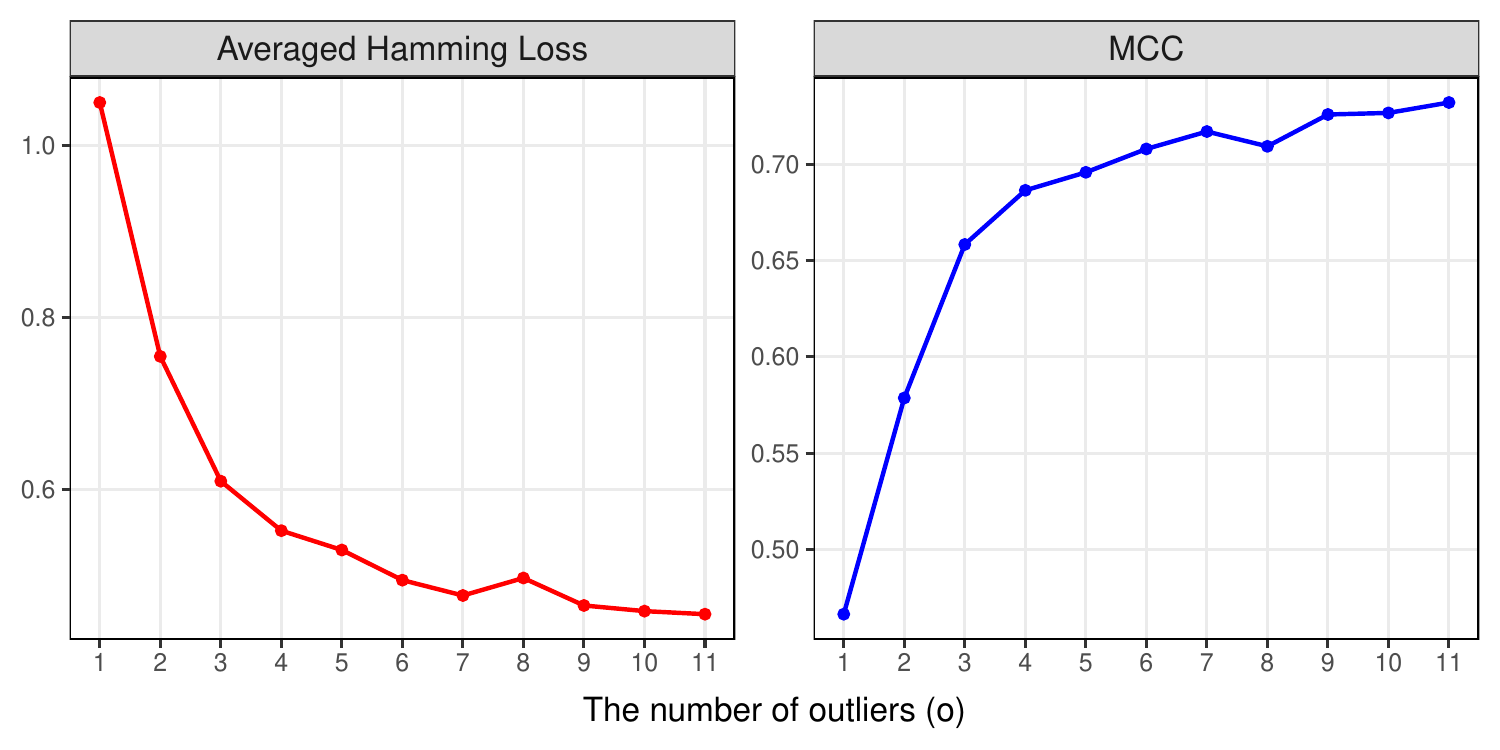}
    \caption{Support recovery performance with increasing $o$.}
    \label{fig: increase o}
\end{figure}

\section{Proof of Proposition 1}\label{sec: proof of Prop1}
Throughout this proof, \(C_1>0\) is treated as a fixed constant, and Assumption~2 is used in the concrete form stated in Proposition~1.
We first prove the restricted isometry of \(X\).
For a fixed set \(S\subset[p]\) with \(|S|=C_1s\), by Remark 5.40 in \cite{vershynin2010introduction}, we obtain
\begin{equation}\label{eq: spec norm}
\mathbf P \left( \left\|\frac1n X_{\cdot S}^\top X_{\cdot S}- \Sigma_{SS} \right\|_2 > \max(\iota , \iota^2)  \right) \le 2e^{- u},
\end{equation}
where
\(
\iota =
\left(C \vee \frac1{\sqrt c}\right)
\left(
\sqrt{\frac{C_1s}{n}}
+
\sqrt{\frac{u}{n}}
\right),
\)
and \(C,c\) are constants depending only on \(\|\Sigma\|_2\). Let
\(
C_\Sigma := C^2 \vee \frac1c \vee 1 .
\)
By taking \(u=3C_1s\log p\) and using
\(
n\ge 30M^2C_1C_\Sigma \max(s\log p,o\log n),
\)
we have \(\iota\vee\iota^2=\iota\le 1/(2M)\), which yields that
\begin{equation}\label{eq: X Sigma}
\begin{aligned}
&\mathbf P \left( \sup_{S\subset [p]:~ |S| \le C_1 s} \left\|\frac1n X_{\cdot S}^\top X_{\cdot S}- \Sigma_{SS} \right\|_2 > \frac1{2M}\right)\\
\le& \sum_{ S\subset [p]:~ |S| = C_1 s} \mathbf P \left( \left\|\frac1n X_{\cdot S}^\top X_{\cdot S}- \Sigma_{SS} \right\|_2 > \max(\iota , \iota^2)  \right)\\
\le& 2\binom p{ C_1 s} \exp\left(- 3C_1 s\log p\right)\\
\le&  2 \exp\left( -2C_1 s\log p \right).
\end{aligned}
\end{equation}
By Weyl's inequality, with probability at least $1 - 2 \exp(-2C_1 s \log p)$ we have that,
\begin{equation}\label{eq: weyl}
\max_{S\subset [p]:~ |S| \le C_1 s}~ \max_{1\le k \le |S|}~ \left| \Lambda_k\left( \frac1n X_{\cdot S}^\top X_{\cdot S}\right) - \Lambda_k (\Sigma_{SS})\right| \le \frac{1}{2M}.
\end{equation}
Consequently, the restricted isometry property follows immediately from this uniform eigenvalue bound and the definition of eigenvalue.

We then prove the restricted incoherence of $X$.
Similar to \eqref{eq: spec norm}, for fixed sets $S\subset[p]$ with $|S|= C_1 s$ and $O \subset [n]$ with $|O |=C_1 o $, we have
$$
\mathbf P \left( \left\|\frac1{C_1 o}  X^\top_{O, S} X_{O, S} - \Sigma_{SS}  \right\|_2 >\max(\tau , \tau^2)  \right) \le 2e^{-v},
$$
where $\tau = \left(C \vee \frac1{\sqrt c}\right) \cdot \left(\sqrt{\frac{ s}o } +\sqrt{\frac{v}{C_1 o} } \right)$.
By setting $v =3C_1 s\log p+3C_1 o \log n$ and applying a union bound, we obtain 
\begin{equation*} 
\begin{aligned}
&\mathbf P \left\{  \sup_{O \subset [n]: ~|O| \le C_1 o }~ \sup_{S\subset[p]:~ |S| \le C_1 s} \left\|\frac1{C_1 o}  X^\top_{O, S} X_{O, S} - \Sigma_{SS}  \right\|_2 >\max(\tau , \tau^2)  \right\} \\
\le&  \binom{p}{C_1 s} \binom{n}{C_1 o } 2e^{-v}\\
\le& 2 e^{-2 C_1 s \log p}.
\end{aligned}
\end{equation*}
Consequently, for any $S \subset [p]$ with $|S| \le C_1 s$ and $O \subset [n]$ with $|O| \le C_1 o$, it holds that 
\begin{equation}\label{eq: inco induction}
\begin{aligned}
\|X_{O, S} \|_2  =&\sqrt{\left\| X^\top_{O,S} X_{O,S} \right\|_2 }\\
\le & \sqrt{C_1 o \| \Sigma_{SS}\|_2 + C_1 o (\tau + \tau^2 )  }\\
\le&
\sqrt{
C_1 o M
+C_1 C_\Sigma
\left( \sqrt{so}
+ \sqrt{\frac{vo}{C_1}}
+2s+ \frac{2v}{C_1}
\right)
}\\
\le&
\sqrt{
C_1 o M
+ 10 C_1 C_\Sigma
\left(s \log p + o \log n\right)
}\\
\le& \sqrt{C_1} C_M\cdot \sqrt{ s \log p+ o \log n} 
\end{aligned}
\end{equation}
with probability at least \(1-2\exp(-2C_1s\log p)\).
Here \(C_M:=\sqrt{M+ 10C_\Sigma}>1\) is a constant depending only on \(M\).
Therefore, we prove the restricted incoherence property and complete the proof of Proposition 1.

\begin{remark}[Incoherence condition]
The restricted incoherence condition \eqref{eq: inco induction} plays a pivotal role in our theoretical analysis: it controls how adversarial contamination distorts the estimation of $\beta^*$, and enables us to achieve a sharper estimation accuracy than joint estimation.   
Existing incoherence conditions in the literature, such as Definition 1.(ii) in \cite{dalalyan19nips}, and equations (2.11)-(2.12) in \cite{Ndaoud24slope}, are often tied to the specific penalty forms of Lasso or Slope, and are relatively intricate.
In contrast, by leveraging the $\ell_0$ structure, we introduce a more concise and intuitive incoherence condition, and prove that it holds with high probability under sub-Gaussian designs.
\end{remark}

\section{Proof of Theorem 1}\label{supp: Th1}
We introduce some definitions that will be used in the following proofs. 
Define 
$$\Phi :=  \frac{ \eta }n  X^{\top} X - I_p \in \mathbb R^{p\times p}, \quad ~\Xi := \frac1n X^{\top}\xi \in \mathbb R^{p}.
$$
For ease of display, we assume $\sigma=1$ in the main proof.

Three parts constitute the proof of Theorem 1.
We first introduce some useful preliminaries, and then prove the sparsity and error bounds by using mathematical induction.
Finally, we verify the validity of our proposed threshold sequence.

\subsection{Preliminary}\label{sec: th1 pre}
Define the event
\begin{equation}\label{eq: good event of Th1}
\mathcal E: = \left\{  \left\| \Xi  \right\|_\infty < 4\sqrt{\frac{M\log p}{n}} , \text{ and }  \left\| \frac1{\sqrt n} \xi  \right\|_\infty < 3\sqrt{\frac{ \log n}{n} }\right\},
\end{equation}

and
$$
\mathcal E_X:= \begin{Bmatrix}
\frac1{2M} \le \Lambda_k\left( \frac1n X_{\cdot S}^\top X_{\cdot S}\right) \le 2M, 
\text{ for every }S \subset [p]:~ |S| \le (2B+1)s \text{ and } 1\le k \le |S|,  \\
\sup_{S\subset[p]:~ |S| \le (B+1) s} ~
\sup_{O\subset[n]:~ |O| \le (B+1)o}
\left\| X_{O,S} \right\|_2 \le {(\sqrt B+1)} f \sqrt n 
\end{Bmatrix},
$$
where
\begin{equation}\label{eq: B delta}
B:= \left(\frac{\kappa+\delta}{\kappa- \delta}\right)^2>1, \quad \delta := \frac{4M^2}{4M^2 +1}\in (0,1), \quad f:= C_M \sqrt{\frac{s \log p + o \log n}n}.
\end{equation}
Throughout the proof of Theorem~1, Assumption~2 is used in the concrete form
\[
n\ge C_{\mathrm{Th1}}\{s\log p+o\log n\},
\]
where \(C_{\mathrm{Th1}}\) is a fixed constant depending only on \(M,\eta,\kappa\), chosen large enough so that
\[
C_{\mathrm{Th1}}
\ge
\max\left\{
30M^2(2B+1)C_\Sigma,~
\frac{32\eta^2\kappa^2C_M^2}{(\kappa-\delta)^4},~
4MC_M^2
\right\}.
\]

Therefore, with $n \ge 30M^2(2B+1)C_\Sigma \{s\log p+o\log n\}$, we 
confirm that 
the restricted isometry part of \(\mathcal E_X\) follows from Proposition~1 with \(C_1=2B+1\), 
and the restricted incoherence part follows from Proposition~1 with \(C_1=B+1\).
Since $ B=\left(\frac{\kappa+\delta}{\kappa-\delta}\right)^2$ is fixed once \(M\) and \(\kappa \in(\delta,1)\) are fixed, these choices of \(C_1\) are admissible under Proposition~1, hence we have $\mathbf P(\mathcal E_X) \ge 1- 4p^{-2(1+B)s}$.

Similarly, with $n\ge 30M^2C_\Sigma\max(s\log p,o\log n)$, by Lemma~\ref{lemma: supp error} we have $\mathbf P(\mathcal E) \ge 1-2p^{-2s} - 2p^{-3}- 2n^{-3}$. And we conclude that
\[
\mathbf P(\mathcal E\cap\mathcal E_X)
\ge
1-O(p^{-2}+n^{-3}).
\]
The proof of Theorem~1 is then based on the event \(\mathcal E\cap\mathcal E_X\).

In the iteration algorithm, we take the learning rate $\eta \in \left[\frac{2M}{4M^2 + 1}, \frac{4M}{4M^2 + 1} \right]$ and the decay rate $\kappa \in (\delta,1)$. And we use the thresholds 
\begin{equation}\label{eq: lambda infty}
\begin{aligned}
\lambda_{\beta, \infty} &= \frac{16\eta \kappa }{\kappa- \delta} \sqrt{\frac{M \log p}{n}} + \frac{48\eta^2 \kappa^2 C_M }{(\kappa- \delta)^3} \sqrt{\frac{s\log p+ o\log n}{ns}} \sqrt{\frac{o \log n}{n}} \\
& {\asymp  \sqrt{\frac{\log p}{n}} +  {\frac{o \log n}{n\sqrt s}} }, \\
\lambda_{\theta, \infty} &= \frac{12\eta \kappa }{\kappa- \delta} \sqrt{\frac{\log n}{n}} + \frac{64\eta^2 \kappa^2 C_M \sqrt M }{(\kappa- \delta)^3} \sqrt{\frac{s\log p+ o\log n}{no}} \sqrt{\frac{s \log p}{n}}\\
& {\asymp  \sqrt{\frac{\log n}{n}} +  {\frac{s \log p}{n\sqrt o}} }.
\end{aligned}
\end{equation}
For the given thresholds $\lambda_{\beta,\infty}$ and $\lambda_{\theta,\infty}$ in \eqref{eq: lambda infty}, we can always choose sufficiently large initial thresholds $\lambda_{\beta,0}~ (>\lambda_{\beta,\infty})$ and $\lambda_{\theta,0}~(>\lambda_{\theta,\infty})$ satisfying 
\begin{equation}\label{eq: lambda0s}
    \sqrt s \lambda_{\beta,0} > \|\beta^* \|_2, \quad \sqrt o \lambda_{\theta,0} > \|\theta^* \|_2, \quad
 \frac{\lambda_{\beta,0}}{\lambda_{\theta,0}} 
  =\frac{\lambda_{\beta,\infty}}{\lambda_{\theta,\infty}}.
\end{equation}
The detailed threshold initialization is provided in Section \ref{sec: th1 initial}, where we utilize the conditions of Theorem 1 and the high-probability restricted incoherence property from Proposition 1 to establish a feasible range for the initial thresholds.

Additionally, by the decomposition, we define 
\begin{equation}\label{eq: decomposition}
\begin{aligned}
H_{\beta}^{t+1} :=& {\beta}^t + \frac{\eta}{n}X^{\top}(Y - X {\beta}^t - \sqrt{n} {\theta}^t) \\
= &\beta^* + \Phi(\beta^* - \beta^t) + \frac{\eta}{\sqrt n} X^\top (\theta^* - \theta^t) + \eta\Xi \in \mathbb R^p,\\
H_{\theta}^{t+1} :=& \theta^t + \frac{\eta}{\sqrt{n}}(Y - X\beta^t - \sqrt{n}\theta^t)\\
=& \theta^* + (\eta-1) (\theta^* - \theta^t) + \frac{\eta}{\sqrt n} X (\beta^* - \beta^t) + \frac{\eta}{\sqrt n} \xi \in \mathbb R^n.
\end{aligned}
\end{equation}

\subsection{Mathematical induction}\label{sec: MI}
Recall $S^* = supp(\beta^*)$ and $O^* = supp(\theta^*)$.
We aim to use mathematical induction to prove the following
\begin{align}
\| \beta^t_{(S^*)^c} \|_0  < B s,& \quad  \|\theta^{t}_{(O^*)^c}\|_0 <   Bo,\label{support number}\\
\| \beta^t - \beta^* \|_2  \le   (\sqrt B+1) \cdot \sqrt s \lambda_{\beta, t},& \quad
\| \theta^{t} - \theta^* \|_2  \le  (\sqrt B+1) \cdot \sqrt o \lambda_{\theta, t}.\label{upper bound}
\end{align}
hold for all $t\ge 0$.

First, by \eqref{eq: lambda0s}, we guarantee that both \eqref{support number} and \eqref{upper bound} hold at $t=0$, with $\beta^0 = \mathbf 0_p$ and $\theta^0 = \mathbf0_n$.
By using mathematical induction, assume that \eqref{support number} and \eqref{upper bound} hold at iteration $t$ for some $t \ge 0$, and then we aim to prove that they remain valid at iteration $t+1$.

\textbf{Proof of sparsity}\quad  In the $(t+1)$-th iteration, we first prove \eqref{support number} reasoning by the absurd. 
Assume that $\|\beta^{t+1}_{(S^*)^c}\|_0 \ge Bs$ or $\|\theta^{t+1}_{(O^*)^c}\|_0 \ge Bo$ holds at first. 
It then follows that there exist subsets $\tilde{S} \subset (S^*)^c$ or $\tilde{O} \subset (O^*)^c$ such that $\|\tilde{S}\|_0 = Bs$ or $\|\tilde{O}\|_0 = Bo$, satisfying
\begin{equation}\label{eq: abs1}
    Bs \lambda_{\beta,t+1}^2 < \sum_{i \in \tilde{S}}(H_{\beta,i}^{t+1})^2 \mathbf{1}\{ |H_{\beta,i}^{t+1}|\geq \lambda_{\beta,t+1} \}
\end{equation}
or
\begin{equation}\label{eq: abs2}
    B   o \lambda_{\theta,t+1}^2 < \sum_{j \in \tilde{O}}(H_{\theta,j}^{t+1})^2 \mathbf{1}\{ |H_{\theta,j}^{t+1}|\geq \lambda_{\theta,t+1} \}.
\end{equation}

Since $\beta_{\tilde{S}}^* = \mathbf 0_{Bs}$ and $\theta_{\tilde{O}}^* = \mathbf 0_{Bo}$, under event $\mathcal E$, if \eqref{eq: abs1} holds, by decomposition \eqref{eq: decomposition} we have
\begin{equation}\label{absurd beta}
\begin{aligned}
    \sqrt{Bs}\lambda_{\beta,t+1}&\leq  \sqrt{\sum_{i \in \tilde{S}}\langle \Phi_{\cdot i}, \beta^* - \beta^t \rangle^2} + \sqrt{\sum_{i \in \tilde{S}}\frac{\eta^2}{n}\langle X_{\cdot i}, \theta^* - \theta^t \rangle^2} + \eta \sqrt{Bs} \| \Xi\|_\infty\\
    &\overset{(i)}{\leq} \delta \|\beta^* - \beta^t\|_2+ \eta (\sqrt B +1)f\cdot \|\theta^* - \theta^t\|_2 + \eta \sqrt{Bs} \| \Xi\|_\infty \\
    &\overset{(ii)}{<}  \delta (\sqrt B+1) \cdot \sqrt s \lambda_{\beta, t}  +   \eta   (\sqrt B+1)^2 f \cdot \sqrt o \lambda_{\theta, t}   +  4 \eta (\sqrt B +1) \sqrt{ s} \sqrt{\frac{M\log p}{n}} \\
    &\overset{(iii)}{\leq}  \frac{2 \delta}{\kappa-\delta}\cdot \sqrt s \lambda_{\beta, t+1} + \frac{4 \eta \kappa}{(\kappa-\delta)^2} \cdot f\sqrt o \lambda_{\theta, t+1} +  \frac{8 \eta \kappa}{\kappa-\delta}  \sqrt{ s} \sqrt{\frac{M\log p}{n}}, 
\end{aligned}
\end{equation}
where:
\begin{itemize}
    \item Inequality (i) follows from the fact 
    \begin{equation}\label{eq: inner products}
\begin{aligned}
    &\quad \sum_{i \in \tilde{S}}\langle \Phi_{\cdot i}, \beta^* - \beta^t \rangle^2 \leq \sum_{i \in S^{'}}\langle \Phi_{\cdot i}, \beta^* - \beta^t \rangle^2~ \quad\text{ (define } S^{'} = \tilde{S}\cup\text{supp}(\beta^* - \beta^t) ) \\
    &   =~ (\beta^* - \beta^t)^\top_{S^{'}}\Phi^\top_{S^{'}S^{'}}\Phi_{S^{'}S^{'}}(\beta^* - \beta^t)_{S^{'}}~\leq~ \|\Phi_{S^{'}S^{'}}\|_2^2\|\beta^* - \beta^t\|_2^2\\
    &=  \left\{ \left|\Lambda_{\max} \left( \frac{\eta (X^\top X)_{S' S^{'}}}n \right) - 1 \right| \bigvee \left| 1 - \Lambda_{\min} \left( \frac{\eta (X^\top X)_{S' S^{'}}}n \right) \right| \right\}^2 \|\beta^* - \beta^t\|_2^2\\
    &\leq (1- \eta/(2M) )^2\|\beta^* - \beta^t\|_2^2 \quad\text{(By } \mathcal E_X \text{ and the range of } \eta \text{)}\\
    &\leq \delta^2\|\beta^* - \beta^t\|_2^2, \quad\text{(Equation \eqref{eq: B delta})}
\end{aligned}
\end{equation}
and 
\begin{equation}\label{eq: inner products 2}
\begin{aligned}
    \frac{1}{n}\sum_{i \in \tilde{S}}\langle X_{\cdot i}, \theta^* - \theta^t \rangle^2 &= \frac{1}{n}(\theta^* - \theta^t)^\top_{O'}X_{O'\tilde{S}}X^\top_{O'\tilde{S}}(\theta^* - \theta^t)_{O'}~ \quad\text{ (define } O^{'} = \text{supp}(\theta^* - \theta^t) )\\
    &\leq \frac{1}{n}\|X_{O'\tilde{S}}\|_2^2\cdot \|\theta^* - \theta^t\|_2^2 \\
    &\leq (\sqrt B +1)^2 f^2 \cdot \|\theta^* - \theta^t\|_2^2. \quad\text{(By } \mathcal E_X \text{)}
    \end{aligned}
\end{equation}

    \item Inequality (ii) follows from the event $\mathcal E$ and the assumption \eqref{upper bound} at the $t$-th iteration.

    \item Inequality (iii) follows the relationship:
\begin{equation}\label{eq: relation b lambda}
\sqrt B + 1 = \frac{2 \kappa}{\kappa - \delta}, \quad
\lambda_{\beta,t+1} = \max(\kappa \lambda_{\beta,t}, ~ \lambda_{\beta, \infty}), \quad
\lambda_{\theta,t+1} = \max(\kappa \lambda_{\theta,t}, ~ \lambda_{\theta, \infty}).
\end{equation}
\end{itemize}

Similarly, if \eqref{eq: abs2} holds, we have
\begin{equation}\label{absurd theta}
\begin{aligned}
    \sqrt{Bo}\lambda_{\theta, t+1}&\leq  |\eta -1| \cdot \| \theta^*- \theta^t\|_2 + \sqrt{\sum_{j \in \tilde{O}}\frac{\eta^2}{n}\langle X_{j \cdot}^\top, \beta^* - \beta^t \rangle^2} + \eta \sqrt{\sum_{j \in \tilde{O}}\frac{1}{n}\xi^2_j}\\
    &\leq \delta\| \theta^*- \theta^t\|_2  + \eta (\sqrt B +1)f  \| \beta^* - \beta^t\|_2  +  3\eta \sqrt{ B o} \sqrt{\frac{\log n}{n}}\\
    &< \frac{2 \delta}{\kappa-\delta}\cdot \sqrt o \lambda_{\theta, t+1} + \frac{4 \eta \kappa}{(\kappa-\delta)^2} \cdot f\sqrt s \lambda_{\beta, t+1} +  \frac{6\eta \kappa}{\kappa-\delta} \sqrt{ o} \sqrt{\frac{\log n}{n}}, 
\end{aligned}
\end{equation}
where: 
\begin{itemize}
    \item The second inequality follows from the fact
    \begin{equation}\label{eq: inner products 3}
\begin{aligned}
     \frac{1}{n}\sum_{j \in \tilde{O}}\langle X_{j \cdot}^\top, \beta^* - \beta^t \rangle^2 &\leq \frac{1}{n}(\beta^* -\beta^t)^\top_{S^{''}} X_{\tilde{O} S^{''}}^\top X_{\tilde{O} S^{''}}(\beta^* -\beta^t)_{S^{''}} ~ \quad\text{ (define } S^{''} =  \text{supp}(\beta^* - \beta^t) )\\
     &\leq \frac{1}{n}\|X_{\tilde{O} S^{''}}\|_2^2 \cdot \|\beta^* -\beta^t\|_2^2\\
     &\leq (\sqrt B +1)^2 f^2\cdot \|\beta^* - \beta^t\|_2^2 \quad\text{(By } \mathcal E_X ),
\end{aligned}
\end{equation}
and the fact $|\eta-1| < \delta$ in the case $\eta \in \left[\frac{2M}{4M^2 + 1}, \frac{4M}{4M^2 + 1} \right]$.

\item The derivation of the last inequality is analogous to that of (ii) and (iii) in \eqref{absurd beta}. 
\end{itemize}

Combining \eqref{absurd beta} and \eqref{absurd theta}, to prove $\|\beta^{t+1}_{(S^*)^c}\|_0 < Bs$ and $\|\theta^{t+1}_{(O^*)^c}\|_0 < Bo$ by contradiction, it suffices to show that:
\begin{equation}\label{eq: keypoint}
\begin{cases}
\frac{2 \delta}{\kappa-\delta}\cdot \sqrt s \lambda_{\beta, t+1} + \frac{4 \eta \kappa}{(\kappa-\delta)^2} \cdot f\sqrt o \lambda_{\theta, t+1} +  \frac{8 \eta \kappa}{\kappa-\delta}  \sqrt{ s} \sqrt{\frac{M\log p}{n}} \le \sqrt{ Bs} \lambda_{\beta, t+1},\\
\frac{2 \delta}{\kappa-\delta}\cdot \sqrt o \lambda_{\theta, t+1} + \frac{4 \eta \kappa}{(\kappa-\delta)^2} \cdot f\sqrt s \lambda_{\beta, t+1} +    \frac{6\eta \kappa}{\kappa-\delta} \sqrt{ o} \sqrt{\frac{\log n}{n}} \le \sqrt{ Bo} \lambda_{\theta, t+1}. \\
\end{cases}
\end{equation}
We first assume that there exist suitable sequence $\{(\lambda_{\beta,t},~\lambda_{\theta,t} ) \}_{t \ge 0}$ satisfying this system \eqref{eq: keypoint}, and we will construct its explicit form in Section \ref{sec: threshold seq}.

\textbf{Proof of error bounds}\quad Once the sparsity results are established in the $(t+1)$-th iteration, we turn to derive upper bounds on estimation errors.
We establish that
\begin{equation}\label{eq: th1 upper bound1}
\begin{aligned}
&\| \beta^{t+1} - \beta^*\|_2 \\
&\le \Bigg\{  \sum_{i \in S^*}\left( - H_{\beta,i}^{t+1} \mathbf1(|H_{\beta,i}^{t+1}|< \lambda_{\beta,t+1}) 
   + \langle \Phi_{\cdot i}, \beta^* - \beta^t \rangle + \frac{\eta}{\sqrt n}\langle X_{\cdot i}, \theta^* - \theta^t \rangle + \eta \Xi_i \right)^2  \\
&\quad \quad\quad \left.+ \sum_{i \in S^{t+1}\setminus S^*}\left( \langle \Phi_{\cdot i}, \beta^* - \beta^t \rangle +
   \frac{\eta}{\sqrt n}\langle X_{\cdot i}, \theta^* - \theta^t \rangle + \eta \Xi_i \right)^2 \right\}^{1/2}\\
&\le \sqrt{\sum_{i \in S^*} \left( H_{\beta,i}^{t+1} \right)^2 \mathbf1(|H_{\beta,i}^{t+1}|< \lambda_{\beta,t+1})}
    + \sqrt{\sum_{i \in S^{t+1} \cup S^*} \langle \Phi_{\cdot i}, \beta^* - \beta^t \rangle^2 } \\
&\quad \quad + \eta \sqrt{\sum_{i \in S^{t+1} \cup S^*} \frac{1}{n}\langle X_{\cdot i}, \theta^* - \theta^t \rangle^2 } + \eta\sqrt{\sum_{i \in S^{t+1} \cup S^*} \Xi_i^2 } \\
    &\overset{(i)}\leq \sqrt{s}\lambda_{\beta, t+1} + \delta\| \beta^t - \beta^*\|_2 + \eta (\sqrt B +1)f\cdot \|\theta^* - \theta^t\|_2 + 4 \eta (\sqrt B+1) \sqrt{s} \sqrt{\frac{M\log p}{n}}  \\
    &\overset{(ii)}\leq \sqrt{s}\lambda_{\beta, t+1} + \frac{2 \delta}{\kappa-\delta}\cdot \sqrt s \lambda_{\beta, t+1} + \frac{4 \eta \kappa}{(\kappa-\delta)^2} \cdot f\sqrt o \lambda_{\theta, t+1} +  \frac{8 \eta \kappa}{\kappa-\delta}  \sqrt{ s} \sqrt{\frac{M\log p}{n}},
\end{aligned}
\end{equation}
where recall $S^{t+1} = \text{supp}(\beta^{t+1}),\ S^{*} = \text{supp}(\beta^{*})$.
Here, inequality (i) follows from \eqref{eq: inner products} and \eqref{eq: inner products 2} again, and inequality (ii) follows analogously to (ii) and (iii) in \eqref{absurd beta}.
Similar to \eqref{eq: th1 upper bound1}, we derive that
\begin{equation}\label{eq: th1 upper bound2}
\begin{aligned}
\| \theta^{t+1} - \theta^*\|_2 &\leq \sqrt{o}\lambda_{\theta, t+1} + \delta\| \theta^t - \theta^* \|_2 + \eta (\sqrt B +1) f \| \beta^t - \beta^* \|_2 + 3\eta \sqrt{(B+1) o} \sqrt{\frac{\log n}{n}} \\
    &< \sqrt{o}\lambda_{\theta, t+1} + \frac{2 \delta}{\kappa-\delta}\cdot \sqrt o \lambda_{\theta, t+1} + \frac{4 \eta \kappa}{(\kappa-\delta)^2} \cdot f\sqrt s \lambda_{\beta, t+1} +  \frac{6\eta \kappa}{\kappa-\delta} \sqrt{ o} \sqrt{\frac{\log n}{n}}.
\end{aligned}
\end{equation} 
Therefore, to ensure that the $\ell_2$ error bounds after the $(t+1)$-th iteration satisfies \eqref{upper bound}, it suffices for the sequence $\{ ( \lambda_{\beta,t}, \lambda_{\theta,t} ) \}_{t \ge 0}$ to satisfy
\begin{equation}
\begin{cases}
\sqrt s \lambda_{\beta, t+1} + \frac{2 \delta}{\kappa-\delta}\cdot \sqrt s \lambda_{\beta, t+1} + \frac{4 \eta \kappa}{(\kappa-\delta)^2} \cdot f\sqrt o \lambda_{\theta, t+1} +  \frac{8 \eta \kappa}{\kappa-\delta}  \sqrt{ s} \sqrt{\frac{M\log p}{n}} \le (\sqrt B+1) \sqrt{ s} \lambda_{\beta, t+1},\\
\sqrt{o}\lambda_{\theta, t+1}  + \frac{2 \delta}{\kappa-\delta}\cdot \sqrt o \lambda_{\theta, t+1} + \frac{4 \eta \kappa}{(\kappa-\delta)^2} \cdot f \sqrt s \lambda_{\beta, t+1} +    \frac{6\eta \kappa}{\kappa-\delta} \sqrt{ o} \sqrt{\frac{\log n}{n}} \le (\sqrt B+1) \sqrt{ o} \lambda_{\theta, t+1},\\
\end{cases}
\end{equation}
which is equivalent to the system \eqref{eq: keypoint}. Consequently, we next focus on constructing a suitable set of solutions for $\{ ( \lambda_{\beta,t}, \lambda_{\theta,t} ) \}_{t \ge 0}$.

\subsection{Appropriate threshold sequence}\label{sec: threshold seq}
Define 
$$
W_1 := \frac{4 \eta \kappa}{(\kappa-\delta)^2},
\quad W_2 := \frac{8 \eta \kappa}{\kappa-\delta} \sqrt{\frac{Ms\log p}{n}},
\quad W_3:= \frac{6\eta \kappa}{\kappa-\delta} \cdot\sqrt{\frac{o\log n}{n}},
$$
and note that $\sqrt B = \frac{\kappa+ \delta}{\kappa - \delta}$. Then the system \eqref{eq: keypoint} reduces to the inequalities 
\begin{equation}\label{eq: concise}
\begin{cases}
W_1 \cdot f\sqrt o \lambda_{\theta, t+1} +  W_2 \le \sqrt{s} \lambda_{\beta, t+1},\\
W_1 \cdot f\sqrt s \lambda_{\beta, t+1} +   W_3 \le \sqrt{ o} \lambda_{\theta, t+1}. \\
\end{cases}
\end{equation}
The feasible region for $(\sqrt s\lambda_{\beta, t+1}, \sqrt{ o} \lambda_{\theta, t+1})$ is illustrated as the shaded area in Figure \ref{fig: solution}. 
Given the update rule
\begin{equation*}
    \begin{aligned}
        \lambda_{\beta,t+1} &= (\kappa \times\lambda_{\beta,t}) \vee\lambda_{\beta,\infty},\\
        \lambda_{\theta,t+1} &= (\kappa \times\lambda_{\theta,t}) \vee\lambda_{\theta,\infty},
    \end{aligned}
\end{equation*}
maintaining the ratio $\frac{\lambda_{\beta,0}}{\lambda_{\theta,0}} = \frac{\lambda_{\beta,\infty}}{\lambda_{\theta,\infty}}$ ensures that the iterate trajectory $\{ ( \lambda_{\beta,t}, \lambda_{\theta,t} ) \}_{t \ge 0}$ follows a linear path toward the origin, such as the solid purple or dashed red rays shown in Figure \ref{fig: solution}.
Specifically, in the case $1-W_1^2 f^2 > 1/2$, i.e.,
$$
n > 32 \frac{\eta^2 \kappa^2 C_M^2}{(\kappa-\delta)^4} (s\log p + o \log n),
$$
the target purple endpoint $\left( \sqrt s\lambda_{\beta,\infty} = 2W_2 +2 f W_1 W_3 ,~ \sqrt o \lambda_{\theta, \infty} =2W_3 +2f W_1 W_2 \right)$ lies within the feasible region. 
Consequently, the purple ray (passing through this endpoint and maintaining $\frac{\lambda_{\beta,t}}{\lambda_{\theta,t}} = \frac{\lambda_{\beta,\infty}}{\lambda_{\theta,\infty}}$) constitutes a valid threshold update trajectory. 

Therefore, by using the thresholds in \eqref{eq: lambda infty}, we prove that 
$$
\begin{aligned}
\| \beta^{t+1}_{S^c} \|_0  < B s,& \quad  \|\theta^{t+1}_{O^c}\|_0 <   B o,\\
   \| \beta^{t+1} - \beta^* \|_2  \le   (\sqrt B+1) \cdot \sqrt s \lambda_{\beta, t+1},& \quad
\| \theta^{t+1} - \theta^* \|_2  \le  (\sqrt B+1) \cdot \sqrt o \lambda_{\theta, t+1} 
\end{aligned}
$$
hold simultaneously in the ($t+1$)-th iteration, which completes the proof of Theorem 1.

\begin{figure}
    \centering
\begin{tikzpicture}[scale=1.5]
    \draw[->, >=stealth, thick] (-0.5,0) -- (8,0) node[right] {$\sqrt{s} \lambda_{\beta,t+1}$};
    \draw[->, >=stealth, thick]  (0,-1.5) -- (0,5) node[above] {$\sqrt{o} \lambda_{\theta,t+1}$};
    \node at (0,0) [below left] {$O$};
     
    \draw[blue, thick, name path=line1] plot [domain=-0.5:8] (\x, {0.5*\x + 1});
    \node[blue, right] at (4, 2.7) {$W_1 \cdot f\sqrt s \lambda_{\beta, t+1} + W_3 = \sqrt{ o} \lambda_{\theta, t+1} $ };
    
    \draw[red, thick, name path=line2] plot [domain=-0.5:3] (\x, {2*\x - 1});
    \node[red, right] at (1, 5.3) {$W_1 \cdot f\sqrt o \lambda_{\theta, t+1} +  W_2 = \sqrt{s} \lambda_{\beta, t+1}$};
    
    \path[name intersections={of=line1 and line2, by=P}];
    
   \fill[orange] (P) circle (2pt) node[below right] {$(x_0, y_0) = \left(\frac{W_2 + f W_1 W_3}{1-W_1^2 f^2}, \frac{W_3 + f W_1 W_2}{1-W_1^2 f^2} \right)$};

    \begin{scope}
        \clip (P) rectangle (8,5);
        \fill[black!20, opacity=0.5] 
            plot [domain=1.33:8] (\x, {0.5*\x + 1}) -- 
            plot [domain=5:1.33] (\x, {2*\x - 1}) -- cycle;
    \end{scope}

    \coordinate (B) at (4, 5);

    \coordinate (A) at (2.4, 3);
    \fill[purple] (A) circle (2pt);
    
    \node (L) at (0.5, 3.2) [purple, font=\normalsize] {$\left( 2W_2 +2 f W_1 W_3 ,~ 2W_3 +2f W_1 W_2 \right) $};
    
    \draw[->, >=stealth, very thick, purple, shorten >=4pt] (B) -- (A);

    \coordinate (C) at (2.4, 2.4);
    \fill[red] (C) circle (2pt);
    \coordinate (D) at (5, 5);
    \draw[->, >=stealth, very thick, red, densely dashed, shorten >=4pt] (D) -- (C);

\end{tikzpicture}
    \caption{Feasible region (shaded) and two exemplary threshold update trajectories (solid purple and dashed red).}
    \label{fig: solution}
\end{figure}

\subsection{Threshold initialization}\label{sec: th1 initial}
Here we provide some practical guidance for selecting the initial tuning parameters $\lambda_{\beta,0}$ and $\lambda_{\theta,0}$. 
With a probability greater than $1-O(p^{-2}+n^{-3})$, we have the decomposition
\begin{equation}
\begin{aligned}
\sqrt s \left\| \frac1nX^\top Y \right\|_\infty 
\ge & \left\| \left( \frac1nX^\top Y\right)_{S^*} \right\|_2\\
\ge &\left\| \left( \frac1nX^\top X\right)_{S^*,S^*} \beta^*_{S^*} \right\|_2 - \left\| \left( \frac1{\sqrt n}X^\top \theta^*\right)_{S^* } \right\|_2 -  \left\| \left(\Xi\right)_{S^* } \right\|_2 \\
\ge & \frac{1}{2M}\left\| \beta^* \right\|_2 - f\left\| \theta^* \right\|_2 - 4\sqrt M \sqrt{\frac{s\log p}{n}},
\end{aligned}
\end{equation}
and 
\begin{equation}
\begin{aligned}
\sqrt o \left\| \frac{1}{\sqrt{n}} Y \right\|_\infty 
\ge & \left\| \left( \frac{1}{\sqrt{n}} Y\right)_{O^*} \right\|_2\\
\ge &\left\|\theta^*\right\|_2 - \left\| \left( \frac1{\sqrt n} X \beta^*\right)_{O^* } \right\|_2 -  \left\| \frac1{\sqrt n} \xi_{O^* } \right\|_2 \\
\ge & \left\|\theta^*\right\|_2 - f\left\| \beta^* \right\|_2 - 3\sqrt{\frac{o\log n}{n}}.
\end{aligned}
\end{equation}
For both inequalities, the final step relies on the restricted incoherence property given in Proposition 1, with $C_1 = 1$.
And the sample-size assumption on which Proposition 1 depends is fulfilled under the sample condition presented in Section~\ref{sec: th1 pre}. 

By combining these inequalities, we obtain 
$$
\sqrt s \left\| \frac1nX^\top Y \right\|_\infty + 4\sqrt M \sqrt{\frac{s\log p}{n}}
+ f \sqrt o \left\| \frac{1}{\sqrt{n}} Y \right\|_\infty  + 3f\sqrt{\frac{o\log n}{n}}
\ge \left( \frac1{2M} - f^2\right) \left\| \beta^* \right\|_2.
$$
Then, under the assumption $n \ge 4M C_M^2(s\log p + o \log n)$, it follows that 
$$
\frac{\| \beta^*\|_2}{\sqrt s} \le 4M\left( 4 +  \left\| \frac1nX^\top Y \right\|_\infty + \left\|  Y \right\|_\infty \right).
$$
Similarly, by $4Mf^2 \le 1$, $M >1$ and 
$$
\sqrt o \left\| \frac{1}{\sqrt{n}} Y \right\|_\infty +3\sqrt{\frac{o\log n}{n}}
+ 2Mf \sqrt s \left\| \frac1nX^\top Y \right\|_\infty  + 2Mf 4\sqrt M \sqrt{\frac{s\log p}{n}} \ge (1-2Mf^2) \left\|\theta^*\right\|_2,
$$
we have 
$$
\frac{\| \theta^*\|_2}{\sqrt o} \le 8\sqrt M+ 2 \left\| \frac1{\sqrt{n}} Y \right\|_\infty +  \left\| \frac1{\sqrt n}X^\top Y \right\|_\infty .
$$
Consequently, with probability at least $1-O(p^{-2}+n^{-3})$, we can construct valid initial thresholds $\lambda_{\beta,0}$ and $\lambda_{\theta,0}$ based on
$$
\begin{aligned}
    \lambda_{\beta,0} \ge &~ 4M\left( 4 +  \left\| \frac1nX^\top Y \right\|_\infty + \left\|  Y \right\|_\infty \right)+ \lambda_{\beta,\infty},\\
\lambda_{\theta,0} \ge&~ 8\sqrt M+ 2 \left\| \frac1{\sqrt{n}} Y \right\|_\infty +  \left\| \frac1{\sqrt n}X^\top Y \right\|_\infty + \lambda_{\theta,\infty}.
\end{aligned}
$$

\section{Proof of Theorem 2}\label{supp: Th2}
Throughout this proof, Assumption~2 is used in the same concrete form as in the proof of Theorem~1, i.e., $ n\ge C_{\mathrm{Th1}} (s\log p+o\log n)$.
The proof of Theorem~2 proceeds in two steps.
First, we introduce some useful preliminaries.
Then, we prove that, {under Assumption 3}, a carefully chosen stopping time yields an error bound sharper than Theorem 1.
\subsection{Preliminary}\label{sec: prove of th2-1}
By Lemma \ref{lemma: supp error}, we have
\begin{align*}
\mathbf P\left\{\underbrace{ \frac1n \left\|  X_{\cdot S^*}^\top \xi \right\|_2 \le  \sqrt{\frac{4Ms +6M \log(1/\varrho)}n} }_{=:\mathcal E'} \right\}
\ge 1- O\left(p^{-2s} + \varrho \right),
\end{align*}
and this proof is based on the event $\mathcal E' \cap \mathcal E \cap \mathcal E_X$.

In the second-stage iteration, we apply the thresholds of the same values as the endpoints in \eqref{eq: lambda infty}:
$$
\lambda_{\beta} = \lambda_{\beta, \infty},
\quad \lambda_{\theta} = \lambda_{\theta, \infty},
$$ 
therefore, by Theorem 1, with a probability greater than $1 - O(p^{-2} + n^{-3})$ we have the following 
\begin{align}
\| \tilde{\beta}^{t}_{S^{*c}} \|_0 \le B s,& \quad \| \tilde{\theta}^{t}_{O^{*c}} \|_0 \le B o,\label{support number5}\\
\| \tilde{\beta}^t - \beta^* \|_2  \le (\sqrt B+1) \cdot \sqrt s \lambda_{\beta},& \quad
\| \tilde{\theta}^{t} - \theta^* \|_2  \le  (\sqrt B+1) \cdot \sqrt o \lambda_{\theta}, \label{upper bound5}
\end{align}
hold for all $t\ge 0$, regardless of whether Assumption 3 holds (the definition of $B$ follows \eqref{eq: B delta}).
Moreover, as Assumption 3 holds, i.e.,
\begin{equation}\label{eq: betamin in th2}
\min_{i \in S^*} |\beta_i^*| \ge  \left( \frac{4 \kappa}{\kappa-\delta} + \frac{\kappa-\delta}{4 \kappa} \right) \cdot \lambda_\beta ,
\end{equation}
we can further prove a sharper error bound of $\tilde \beta^t$. 
Before we prove this refined bound, we rewrite the decomposition as
\begin{equation} 
\begin{aligned}
\tilde H_{\beta}^{t+1} :=& \tilde {\beta}^t + \frac{\eta}{n}X^{\top}(Y - X \tilde  {\beta}^t - \sqrt{n} \tilde {\theta}^t) \\
= &\beta^* + \Phi(\beta^* - \tilde \beta^t) + \frac{\eta}{\sqrt n} X^\top (\theta^* - \tilde \theta^t) + \eta\Xi \in \mathbb R^p,\\
\tilde H_{\theta}^{t+1} :=&\tilde  \theta^t + \frac{\eta}{\sqrt{n}}(Y - X \tilde \beta^t - \sqrt{n}\tilde \theta^t)\\
=& \theta^* + (\eta-1) (\theta^* - \tilde \theta^t) + \frac{\eta}{\sqrt n} X (\beta^* -\tilde  \beta^t) + \frac{\eta}{\sqrt n} \xi \in \mathbb R^n.
\end{aligned}
\end{equation}

\subsection{Sharper bound under Assumption 3}\label{sec: prove of th2-3}
In the second-stage iteration, by \eqref{support number5} we guarantee the sparsity of solution sequences $\left\{ \tilde \beta^t \right\}_{t \ge 0}$ and $\left\{ \tilde \theta^t \right\}_{t \ge 0}$. 
Then, under the event $\mathcal E' \cap \mathcal E \cap \mathcal E_X$, for every $ t \ge 0$ we have
\begin{equation}\label{eq: th6 beta-min}
    \begin{aligned}
    &\sum_{i \in S^*} \left( \tilde H_{\beta,i}^{t+1} \right)^2 \mathbf1(| \tilde H_{\beta,i}^{t+1}|< \lambda_{\beta})\\
\le & \sum_{i \in S^*} \lambda_{\beta}^2 \cdot \mathbf1 \left( |\beta^*_i| - \left|\langle \Phi_{\cdot i}, \beta^* - \tilde \beta^t \rangle +
    \frac{\eta}{\sqrt n}\langle X_{\cdot i}, \theta^* - \tilde \theta^t \rangle \right| -  \eta\left|\Xi_i \right| < \lambda_{\beta} \right)\\
\le & \sum_{i \in S^*} \lambda_{\beta}^2 \cdot \mathbf1 \left( \left|\langle \Phi_{\cdot i}, \beta^* - \tilde \beta^t \rangle +
    \frac{\eta}{\sqrt n}\langle X_{\cdot i}, \theta^* -\tilde \theta^t \rangle \right| > |\beta_i^*| - \lambda_\beta - \eta \|\Xi\|_\infty \ge \frac{3 \kappa + \delta}{\kappa - \delta}\lambda_\beta \right)\\
\le & \left(\frac{\kappa - \delta}{3 \kappa + \delta} \right)^2 \sum_{i \in S^*} \left( \langle \Phi_{\cdot i}, \beta^* -\tilde \beta^t \rangle +
    \frac{\eta}{\sqrt n}\langle X_{\cdot i}, \theta^* - \tilde \theta^t \rangle \right)^2,
\end{aligned}
\end{equation}
where the second inequality follows from the signal condition \eqref{eq: betamin in th2} and the fact $\|\eta\Xi\|_\infty \le \frac{\kappa - \delta}{4 \kappa } \lambda_\beta$ under the event $\mathcal E$.

On the set $\tilde S^{t+1} \setminus S^*$, by $\|\eta\Xi\|_\infty \le \frac{\kappa - \delta}{4 \kappa } \lambda_\beta$ we have
\begin{equation}\label{absurd beta5}
\begin{aligned}
&\sum_{i \in \tilde S^{t+1} \setminus S^*} (\eta \Xi_i)^2 \mathbf{1}\{ | \tilde H_{\beta,i}^{t+1}|\geq \lambda_{\beta} \}\\
\leq& \sum_{i \in \tilde S^{t+1} \setminus S^* } (\eta \Xi_i)^2 \mathbf{1} \left\{  \left|\langle \Phi_{\cdot i}, \beta^* - \tilde{\beta}^t \rangle + \frac{\eta}{\sqrt n}\langle X_{\cdot i}, \theta^* - \tilde{\theta}^t \rangle \right| \ge  \lambda_\beta - \| \eta\Xi\|_\infty \ge \frac{3\kappa + \delta}{\kappa -\delta} \|\eta \Xi \|_\infty \right\} \\
\leq& \left( \frac{\kappa-\delta}{3 \kappa + \delta}\right)^2 \sum_{i \in \tilde S^{t+1} \setminus S^* } \left|\langle \Phi_{\cdot i}, \beta^* - \tilde{\beta}^t \rangle + \frac{\eta}{\sqrt n}\langle X_{\cdot i}, \theta^* - \tilde{\theta}^t \rangle \right|^2. 
\end{aligned}
\end{equation}
Therefore,
\begin{equation}\label{eq: th6 upper bound1}
\begin{aligned}
&\quad \quad \|\tilde \beta^{t+1} - \beta^*\|_2 \\
&\le \Bigg\{  \sum_{i \in S^*}\left( - \tilde H_{\beta,i}^{t+1} \mathbf1(| \tilde H_{\beta,i}^{t+1}|< \lambda_{\beta}) 
   + \langle \Phi_{\cdot i}, \beta^* - \tilde \beta^t \rangle + \frac{\eta }{\sqrt n}\langle X_{\cdot i}, \theta^* - \tilde \theta^t \rangle + \eta \Xi_i \right)^2  \\
&\quad \quad \left.+ \sum_{i \in \tilde S^{t+1}\setminus S^*}\left( \langle \Phi_{\cdot i}, \beta^* -\tilde \beta^t \rangle +
   \frac{\eta}{\sqrt n}\langle X_{\cdot i}, \theta^* - \tilde \theta^t \rangle + \eta\Xi_i \mathbf{1} \left( | \tilde H_{\beta,i}^{t+1}|\geq \lambda_{\beta} \right)  \right)^2 \right\}^{1/2}\\
&\le \sqrt{\sum_{i \in \tilde S^{t+1} \cup S^*} \langle \Phi_{\cdot i}, \beta^* - \tilde \beta^t \rangle^2 } 
    + \sqrt{\sum_{i \in \tilde S^{t+1} \cup S^*} \frac{\eta^2}n\langle X_{\cdot i}, \theta^* - \tilde \theta^t \rangle^2 } + \eta \sqrt{\sum_{i \in S^*} \Xi_i^2 } \\
&\quad + \sqrt{\sum_{i \in S^*}  \left( \tilde H_{\beta,i}^{t+1} \right)^2 \mathbf1(| \tilde H_{\beta,i}^{t+1}|< \lambda_{\beta})
 + \sum_{i \in \tilde S^{t+1}\setminus S^*} (\eta\Xi_i)^2  \mathbf{1} \left(| \tilde H_{\beta,i}^{t+1}|\ge  \lambda_{\beta} \right)  } \\
&\leq \frac{ 4 \kappa\delta}{3\kappa + \delta}  \| \tilde \beta^t - \beta^*\|_2 + \frac{ 8 \kappa^2 \eta}{(3\kappa + \delta)(\kappa - \delta)} \cdot f \| \tilde\theta^t - \theta^* \|_2
+ \eta \sqrt{\frac{4Ms +6M \log(1/\varrho)}n},
\end{aligned}
\end{equation}
where the last inequality follows from \eqref{eq: th6 beta-min}, \eqref{absurd beta5}, the sparsity results \eqref{support number5} and the event $\mathcal E'$.
Combining \eqref{eq: th6 upper bound1} and \eqref{upper bound5}, with a constant $C_1$ (depending only on $\kappa, \eta$, and $M$), we have
$$
\| \tilde \beta^{t+1} - \beta^*\|_2 
\le \frac{ 4 \kappa\delta}{3\kappa + \delta}  \| \tilde \beta^t - \beta^*\|_2
    + C_1\left( \sqrt{\frac{s + \log (1/\varrho)}{n} } + \frac{s \log p + o \log n}{n} \right),
$$
by $4 \kappa\delta < 3 \kappa + \delta$, we further get that
$$
\| \tilde \beta^{t} - \beta^*\|_2 
\le  \left( \frac{ 4 \kappa\delta}{3\kappa + \delta} \right)^t \cdot \| \tilde \beta^0 - \beta^*\|_2 
   + \frac{C_1}{1- \frac{4\kappa \delta}{3 \kappa + \delta}}  \left( \sqrt{\frac{s + \log (1/\varrho)}{n} } + \frac{s \log p + o \log n}{n} \right)
$$
holds for every $t\ge 0$. 
Additionally, from Theorem 1, we have
$$\mathbf P\left\{ \| \tilde \beta^0 - \beta^*\|_2^2  \lesssim \frac{s\log p}{n} + \frac{o^2 \log^{2 } n}{n^2} \right\} \ge 1- O\left(p^{-2} + n^{-3} \right).
$$
Therefore, with a sufficiently large constant $C_2$, for every $t \ge  C_2 \log n$, with a probability greater than $1-\varrho -O(p^{-2}+n^{-3})$ we get the sharper error bound: 
\begin{equation*}
\| \tilde \beta^{t} - \beta^*\|_2 
\lesssim \sqrt{\frac{s +\log (1/\varrho)}n} + \frac{s\log p+ o \log n}{n}.
\end{equation*}
Combining this bound and \eqref{support number5}, we complete the proof of Theorem 2.

\section{Proof of Corollary 1}\label{supp: Cor1}
\textbf{Sharper error bound:}
Under the beta-min Assumption 3 and taking $\varrho = p^{-2}$, from Theorem 2 we learn that
\begin{equation}\label{eq: sharp bound}
\| \tilde \beta^t  - \beta^*\|_2 
\le C\left( \sqrt{\frac{s + \log p}n} +  \frac{s\log p+ o \log n}{n}  \right),
\end{equation}
holds for every $t \ge  C_2 \log n$, with a probability greater than $1-O(p^{-2}+n^{-3})$.

\textbf{On true support:}
On the true support $S^* = \text{supp}(\beta^*)$, we have
\begin{equation}\label{eq: almost full}
\begin{aligned}
  \left | \text{supp}(\beta^*) - \text{supp}(\tilde{\beta}^t)\right | &= \sum_{i\in S^*}\mathbf{1}\{ | \tilde H^{t}_{\beta,i}| < \lambda_{\beta} \}\\
    &\overset{(i)}\le \sum_{i\in S^*} \mathbf1 \left( \left|\langle \Phi_{\cdot i}, \beta^* - \tilde \beta^t \rangle +
    \frac{\eta}{\sqrt n}\langle X_{\cdot i}, \theta^* -\tilde \theta^t \rangle \right| >  \frac{3 \kappa + \delta}{\kappa - \delta}\lambda_\beta \right)\\
&\lesssim \sum_{i\in S^*} \frac{ \left|\langle \Phi_{\cdot i}, \beta^* - \tilde \beta^t \rangle +
    \frac{\eta}{\sqrt n}\langle X_{\cdot i}, \theta^* -\tilde \theta^t \rangle \right|^2}{\lambda_{\beta}^2}\\
&\lesssim  \frac{\| \beta^* - \tilde{\beta}^t \|_2^2 + f^2 \| \theta^* - \tilde{\theta}^t \|_2^2 }{ \lambda_\beta^2 }\\
    &\overset{(ii)}\lesssim \frac{\frac{s+ \log p}{n}+\left( \frac{s\log p}{n} \right)^2 + \left( \frac{o\log n}{n} \right)^2 }{\frac{s\log p}{n} + \left( \frac{o\log n}{n} \right)^2 }\cdot s\\
    & \overset{(iii)}= \left(\frac1{\log p} + \frac1s + \frac{s\log p}{n}  + o(1) \right) \cdot s\\
    &= o(s),
\end{aligned}
\end{equation}
where inequality (i) comes from \eqref{eq: th6 beta-min}, inequality (ii) comes from the sharper bound \eqref{eq: sharp bound} and \eqref{upper bound5}, and inequality (iii) comes from the condition $ n \succ  s\log p \succ \frac{o^2\log^{2}n}{n }$.

\textbf{On non-support:} 
By $ \sum_{i\notin S^*}\mathbf{1}(\tilde{\beta}^t_i\neq 0)\leq B s$, we conclude that
$$
\begin{aligned}
   & \quad\left|\text{supp}(\tilde{\beta}^t) - \text{supp}(\beta^*) \right| = \sum_{i\notin S^*}\mathbf{1}\{ | \tilde H^{t}_{\beta,i}|\ge  \lambda_{\beta} \}\\
    &\overset{(i)}\leq \sum_{i\notin S^*} \mathbf{1} \left\{  \left|\langle \Phi_{\cdot i}, \beta^* - \tilde{\beta}^t \rangle + \frac{\eta}{\sqrt n}\langle X_{\cdot i}, \theta^* - \tilde{\theta}^t \rangle \right| \ge  \lambda_\beta - \| \eta\Xi\|_\infty \ge \frac{3\kappa + \delta}{4 \kappa} \lambda_\beta \right\}\\
    &=o(s),
\end{aligned}
$$
where inequality follows from \eqref{absurd beta5}, and the last equation comes from a similar process in \eqref{eq: almost full}. 
Therefore, by taking the stopping time $t \ge  C_2 \log n $ we have
$$
\mathbf P \left\{ \left | \text{supp}(\beta^*)\  \triangle \ \text{supp}(\tilde{\beta}^t)\right | = o(s) \right\} \ge 1 - O(p^{-2} + n^{-3}),
$$
which completes the proof of Corollary 1.

\section{Proof of Theorem 3}\label{supp: Th3}
Throughout this proof, Assumption~2 is used in the concrete form
\[
n\ge C_{\mathrm{Th3}}\{s\log p+o\log n\},
\]
where \(C_{\mathrm{Th3}}  \) is a fixed constant depending only on \(M,\eta,\kappa\), chosen large enough to dominate the concrete sample-size constants used in the proof of Theorem~1, Lemma~\ref{lemma: oracle error}, and Lemma~\ref{lemma: rrip}, satisfying:
\[
C_{\mathrm{Th3}}
\ge \max \left\{ C_{\mathrm{Th1}},~30M^2 C_{\Sigma}, ~ 6M C_M^2, ~
\frac{16C_M^2}{(\kappa-\delta)^4} \right\}.
\]
The proof of Theorem~3 proceeds in three steps: 
First, we introduce some useful preliminaries.
Second, by mathematical induction, we prove that, under beta-min and theta-min conditions, i.e., Assumptions 3 and 4, the output of Algorithm 2 converges to the oracle estimation $\beta^\dag$. 
Finally, we derive the variable selection consistency.

\subsection{Preliminary}
Assume $supp(\theta^*) \ne \emptyset$. For ease of exposition, we will denote $(O^*)^c$ as $-O^*$ throughout the following proof.
Recall the formula of the oracle estimation:
\begin{equation}\label{oracle solution}
    \begin{aligned}
        \beta^{\dag}_{S^*} &:= \beta^{*}_{S^*} + (X^{\top}_{-O^*,S^*}X_{-O^*,S^*})^{-1}X_{-O^*,S^*}^{\top}\xi_{-O^*} \in \mathbb R^{|S^*|}, \\
        \beta^{\dag}_{-S^*} &:= \mathbf{0} \in \mathbb R^{p-|S^*|},\\
         \theta^{\dag}_{O^*} &:= \theta^{*}_{O^*} + \frac{1}{\sqrt{n}} \Big\{ 
 \xi_{O^*} - X_{O^*,S^*}(X_{-O^*,S^*}^{\top}X_{-O^*,S^*})^{-1}X^{\top}_{-O^*,S^*}\xi_{-O^*} \Big\}  \in \mathbb R^{|O^*|},\\
 \theta^{\dag}_{-O^*} &:= \mathbf{0} \in \mathbb R^{n-|O^*|}.
    \end{aligned}
\end{equation}

Therefore, we can rewrite the decomposition in the second-stage algorithm as: 
\begin{equation}
    \begin{aligned}
        \tilde H_{\beta}^{t+1} =& \tilde{\beta}^t + \frac{\eta}{n}X^{\top}(Y - X \tilde{\beta}^t - \sqrt{n} \tilde{\theta}^t) \\
= & \beta^{\dag} + \Phi(\beta^{\dag} - \tilde\beta^{t}) + \frac{\eta}{\sqrt{n}} X^{\top}(\theta^{\dag} - \tilde\theta^{t}) + \eta\Xi^{\dag}, \\
\tilde H_{\theta}^{t+1} =& \tilde \theta^t + \frac{\eta}{\sqrt{n}}(Y - X \tilde \beta^t - \sqrt{n}\tilde\theta^t) \\
= &\theta^{\dag} + (\eta-1) (\theta^\dag - \tilde \theta^t) + \frac{\eta}{\sqrt n} X  (\beta^{\dag} - \tilde\beta^{t}) + \frac{\eta}{\sqrt{n}}\xi^{\dag},   
    \end{aligned}
\end{equation}
where 
$$
\begin{aligned}
    \Xi_{S^*}^\dag &:= \mathbf{0} \in \mathbb R^{|S^*|},\\
    \Xi_{-S^*}^\dag &:= \frac{1}{n}X_{-O^*,-S^*}^{\top} \Big\{  I_{n-o} - X_{-O^*,S^*}(X_{-O^*,S^*}^{\top}X_{-O^*,S^*})^{-1}X_{-O^*,S^*}^{\top} \Big\} \xi_{-O^*} \in \mathbb R^{p-|S^*|},\\
    \xi_{O^*}^\dag &: = \mathbf{0} \in \mathbb R^{|O^*|},
\end{aligned}
$$
and
$$
\begin{aligned}
    \xi_{-O^*}^\dag &:= \Big\{  I_{n-o} - X_{-O^*,S^*}(X_{-O^*,S^*}^{\top}X_{-O^*,S^*})^{-1}X_{-O^*,S^*}^{\top} \Big\} \xi_{-O^*} \in \mathbb R^{n-|O^*|}.
\end{aligned}
$$

We specify Assumptions 3 and 4 as
\begin{equation}\label{eq: betathetamin}
\begin{aligned}
    \min_{i \in S^*} |\beta_i^*| \ge  \frac{4 \kappa}{\kappa - \delta}  \lambda_\beta + 5 \sqrt{\frac{M \log p}n}  , \quad 
    \min_{k \in O^*} |\theta_k^*| \ge \frac{4 \kappa}{\kappa - \delta}  \lambda_\theta + 4 \sqrt{\frac{\log n}n},
\end{aligned}
\end{equation}
where $\lambda_\beta = \lambda_{\beta, \infty}$ and $\lambda_\theta = \lambda_{\theta, \infty}$ follows from \eqref{eq: lambda infty}.
Define the event 
$$
\begin{aligned}
    \mathcal E_{oracle}: = \begin{Bmatrix}
\|\Xi^{\dag} \|_\infty < 4 \sqrt{\frac{M \log p}{n} } , & 
\left\|\frac{\xi^{\dag} }{\sqrt{n}} \right\|_\infty < 3 \sqrt{\frac{\log n}n} ;& \\
\left\|\beta^{\dag} -\beta^{*}\right\|_\infty < 5 \sqrt{\frac{M \log p}n} , &
\left\|\theta^{\dag} -\theta^{*}\right\|_\infty < 4  \sqrt{\frac{\log n}n} .
\end{Bmatrix}
\end{aligned}
$$
And the following proof is based on the event $\mathcal E_{oracle} \cap \mathcal E_X$, which holds with a probability greater than $1- O(p^{-2} + n^{-3} )$ by Lemma \ref{lemma: oracle error} and Proposition 1. 

\subsection{Convergence to oracle estimation}
Similar to \eqref{eq: th6 beta-min}, under the event $\mathcal E_{oracle}$ and the beta-min condition in \eqref{eq: betathetamin}, we have
\begin{equation}\label{eq: th8 beta-min}
    \begin{aligned}
    &\sum_{i \in S^*} \left( \tilde H_{\beta,i}^{t+1} \right)^2 \mathbf1(| \tilde H_{\beta,i}^{t+1}|< \lambda_{\beta}) \\
\le & \sum_{i \in S^*} \lambda_{\beta}^2 \cdot \mathbf1 \left( |\beta^{\dag}_i| - \left|\langle \Phi_{\cdot i}, \beta^{\dag} - \tilde \beta^t \rangle +
    \frac{\eta}{\sqrt n}\langle X_{\cdot i}, \theta^{\dag} - \tilde \theta^t \rangle \right| < \lambda_{\beta} \right)\\
\le & \sum_{i \in S^*} \lambda_{\beta}^2 \cdot \mathbf1 \left( \left|\langle \Phi_{\cdot i}, \beta^\dag - \tilde \beta^t \rangle +
    \frac{1}{\sqrt n}\langle X_{\cdot i}, \theta^\dag -\tilde \theta^t \rangle \right| > |\beta^*_i| - \| \beta^\dag - \beta^*\|_\infty - \lambda_\beta \ge \frac{3\kappa + \delta}{\kappa - \delta}\lambda_{\beta} \right)\\
\le & \left(\frac{\kappa-\delta}{3\kappa + \delta }\right)^2 \sum_{i \in S^*} \left( \langle \Phi_{\cdot i}, \beta^\dag -\tilde \beta^t \rangle +
    \frac{\eta}{\sqrt n}\langle X_{\cdot i}, \theta^\dag - \tilde \theta^t \rangle \right)^2,
\end{aligned}
\end{equation}
leading that
\begin{equation}\label{eq: th8 upper bound1}
\begin{aligned}
&\quad \quad \|\tilde \beta^{t+1} - \beta^{\dag}\|_2 \\
&\le \Bigg\{  \sum_{i \in S^*}\left( - \tilde  H_{\beta,i}^{t+1} \mathbf1(| \tilde H_{\beta,i}^{t+1}|< \lambda_{\beta}) 
   + \langle \Phi_{\cdot i}, \beta^{\dag} - \tilde \beta^t \rangle + \frac{\eta}{\sqrt n}\langle X_{\cdot i}, \theta^{\dag} - \tilde \theta^t \rangle \right)^2  \\
&\quad \quad \left.+ \sum_{i \in \tilde S^{t+1}\setminus S^*}\left( \langle \Phi_{\cdot i}, \beta^{\dag} -\tilde \beta^t \rangle +
   \frac{\eta}{\sqrt n}\langle X_{\cdot i}, \theta^{\dag} - \tilde \theta^t \rangle + \eta \Xi_i^\dag \mathbf{1} \left( | \tilde H_{\beta,i}^{t+1}|\geq \lambda_{\beta} \right)  \right)^2 \right\}^{1/2}\\
&\le \sqrt{\sum_{i \in \tilde S^{t+1} \cup S^*} \langle \Phi_{\cdot i}, \beta^{\dag} - \tilde \beta^t \rangle^2 } 
    + \sqrt{\sum_{i \in \tilde S^{t+1} \cup S^*} \frac{\eta^2}n\langle X_{\cdot i}, \theta^{\dag} - \tilde \theta^t \rangle^2 } \\
&\quad + \left\{ \sum_{i \in S^*} \left(  \tilde H_{\beta,i}^{t+1} \right)^2 \mathbf1(| \tilde H_{\beta,i}^{t+1}|< \lambda_{\beta})\right.\\
&\qquad \quad  \left.+ \sum_{i \in \tilde S^{t+1}\setminus S^*} \left(\eta \Xi^{\dag}_i \right)^2  \mathbf{1} \left(  \left|\langle \Phi_{\cdot i}, \beta^{\dag} - \tilde \beta^t \rangle + \frac{\eta \langle X_{\cdot i}, \theta^{\dag} - \tilde \theta^t \rangle}{\sqrt n} \right| \ge \lambda_\beta- \|\eta \Xi^\dag \|_\infty \ge \frac{3\kappa + \delta}{\kappa - \delta} |\eta \Xi^{\dag}_i| \right) \right\}^{1/2}\\
&\leq \frac{ 4 \kappa  }{ 3 \kappa + \delta } \left(\delta \| \tilde \beta^t - \beta^{\dag}\|_2 +  \frac{2\kappa \eta}{\kappa - \delta}  \cdot f \| \tilde\theta^t - \theta^{\dag} \|_2 \right),
\end{aligned}
\end{equation}
where the second inequality follows from the event $\mathcal E_{oracle}$.
The last inequality follows from \eqref{eq: th8 beta-min}, the techniques \eqref{eq: inner products} and \eqref{eq: inner products 2}, and the sparsity result \eqref{support number5} derived in Theorem 2, which holds consistently for every $t \ge 0$ with a probability greater than $1- O\left(p^{-2} +n^{-3} \right)$.

Similarly, the event $\mathcal E_{oracle}$ and the theta-min condition in \eqref{eq: betathetamin} yield that
\begin{equation}\label{eq: th8 theta-min}
    \begin{aligned}
&\sum_{i \in O^*} \left( \tilde H_{\theta,i}^{t+1} \right)^2 \mathbf1(| \tilde H_{\theta,i}^{t+1}|< \lambda_{\theta})\\
\le & \sum_{i \in O^*} \lambda_{\theta}^2 \cdot \mathbf1 \left( \left| ({ \eta }-1) (\theta^\dag_i - \tilde \theta^t_i ) + \frac{ \eta }{\sqrt n} X_{i \cdot} (\beta^{\dag} - \tilde\beta^{t}) \right|  > |\theta_i^*| - \|\theta^\dag - \theta^* \|_\infty  - \lambda_\theta \ge \frac{3\kappa + \delta}{\kappa - \delta}\lambda_{\theta} \right)\\
\le & \left( \frac{\kappa - \delta}{3\kappa + \delta} \right)^2 \sum_{i \in O^*} \left( ({ \eta }-1) (\theta^\dag_i - \tilde \theta^t_i ) + \frac{ \eta }{\sqrt n} X_{i \cdot} (\beta^{\dag} - \tilde\beta^{t})  \right)^2,
\end{aligned}
\end{equation}
which leads to
\begin{equation}\label{eq: th8 upper bound2}
\begin{aligned}
&\quad \quad \|\tilde \theta^{t+1} - \theta^{\dag}\|_2 \\
&\le \Bigg\{  \sum_{i \in O^*}\left( - \tilde H_{\theta,i}^{t+1} \mathbf1(| \tilde H_{\theta,i}^{t+1}|< \lambda_{\theta}) 
   + ({ \eta }-1) (\theta^\dag_i - \tilde \theta^t_i ) + \frac{ \eta }{\sqrt n} X_{i \cdot} (\beta^{\dag} - \tilde\beta^{t}) \right)^2  \\
&\quad \quad \left.+ \sum_{i \in \tilde O^{t+1}\setminus O^*}\left( ({ \eta }-1) (\theta^\dag_i - \tilde \theta^t_i ) + \frac{ \eta }{\sqrt n} X_{i \cdot} (\beta^{\dag} - \tilde\beta^{t}) + \frac{\eta}{\sqrt{n}}\xi_i^\dag \mathbf{1} \left( | \tilde H_{\theta,i}^{t+1}|\geq \lambda_{\theta} \right)  \right)^2 \right\}^{1/2}\\
&\le \sqrt{\sum_{i \in \tilde O^{t+1} \cup O^*} \left( ({ \eta }-1) (\theta^\dag_i - \tilde \theta^t_i ) + \frac{ \eta }{\sqrt n} X_{i \cdot} (\beta^{\dag} - \tilde\beta^{t}) \right)^2 }\\
&\quad + \left\{ \sum_{i \in O^*} \left( \tilde H_{\theta,i}^{t+1} \right)^2 \mathbf1(| \tilde H_{\theta,i}^{t+1}|< \lambda_{\theta}) \right.\\
&\qquad \quad \left.+ \sum_{i \in  \tilde  O^{t+1}\setminus O^*} \left( \frac{\eta \xi^{\dag}_i}{\sqrt n}\right)^2   \mathbf{1} \left( \left| ({ \eta }-1) (\theta^\dag_i - \tilde \theta^t_i ) + \frac{ \eta }{\sqrt n} X_{i \cdot} (\beta^{\dag} - \tilde\beta^{t}) \right| \ge \lambda_\theta -  \left|\frac{\eta \xi^{\dag}_i}{\sqrt{n}}\right| \ge \frac{3\kappa + \delta}{\kappa - \delta} \left|\frac{\eta \xi^{\dag}_i}{\sqrt{n}}\right| \right) \right\}^{1/2}\\
&\leq \frac{4 \kappa }{3 \kappa + \delta } \left( \delta \| \tilde \theta^t - \theta^{\dag}\|_2  + \frac{2 \kappa \eta}{\kappa - \delta} \cdot f \| \tilde \beta^t - \beta^{\dag}\|_2 \right).
\end{aligned}
\end{equation}
Combining \eqref{eq: th8 upper bound1} and \eqref{eq: th8 upper bound2}, we conclude that
\begin{equation}\label{eq: dynamic sol}
\begin{aligned}
\|\tilde \beta^{t} - \beta^{\dag}\|_2
\le&  \left(\frac{4 \kappa \delta }{3 \kappa + \delta } + \frac{8 \kappa^2 \eta \cdot f }{(3 \kappa + \delta)(\kappa - \delta) }  \right)^{t} \cdot \frac{ \|\tilde \beta^{0} - \beta^{\dag}\|_2 + \|\tilde \theta^{0} - \theta^{\dag}\|_2}2\\
&+ \left(\frac{4 \kappa \delta }{3 \kappa + \delta } - \frac{8 \kappa^2 \eta \cdot f }{(3 \kappa + \delta)(\kappa - \delta) }  \right)^{t} \cdot \frac{ \|\tilde \beta^{0} - \beta^{\dag}\|_2 - \|\tilde \theta^{0} - \theta^{\dag}\|_2}2, \\
\le & \left(\frac{4 \kappa \delta }{3 \kappa + \delta } + \frac{8 \kappa^2 \eta \cdot f }{(3 \kappa + \delta)(\kappa - \delta) }  \right)^{t} \cdot\left( \|\tilde \beta^{0} - \beta^{\dag}\|_2 + \|\tilde \theta^{0} - \theta^{\dag}\|_2\right)
\end{aligned}
\end{equation}
simultaneously hold for every $t \ge 0$.
Define $C_0:= \|\tilde \beta^{0} - \beta^{\dag}\|_2 + \|\tilde \theta^{0} - \theta^{\dag}\|_2 < \infty$ and $r : =\frac{4 \kappa \delta }{3 \kappa + \delta } + \frac{8 \kappa^2 \eta \cdot f }{(3 \kappa + \delta)(\kappa - \delta) } $, by the definition of $f$ in \eqref{eq: B delta} and the sample size assumption $n \ge \frac{16 C_M^2}{(\kappa - \delta)^4}\cdot ( s\log p + o \log n )$, we conclude that $r \le \frac{2 \kappa + 4\kappa \delta - 2 \delta}{3 \kappa + \delta} <1$.
Therefore, we prove that, with a probability greater than $1 - O(p^{-2} + n^{-3})$,
\begin{equation}\label{eq: final sol}
\| \tilde{\beta}^t - \beta^\dag \|_2 \leq C_0 \times  r^t
\end{equation}
holds for every $t \ge 0$ in the second-stage algorithm.

\subsection{Oracle estimation rate}
According to \eqref{oracle solution}, with a probability greater than $1- O(p^{-2})$, we have
\begin{equation}\label{oracle rate 1}
\begin{aligned}
    \|\beta^{\dag} - \beta^*\|_2
    &=
    \left\|
    (X^{\top}_{-O^*,S^*}X_{-O^*,S^*})^{-1}
    X_{-O^*,S^*}^{\top}\xi_{-O^*}
    \right\|_2 \\
    &\leq
    \frac{3M}{n}
    \cdot
    \left\|X_{-O^*,S^*}^{\top}\xi_{-O^*}\right\|_2,
\end{aligned}
\end{equation}
where the last inequality follows from Lemma~\ref{lemma: rrip}, which implies
\[
\left\|
(X^{\top}_{-O^*,S^*}X_{-O^*,S^*})^{-1}
\right\|_2
=
\frac{1}{
\Lambda_{\min}
\left(
X^{\top}_{-O^*,S^*}X_{-O^*,S^*}
\right)
}
\le
\frac{3M}{n}.
\]
For a given design $X$, from Theorem 2.1 of \citet{hsu2012tail}, we learn that 
\begin{equation}\label{eq: oracle prob}
\begin{aligned}
\mathbf P_{\xi | X}\Bigg\{ \left\|  X_{-O^*,S^*}^\top \xi_{-O^*} \right\|_2^2  
\ge& tr\left( X_{-O^*,S^*}^\top X_{-O^*,S^*} \right)+2\left\| X_{-O^*,S^*}^\top X_{-O^*,S^*} \right\|_F\sqrt{u}&\\
    &\quad +2\Lambda_{\max}\left( X_{-O^*,S^*}^\top X_{-O^*,S^*} \right) u ~ \Big | ~ X\Bigg\}  \le e^{-u}.
\end{aligned}
\end{equation}
Additionally, if $X \in \mathcal E_X$, we further conclude that
\begin{equation*}
\begin{aligned}
tr\left( X_{-O^*,S^*}^\top X_{-O^*,S^*} \right) \le&
tr\left( X_{\cdot,S^*}^\top X_{\cdot,S^*}  \right)  \le 2Mns,\\
\left\| X_{-O^*,S^*}^\top X_{-O^*,S^*} \right\|_F^2 \le &
s  \left\| X_{-O^*,S^*}^\top X_{-O^*,S^*} \right\|_2^2
\le s \left\| X_{\cdot,S^*}^\top X_{\cdot,S^*} \right\|_2^2 \le 4M^2 n^2 s,\\
\Lambda_{\max}\left( X_{-O^*,S^*}^\top X_{-O^*,S^*} \right)\le & 
\left\| X_{\cdot,S^*}^\top X_{\cdot,S^*} \right\|_2 \le 2M n.
\end{aligned}
\end{equation*}
Therefore, by taking $u= \log(1/ \varrho)$ into \eqref{eq: oracle prob}, we have
\begin{equation}\label{oracle rate 2}
\begin{aligned}
& \mathbf P_{\xi ,X} \left( \left\|  X_{-O^*,S^*}^\top \xi_{-O^*}  \right\|_2^2 \ge 2M n( 2s + 3  \log(1/ \varrho)) \right)\\
\le & \mathbf P_{\xi ,X} \left( \mathcal E_X^c\right)
+  \mathbf E_{X} \left\{ \mathbf1(\mathcal E_X)\cdot \mathbf P_{\xi |X} \left( \left\|  X_{-O^*,S^*}^\top \xi_{-O^*}  \right\|_2^2 \ge 2M n( 2s + 3  \log(1/ \varrho)) ~ \big| X \right)  \right\}\\
\le& 4p^{-2s} + \varrho.
\end{aligned}
\end{equation}
Combining \eqref{eq: dynamic sol}, \eqref{oracle rate 1} and \eqref{oracle rate 2},
with taking the stopping time $t \ge  \frac{\log(C_0 n) }{\log(1/r)}$, by \eqref{eq: final sol} we guarantee that 
\begin{equation}\label{eq: entry error}
\sup_{i \in [p]}|\beta_i^\dag - \tilde \beta_i^t| \le \|\tilde \beta^{t} - \beta^{\dag}\|_2 \le \frac1n,
\end{equation}
and 
$$
\| \tilde{\beta}^t - \beta^* \|_2
\le  \|\tilde \beta^{t} - \beta^{\dag}\|_2 + \|\beta^{\dag} - \beta^*\|_2 
~\le ~ \frac1n +  9M^{3/2} \sqrt{\frac{s + \log(1/\varrho)}{n}} 
~\asymp ~\sqrt{\frac{s+ \log(1/\varrho)}{n-o}},
$$
with a probability at least $1 -\varrho -  O(p^{-2} + n^{-3} )$.

\subsection{Variable selection consistency}
By \eqref{eq: entry error}, we conclude that:
\begin{itemize}
    \item For every $i \in S^*$, under the signal condition \eqref{eq: betathetamin} we have 
$$
|\tilde \beta^t_i| \ge |\beta_i^*|  - |\beta_i^\dag- \beta_i^*| - |\beta_i^\dag - \tilde \beta_i^t| \ge  \frac{4 \kappa}{\kappa - \delta} \lambda_{\beta} - \frac1n > 0,
$$
which proves that the whole support set $S^*$ is recovered.

\item For every $i \notin S^*$, we have 
$$
|\tilde \beta^t_i| \le |\beta_i^\dag| + |\beta_i^\dag - \tilde \beta_i^t| \le  \frac1n < \lambda_\beta,
$$
which proves that we cannot discover any entry on $(S^*)^c$ by using the hard-thresholding parameter $\lambda_\beta$.
\end{itemize}
Therefore, with a probability greater than $1- O(p^{-2} + n^{-3})$, we have
$$ 
\text{supp}(\tilde \beta^t ) = \text{supp} (\beta^\dag) = \text{supp} (\beta^*), \quad \text{for every } t \ge C \log n,
$$
which achieves the variable selection consistency of $\tilde \beta$.
Meanwhile, by utilizing the technique similar to \eqref{eq: dynamic sol}, we also establish the convergence of the estimator $\{\tilde \theta^t\}_{t \ge 0}$ to the oracle $\theta^\dag$, therefore demonstrating the selection consistency of $\tilde \theta$.

\section{Proof of Corollary 2}\label{supp: Cor2}
For every vector $\gamma\in \mathbb R^s$ with $\|\gamma\|_2<\infty$, according to Theorem 3, we have 
$$
\sqrt{n}\gamma^{\top} \tilde \beta_{S^*} = \sqrt{n}\gamma^{\top}\beta^{\dag}_{S^*} + o_p(1).
$$
Therefore, by Slutsky's Lemma, we only need to prove the asymptotic normality of $\beta^{\dag}_{S^*}$.
By \eqref{oracle solution}, 
$$
\begin{aligned}
&\sqrt n \gamma^\top ( \beta^{\dag}_{S^*} - \beta^*_{S^*}) \\
=& \sqrt n \gamma^\top \left(\frac{X^{\top}_{-O^*,S^*}X_{-O^*,S^*}}{n-o} \right)^{-1}   \frac{X^{\top}_{-O^*,S^*} \xi_{-O^*}}{n-o} \\
= & \frac{n}{n-o} \cdot \frac{1}{\sqrt n} \gamma^\top \Sigma_{S^*, S^*}^{-1} X_{-O^*,S^*}^{\top}\xi_{-O^*}\\
& + \sqrt n\gamma^\top \Sigma_{S^*, S^*}^{-1} \left\{ \Sigma_{S^*, S^*} - \frac{X^{\top}_{-O^*,S^*}X_{-O^*,S^*}}{n-o} \right\} \left(\frac{X^{\top}_{-O^*,S^*}X_{-O^*,S^*}}{n-o} \right)^{-1} \frac{X_{-O^*,S^*}^{\top}\xi_{-O^*}}{ n - o}\\
= & \frac{n}{n-o} \cdot\frac{1}{\sqrt n} \gamma^\top \Sigma_{S^*, S^*}^{-1} X_{-O^*,S^*}^{\top}\xi_{-O^*}  + O_p\left( \frac{ n  s \log p}{ (n-o)^{3/2}}\right) .
\end{aligned}
$$
where the last equality follows from \eqref{oracle rate 2} and a similar technique used in \eqref{eq: X Sigma}.
Define $\epsilon_i := \frac{1}{\sqrt n} \gamma^\top \Sigma_{S^*, S^*}^{-1} X^{\top}_{i,S^*}\xi_{i}$.
Clearly, $\{\epsilon_i\}_{i \in -O^*}$ are independent and zero-mean, and
\begin{equation}\label{Lyapunov 1}
\begin{aligned}
    \sum_{i\in -O^*} \text{Var}(\epsilon_i) =  \sum_{i\in -O^*}  \frac1n c_{\xi}\sigma^2\gamma^{\top}\Sigma_{S^*, S^*}^{-1} \gamma \geq \frac{c_{\xi}\sigma^2\|\gamma\|^2_2}{2M},
\end{aligned}
\end{equation}
where $c_{\xi} = \frac{\text{var}(\xi_i)}{\sigma^2}$ is a posotive constant. By the Cauchy-Schwarz inequality, we have
\begin{equation}
    \begin{aligned}
    \sum_{i\in - O^*}\mathbf{E} |\epsilon_i|^3 
    &= n^{-3/2}\sum_{i\in - O^*} \mathbf{E} \left( | \gamma^{\top} \Sigma_{ S^*,  S^*}^{-1} X^{\top}_{i, S^*} |^3 \right) \cdot \mathbf{E} |\xi_i|^3\\
    & \lesssim \frac{\|\gamma\|^3_2 \cdot \| \Sigma_{ S^*,  S^*}^{-1}\|_2^3 \cdot  \mathbf{E} (\|X^{\top}_{i, S^*} \|_2^3)  \cdot\sigma^3  }{n^{1/2}}  \\
    & \lesssim \frac{\sigma^3 s^{3/2}}{n^{1/2}} \|\gamma\|^3_2~ ,
\end{aligned}
\end{equation}
where the last inequality comes from the property of sub-Gaussian random variable. 
Since 
$$
\begin{aligned}
\frac{\sum_{i\in - O^*}\mathbf{E} |\epsilon_i|^3 }{ \left(\sum_{i\in -O^*} \text{Var}(\epsilon_i) \right)^{3/2}} \lesssim s^{3/2} n^{-1/2},
\end{aligned}
$$
by Lyapunov's central limit theorem and Slutsky's Lemma, with $n \succ s^3,~ n\gtrsim o \log n$, we conclude that
$$
 \frac{1}{\sqrt n} \gamma^\top \Sigma_{S^*, S^*}^{-1} X_{-O^*,S^*}^{\top}\xi_{-O^*}
 \overset{D}\to \mathcal{N} \left(0\ ,\ c_{\xi}\sigma^2\gamma^{\top} \Sigma_{S^*, S^*}^{-1} \gamma \right).
$$
Furthermore, combining with $\sqrt n \succ s \log p$, we get
$$
\sqrt{n}\gamma^{\top}( \tilde \beta_{S^*} - \beta^*_{S^*}) \overset{D}\to \mathcal{N} \left(0\ ,\ c_{\xi}\sigma^2\gamma^{\top} \Sigma_{S^*, S^*}^{-1} \gamma \right)
$$
for every $\gamma\in \mathbb R^s$ with $\|\gamma\|_2<\infty$.
Therefore, we complete the proof of Corollary 2.

\section{Proof of minimax lower bounds}\label{supp: minimax}
The proof of lower bounds proceeds in three steps.
We first follow the proof technique in \citet{chengaoren18aos} and get an estimation lower bound of the $\epsilon$-contamination model.
Then, we employ a Bayesian method to connect our empirical contamination model with the $\epsilon$-contamination model, thereby establishing Theorem 4.
We finally apply a similar technique to obtain the lower bound for variable selection, and complete the proof of Theorem 5.

For ease of display, we denote by $\pi_{\Sigma} (X_{i,\cdot})$ the distribution of the $i$-th observation $X_{i, \cdot} \in \mathbb R^p$ for each uncontaminated index $i$.
We also assume the noise $\xi_i$ follows from a Gaussian distribution $\mathcal N (0, \sigma^2)$ independently for each $i \in [n]$.

\subsection{Estimation lower bound related to $\epsilon'$-contamination model}\label{sec: bayes lower}
We assume $s$ is an even number, and construct two coefficient vectors as
\begin{equation}\label{eq: beta12}
\begin{aligned}
\beta^{(1)} & := (\underbrace{\lambda, \cdots, \lambda}_{s}~,~ \underbrace{0, \cdots, 0 }_{p-s})^\top \in \mathbb R^p,\\
\beta^{(2)} & := (\underbrace{0, \cdots, 0 }_{s/2}~, ~\underbrace{\lambda, \cdots, \lambda}_{s}~,~ \underbrace{0, \cdots, 0 }_{p- {3s}/2} )^\top \in \mathbb R^p,
\end{aligned}
\end{equation}
where $\lambda = \sqrt{\frac{1}{8C_{2s}}}\cdot \frac{\sigma}{\sqrt s} \cdot \frac{ o}{n}$.
Let $\mathbf P_{1}$ denote the joint distribution of $(X_{i\cdot}, Y_i)$ given $\beta^{(1)}$, where assume that $X_{i,\cdot} \in \mathbb R^{1 \times p}$ has marginal density function $\pi_\Sigma(\cdot)$ and, conditional on $X_{i\cdot}$ and $\beta^{(1)}$, $Y_i$ follows a normal distribution $\mathcal N( X_{i,\cdot} \beta^{(1)}, \sigma^2)$. 
Similarly, we define $\mathbf P_{2}$ as the joint distribution of $(X_{i\cdot}, Y_i)$ given $\beta^{(2)}$.
It is straightforward to learn that 
\begin{equation}
\begin{aligned}
0< \left\{ TV(\mathbf P_1 , \mathbf P_2 ) \right\}^2 ~\le& KL(\mathbf P_1 \| \mathbf P_2 )\\
    =& \frac1{2\sigma^2} \left( \beta^{(1)} - \beta^{(2)} \right)^\top \Sigma \left( \beta^{(1)} - \beta^{(2)} \right) \\
    \overset{(i)}\le& \frac{C_{2s}}{2\sigma^2} \cdot s \lambda^2\\
    = & \left( \frac{o}{4n} \right)^2 
    ~< ~ \left( \frac{o/(4n)}{1- o/(4n)} \right)^2,
\end{aligned}
\end{equation}
where inequality (i) follows from $\sup_{S\subset [p]: |S| \le 2s} \Lambda_{1}(\Sigma_{S,S}) \le C_{2s}$ and the construction in \eqref{eq: beta12}.
We then define $\epsilon' := \frac{TV(\mathbf P_1 , \mathbf P_2 )}{1+TV(\mathbf P_1 , \mathbf P_2 )}$, and it is directly to learn that $0 < \epsilon'  < o/(4n)$.

Define density functions
$$
p_1=\frac{\mathrm d \mathbf P_{1}}{\mathrm d\left(   \mathbf P_{1} + \mathbf P_{2}\right)}, \quad p_2=\frac{\mathrm d \mathbf P_{2}}{\mathrm d\left( \mathbf P_{1} + \mathbf P_{2}\right)}.
$$
Define $ \mathbf Q_{1} \text { and } \mathbf Q_{2}$ (both are the joint distributions of $(X_{i\cdot}, Y_i)$) by their density functions
\begin{equation}\label{eq: Qs}
\begin{aligned}
\frac{\mathrm d \mathbf Q_{1}}{\mathrm d \left( \mathbf P_{1} + \mathbf P_{2}\right)}& =\frac{\left( p_2 -p_1 \right) \cdot \mathbf1(p_2 \ge p_1) }{\operatorname{TV}\left(  \mathbf P_{1}, \mathbf P_{2} \right)} ,\\
\frac{ \mathrm  d \mathbf Q_{2}}{ \mathrm  d\left( \mathbf P_{1} + \mathbf P_{2}\right)}& =\frac{\left( p_1 -p_2 \right) \cdot \mathbf1(p_1 > p_2) }{\operatorname{TV}\left(  \mathbf P_{1}, \mathbf P_{2} \right)}.
\end{aligned}
\end{equation}
Following the proof of Theorem 5.1 in \citep{chengaoren18aos}, it is easy to check that both $ \mathbf Q_{1} \text { and } \mathbf Q_{2}$ are well-defined probability measures.
It can also be checked that 
\begin{equation}\label{eq: equal}
\begin{aligned}
\frac{\mathrm d\left( (1-\epsilon') \mathbf P_1 + \epsilon' \mathbf Q_1 \right) }{\mathrm d \left( \mathbf P_{1} + \mathbf P_{2}\right)} 
& =\left(1-\epsilon'\right) p_{1}+\epsilon^{\prime} \frac{ (p_{2}-p_{1} ) \cdot \mathbf 1(p_2 \ge p_1)}{\epsilon' / (1-\epsilon')} \\
& =\left(1-\epsilon'\right) p_2 +\epsilon^{\prime} \frac{ (p_1 - p_2 ) \cdot \mathbf 1(p_1 > p_2)}{\epsilon' / (1-\epsilon')} \\
& = \frac{\mathrm d\left( (1-\epsilon') \mathbf P_2 + \epsilon' \mathbf Q_2 \right) }{\mathrm d \left( \mathbf P_{1} + \mathbf P_{2}\right)}. 
\end{aligned}
\end{equation}
We construct the prior of $\beta$ as
\begin{equation}\label{eq: pi beta}
    \pi_\beta(\beta = \beta^{(1)}) = \pi_\beta(\beta = \beta^{(2)}) = 1/2,
\end{equation}
where recall the construction of $\beta^{(1)}$ and $\beta^{(2)}$ in \eqref{eq: beta12}.
Therefore, for any $q \in [1,2]$, we derive that 
\begin{equation}\label{eq: bayes lower bound}
\begin{aligned}
&\inf_{\hat \beta} \mathbf E_{\beta \sim \pi_\beta} \mathbf E_{Y,X | \epsilon', \beta, \mathbf Q(\beta)} \left( \| \hat \beta - \beta\|_q^q \right)\\
\ge & \frac{s\lambda^q}{2^q} \cdot \inf_{\hat \beta} \frac12 \left\{ \left[ (1-\epsilon') \mathbf P_1 + \epsilon' \mathbf Q_1 \right]\left( \|\hat \beta - \beta^{(1)} \|_q \ge \frac{s^{1/q}\lambda }{2} \right)  \right.\\
& \quad \quad \quad \quad \quad~ \left. + \left[ (1-\epsilon') \mathbf P_2 + \epsilon' \mathbf Q_2 \right]\left( \|\hat \beta - \beta^{(2)} \|_q \ge \frac{s^{1/q}\lambda }{2} \right)  \right\}\\
\overset{(i)}\ge &  \frac{s\lambda^q}{8} \cdot \inf_{\hat \beta} \left\{ \left[ (1-\epsilon') \mathbf P_1 + \epsilon' \mathbf Q_1 \right]\left( \psi^*(\hat \beta) =2 \right)  
+ \left[ (1-\epsilon') \mathbf P_2 + \epsilon' \mathbf Q_2 \right]\left( \psi^*(\hat \beta) =1 \right) \right\}\\
= & \frac{1}{8 (8 C_{2s})^{q/2}} \cdot \frac{\sigma^q s^{1-q/2} o^q}{n^q} 
~\left( = \frac{s\lambda^q}{8} \right),
\end{aligned}
\end{equation}
where $\mathbf E_{Y,X | \epsilon', \beta, \mathbf Q(\beta)}$ denotes an expectation based on the joint distribution of $(X_{i, \cdot }, Y_i)_{i \in [n]}$, with each $(X_{i, \cdot }, Y_i) \sim (1- \epsilon') \mathbf P_\ell + \epsilon' \mathbf Q_\ell$ independently for $\beta = \beta^{(\ell)}$, and we use $\mathbf Q (\beta^{(\ell)}) = \mathbf Q_\ell$ to emphasize that in our construction the contamination distribution is partially determined by $\beta$. 
Additionally, in inequality (i) we define the selector $\psi^*(\hat \beta) = \arg \min_{\ell \in \{1,2\}} \| \hat \beta  - \beta^{(\ell)}\|_q$, and this inequality holds because that $\psi^*(\hat \beta) =2 \Rightarrow  \|\hat \beta  - \beta^{(1)} \|_q \ge \|\hat \beta  - \beta^{(2)} \|_q \Rightarrow s^{1/q} \lambda = \| \beta^{(1)}  - \beta^{(2)} \|_q \le 2 \|\hat \beta  - \beta^{(1)} \|_q$.
And the last equality follows from \eqref{eq: equal}.

\subsection{Estimation lower bound in Theorem 4}\label{sec: freq lower}
Section \ref{sec: bayes lower} gives a minimax estimation lower bound under the Huber contamination model $(X_{i, \cdot }, Y_i) \sim (1- \epsilon') \mathbf P + \epsilon' \mathbf Q $.
However, this lower bound \eqref{eq: bayes lower bound} cannot be applied directly in our frequentist-setting model.
Given $\beta, n, k, \sigma$ and $\pi_\Sigma(\cdot)$, we define the distribution class of $(X_i, Y_i)_{i \in [n]}$ as 
\begin{equation}
\begin{aligned}
\mathcal P_{\beta,k} =& \left\{ (n-k) \text{ observations are drawn from } \pi_\Sigma(X_{i,\cdot}) \times \mathcal N(Y_i, X_{i,\cdot} \beta, \sigma^2) , \right.\\
& \quad k \text{ observations are drawn from arbitrary } \mathbf Q \left. \right\} .
\end{aligned}
\end{equation}
Define $\mathcal M(\beta,o) : = \bigcup_{0 \le k \le o}\mathcal P_{\beta,k}$, and in this subsection, we focus on deriving a lower bound as 
\begin{equation}\label{eq: freq lower prototype}
\inf_{\hat \beta} ~\sup_{ \beta : \| \beta\|_0 \le s}~
\sup_{\mathbf R \in \mathcal M(\beta,o)} 
~\mathbf E_{\mathbf R} \left( \| \hat \beta- \beta\|_q^q \right)
\gtrsim \sigma^q s \left(\frac{ \log (ep/s)}{n}\right)^{q/2} +\sigma^q s^{1-q/2} \frac{  o^q}{n^q} ,
\end{equation}
where $\mathbf R$ is a joint distribution of $(X_{i\cdot}, Y_i)_{i\in [n]}$.

It is straightforward to apply the uncontaminated result \citep{Raskutti2011minimax, MN19} to get 
\begin{equation}
\begin{aligned}
	&\inf_{\hat \beta} ~\sup_{ \beta : \| \beta\|_0 \le s}~
	\sup_{\mathbf R \in \mathcal M(\beta,o)}\mathbf E_{\mathbf R} \left( \| \hat \beta- \beta\|_q^q \right)\\
\ge & \inf_{\hat \beta} ~\sup_{ \beta : \| \beta\|_0 \le s}~
\sup_{\mathbf R \in \mathcal P_{\beta, k=0}}\mathbf E_{\mathbf R} \left( \| \hat \beta- \beta\|_q^q\right)
\gtrsim \sigma^q s \left(\frac{ \log (ep/s)}{n}\right)^{q/2},
\end{aligned}
\end{equation}
for any $q \in [1,2]$, therefore we only need to derive the term $\sigma^q s^{1-q/2} \frac{o^q}{n^q}$.

We first get a lower bound as 
\begin{equation}\label{eq: post mean}
\begin{aligned}
&\inf_{\hat \beta} ~\sup_{\beta: \|\beta\|_0 \le s} ~\sup_{\mathbf R \in \mathcal M (\beta,o)} \mathbf E_{(X,Y) \sim\mathbf R} \left\| \hat \beta(Y,X) - \beta \right\|_q^q \\
= & \inf_{\hat \beta} ~\sup_{\beta: \|\beta\|_0 \le s} ~\sup_{k \le o} ~ \sup_{\mathbf R \in \mathcal P_{\beta,k} } \mathbf E_{(X,Y) \sim\mathbf R} \left\| \hat \beta(Y,X) - \beta \right\|_q^q\\
\overset{(i)}\ge &\inf_{\hat \beta}~ \mathbf E_{\beta \sim \pi_{\beta}} \mathbf E_{k \sim \pi_{k}^o} \mathbf E_{\mathbf R \sim U\left(\mathcal P_{\beta,k}(\mathbf Q(\beta)) \right)} \mathbf E_{(X,Y) \sim\mathbf R} \sum_{j\in[p]} \left| \hat \beta_j (Y,X) - \beta_j \right|^q\\
\overset{(ii)}\ge &\sum_{j\in [p]}~ \inf_{\hat \beta_j(Y,X)}~ \mathbf E_{\beta ,k} \mathbf E_{X,Y|\beta,k}  \left| \hat \beta_j(Y,X) - \beta_j \right|^q \\
\overset{(iii)}= &\sum_{j\in [p]}~ \inf_{\hat \beta_j(Y,X)}~ \mathbf E_{Y,X} \mathbf E_{\beta, k | Y,X}  \left| \hat \beta_j(Y,X) - \beta_j \right|^q \\
= &\sum_{j\in [p]}  \mathbf E_{Y,X}~ \mathbf E_{\beta,k|Y,X}  \left| \hat M_j(Y,X) - \beta_j \right|^q.
\end{aligned}
\end{equation}
Some remarks on \eqref{eq: post mean}:
\begin{enumerate}
    \item In inequality (i), we first construct a distribution $ \pi_k := Binomial(n, \epsilon')$ and denote $\pi_k^o$ the conditional distribution of $\pi_k$ given $0\le k\le o$.
Once $\beta (= \beta^{(\ell)})$ and $k$ are generated, we use $\mathbf Q(\beta)$ (derived in \eqref{eq: Qs}) to represent the contaminated distribution, and denote by $\mathcal P_{\beta, k }(\mathbf Q(\beta))$ the subset of $\mathcal P_{\beta,k}$, given the contamination $\mathbf Q = \mathbf Q(\beta)$. 
It is then clear that $|\mathcal P_{\beta, k }(\mathbf Q(\beta))| = \binom nk$, and we write $\mathbf R \sim U\!\bigl(\mathcal P_{\beta,k}(\mathbf Q(\beta))\bigr)$ to denote the uniform distribution over the collection $\mathcal P_{\beta,k}(\mathbf Q(\beta))$. 
Equivalently, each subset of size $k$ is chosen as the outlier set with probability $1/\binom{n}{k}$.

\item In inequality (ii), we write $\mathbf E_{\beta,k}$ as an abbreviation of $\mathbf E_{\beta \sim \pi_{\beta}} \mathbf E_{k \sim \pi_{k}^o}$.
And we write the abbreviation $\mathbf E_{X,Y|\beta,k}$ since the joint distribution of $(X,Y)$ is fully specified given $\beta$ and $k$:
$$
\mathbb P\left( X,Y | \beta = \beta^{(\ell)}, k \right)
= \frac{1}{\binom nk} \sum_{O\subset [n]: |O|=k} 
\left\{ \prod_{i \in O} Q(X_{i\cdot}, Y_i) \cdot \prod_{i \notin O}\pi_\Sigma(X_{i\cdot})\mathcal N \left(Y_i, X_{i\cdot}\beta^{(\ell)}, \sigma^2 \right)\right\},
$$
as we discussed in inequality (i).

\item In equality (iii), we denote by $\mathbf E_{\beta, k|Y,X}$ the conditional expectation based on the probability measure 
$$
\pi^o_{\beta, k|Y,X}(\beta^{(\ell)}, k):= \frac{ \mathbb P\left( \beta = \beta^{(\ell)}, k , X,Y\right) }{  \sum_{\ell' \in \{1,2\}, 0\le k' \le o}\mathbb P\left( \beta = \beta^{(\ell')}, k=k' , X,Y\right) },
$$
where
$$
\begin{aligned}
&\mathbb P\left( \beta = \beta^{(\ell)}, k , X,Y\right)\\
:=& \frac12 \times \frac{\binom nk (\epsilon')^k (1-\epsilon')^{n-k}}{\sum_{0\le k'\le o}\binom n{k'} (\epsilon')^{k'} (1-\epsilon')^{n-k'}}\\
&~\times \frac{1}{\binom nk} \sum_{O\subset [n]: |O|=k} 
\left\{ \prod_{i \in O} Q(X_{i\cdot}, Y_i) \cdot \prod_{i \notin O } \pi_\Sigma(X_{i\cdot})\mathcal N \left(Y_i, X_{i\cdot}\beta^{(\ell)}, \sigma^2 \right)\right\}.
\end{aligned}
$$

\item In the last equality, by Theorem 1.1 on page 228 in \citep{lehmann2006theory}, there exists a Bayes estimator $\hat M_j(Y,X)$ achieving the infimum.
Additionally, this estimator is better than the zero estimator, leading
\begin{equation}\label{eq: lq}
\begin{aligned}
|\hat M_j(Y,X)| \le& \left| \mathbf E_{\beta, k|Y,X} (\hat M_j(Y,X) - \beta_j)\right| + \left| \mathbf E_{\beta, k|Y,X} \beta_j  \right|\\
\overset{\text{(Jensen)}}{\le}& \left( \mathbf E_{\beta, k|Y,X}  \left|\hat M_j(Y,X) - \beta_j \right|^q \right)^{1/q} + \left( \mathbf E_{\beta, k|Y,X} \left| \beta_j  \right|^q  \right)^{1/q}\\
\le& 2 \left( \mathbf E_{\beta, k|Y,X} \left| \beta_j  \right|^q  \right)^{1/q}.
\end{aligned}
\end{equation}

\end{enumerate}

Therefore, we bridge the $\epsilon'$-contamination model \eqref{eq: bayes lower bound} and our frequentist-setting model for any $q \in [1,2]$: 
\begin{equation}\label{eq: bridge}
\begin{aligned}
&\inf_{\hat \beta} \mathbf E_{\beta \sim \pi_\beta} \mathbf E_{Y,X | \epsilon', \beta, \mathbf Q(\beta)} \left( \| \hat \beta - \beta\|_q^q \right)\\
=& \inf_{\hat \beta} \mathbf E_{\beta \sim \pi_{\beta} } \mathbf E_{k\sim \pi_{k}}  \mathbf E_{\mathbf R \sim U\left(\mathcal P_{\beta,k}(\mathbf Q(\beta)) \right)} \mathbf E_{(X,Y) \sim\mathbf R}\sum_{j\in [p]} \left| \hat \beta_j (Y,X) - \beta_j \right|^q\\
\le &  \mathbf E_{\beta \sim \pi_{\beta} } \mathbf E_{k\sim \pi_{k}}  \mathbf E_{\mathbf R \sim U\left(\mathcal P_{\beta,k}(\mathbf Q(\beta)) \right)} \mathbf E_{(X,Y) \sim\mathbf R} \left\{ \mathbf 1 ( k\le o)\cdot \sum_{j\in [p]} \left| \hat M_j (Y,X) - \beta_j \right|^q \right\}\\
& + \mathbf E_{\beta \sim \pi_{\beta} } \mathbf E_{k\sim \pi_{k}}  \mathbf E_{\mathbf R \sim U\left(\mathcal P_{\beta,k}(\mathbf Q(\beta)) \right)} \mathbf E_{(X,Y) \sim\mathbf R} \left\{ \mathbf 1 ( k > o) \cdot \sum_{j\in [p]} \left| \hat M_j (Y,X) - \beta_j \right|^q \right\}\\ 
\overset{\text{(Jensen)}}\le & \mathbf E_{\beta \sim \pi_{\beta} } \mathbf E_{k\sim \pi_{k}^o}  \mathbf E_{\mathbf R \sim U\left(\mathcal P_{\beta,k}(\mathbf Q(\beta)) \right)} \mathbf E_{(X,Y) \sim\mathbf R} \left\{ \sum_{j\in [p]} \left| \hat M_j (Y,X) - \beta_j \right|^q \right\}\\
&+ 2^{q-1}\sum_{j\in [p]} \mathbf E_{\beta \sim \pi_{\beta} } \mathbf E_{k\sim \pi_{k}}  \mathbf E_{\mathbf R \sim U\left(\mathcal P_{\beta,k}(\mathbf Q(\beta)) \right)} \mathbf E_{(X,Y) \sim\mathbf R} \left\{  \mathbf 1 ( k > o) \cdot  \left( |\hat M_j (Y,X)|^q + |\beta_j|^q \right) \right\} \\
\le & \inf_{\hat \beta} ~\sup_{\beta: \|\beta\|_0 \le s} ~\sup_{\mathbf R \in \mathcal M (\beta,o)} \mathbf E_{(X,Y) \sim\mathbf R} \left\| \hat \beta(Y,X) - \beta \right\|_q^q
+ 15 s \lambda^q \cdot \pi_k (k > o),
\end{aligned}
\end{equation}
where the last inequality follows from \eqref{eq: post mean}, \eqref{eq: lq}, and also the fact
$$
 \mathbf E_{\beta, k|Y,X} \left( \left| \beta_j  \right|^q \right)
 \begin{cases}
\le \lambda^q  & \text{ if } j \le 3s/2, \\
=0  & \text{ if } 3s/2 <  j \le p,
\end{cases}
$$
leading by the construction of $\pi_\beta$ (\eqref{eq: beta12} and \eqref{eq: pi beta}).

By Bernstein's inequality (Appendix D.4 in \citet{FML2018}), we have 
$$
\pi_k (k > o) \le \exp\left(- \frac{\frac1n (o-n \epsilon')^2 }{2 \epsilon' + \frac23 \frac{o-n \epsilon'}{n}} \right)
\le \exp(-9o/16),
$$
where the last inequality follows from $\epsilon' \le o/(4n)$.
Therefore, combining \eqref{eq: bayes lower bound} and \eqref{eq: bridge}, for any $q \in [1,2]$, we have the lower bound
\begin{equation}
\begin{aligned}
&\inf_{\hat \beta} ~\sup_{\beta: \|\beta\|_0 \le s} ~\sup_{\mathbf R \in \mathcal M (\beta,o)} \mathbf E_{(X,Y) \sim\mathbf R} \left\| \hat \beta(Y,X) - \beta \right\|_q^q \\
\ge & s \lambda^q \left( \frac18 - 15 \exp(-9o/16) \right)\\
\ge& \frac{(8C_{2s})^{-q/2}}{80 } \cdot \sigma^q s^{1-q/2} \frac{  o^q}{n^q},
\end{aligned}
\end{equation}
where the last inequality follows from the assumption $ o \ge 9$. 
Hence, we complete the proof of Theorem 4.

\subsection{Selection lower bound in Theorem 5}
The proof of Theorem 5 is quite similar to that of Theorem 4.
Define 
$$
\mathcal B(s,a):= \left\{ \beta \in \mathbb R^p: ~ \|\beta\|_0 \le s , \min_{i: \beta_i \ne 0} |\beta_i| \ge a  \right\}.
$$
From \citet{WJM07rec} and Fano's inequality \citep{AT09}, we have
$$
\inf_{\hat S} ~\sup_{\beta \in \mathcal B \left(s, ~c_1 \sigma \sqrt{ (\log (ep/s))/n} \right) }~\sup_{\mathbf R \in \mathcal P_{\beta, k=0} } \mathbf E_\mathbf R \Big\{ \left| \hat S( X,Y) ~ \triangle~  \text{supp}(\beta)\right| \Big\} \ge c_2 s,
$$
where $\inf_{\hat S}$ denotes the infimum over all support set estimation $\hat S$ based on $(X,Y)$, and $c_1, c_2 >0$ are two absolute constants.
Therefore, we only need to prove the selection lower bound based on the parameter subspace $\mathcal B(s, \lambda)$ with $\lambda = \sqrt{\frac{1}{8C_{2s}}}\cdot \frac{\sigma}{\sqrt s} \cdot \frac{ o}{n}$.
We introduce the decoder $\eta = \eta (\beta) \in \{0,1 \}^p$, with each $\eta_i = \left\{ \eta(\beta)\right\}_i = \mathbf 1(\beta_i \ne 0)$.
Denote the selector $\hat \eta(X,Y) \in \{ 0,1\}^p$ as the estimator of $\eta(\beta)$.
By using the same technique in Section \ref{sec: bayes lower}, we have
\begin{equation}\label{eq: bayes selection lower bound}
\begin{aligned}
&\inf_{\hat \eta} \mathbf E_{\beta \sim \pi_\beta} \mathbf E_{Y,X | \epsilon', \beta, \mathbf Q(\beta)} \left(   \left\|  \hat \eta( X,Y)\ - \eta(\beta) \right\|_2^2  \right)\\
= &\inf_{\hat S} \mathbf E_{\beta \sim \pi_\beta} \mathbf E_{Y,X | \epsilon', \beta, \mathbf Q(\beta)} \left(  \left| \hat S( X,Y) ~ \triangle~  \text{supp}(\beta)\right|  \right)\\
\ge & \frac s2 \cdot \inf_{\hat \beta}~ \frac12\cdot \left\{ \left[ (1-\epsilon') \mathbf P_1 + \epsilon' \mathbf Q_1 \right]\left( \left| \hat S( X,Y) ~ \triangle~  \text{supp}(\beta^{(1)})\right| \ge \frac s2 \right)  \right.\\
& \quad \quad \quad \quad \quad~ \left. + \left[ (1-\epsilon') \mathbf P_2 + \epsilon' \mathbf Q_2 \right]\left( \left| \hat S( X,Y) ~ \triangle~  \text{supp}(\beta^{(2)})\right| \ge \frac s2 \right)  \right\}\\
\ge & \frac s4.
\end{aligned}
\end{equation}

We next analyze the selection lower bound in our setting.
Here we rewrite the prior $\pi_\beta$ as a prior on $\eta$, i.e., $\pi_\eta(\eta = \eta^{(\ell)}) = 1/2 , \quad \ell =1 ,2 $,
where 
\begin{equation}
\begin{aligned}
\eta^{(1)} & := (\underbrace{1, \cdots, 1}_{s}~,~ \underbrace{0, \cdots, 0 }_{p-s})^\top \in \mathbb R^p,\\
\eta^{(2)} & := (\underbrace{0, \cdots, 0 }_{s/2}~, ~\underbrace{1, \cdots, 1}_{s}~,~ \underbrace{0, \cdots, 0 }_{p- {3s}/2} )^\top \in \mathbb R^p.
\end{aligned}
\end{equation}
Thus $\beta^{(\ell)} = \lambda \eta^{(\ell)}$.
And we get two results similar to \eqref{eq: post mean} and \eqref{eq: bridge}:
\begin{equation}
\begin{aligned}
&\inf_{\hat \eta} ~\sup_{\beta \in \mathcal B(s, \lambda) } ~\sup_{\mathbf R \in \mathcal M (\beta,o)} \mathbf E_{(X,Y) \sim\mathbf R} \left\| \hat \eta(Y,X) - \eta(\beta) \right\|_2^2\\
\ge &\inf_{\hat \eta}~ \mathbf E_{\eta \sim \pi_{\eta}} \mathbf E_{k \sim \pi_{k}^o} \mathbf E_{\mathbf R \sim U\left(\mathcal P_{
\lambda \eta,k}(\mathbf Q(\lambda\eta)) \right)} \mathbf E_{(X,Y) \sim\mathbf R} \sum_{j\in[p]} \left( \hat \eta_j (Y,X) - \eta_j \right)^2\\
\ge &\sum_{j\in [p]}  \mathbf E_{Y,X}~ \mathbf E_{\eta,k|Y,X}  \left( \hat T_j(Y,X) - \eta_j \right)^2,
\end{aligned}
\end{equation}
and 
\begin{equation} 
\begin{aligned}
&\inf_{\hat \eta} \mathbf E_{\beta \sim \pi_\beta} \mathbf E_{Y,X | \epsilon', \beta, \mathbf Q(\beta)} \left( \| \hat \eta - \eta\|_2^2 \right)\\
\le & \mathbf E_{\eta \sim \pi_{\eta} } \mathbf E_{k\sim \pi_{k}^o}  \mathbf E_{\mathbf R \sim U\left(\mathcal P_{\lambda \eta,k}(\mathbf Q( \lambda \eta)) \right)} \mathbf E_{(X,Y) \sim\mathbf R} \left\{ \sum_{j\in [p]} \left( \hat T_j (Y,X) - \eta_j \right)^2 \right\}\\
&+ 2s \cdot  \mathbf E_{\eta \sim \pi_{\eta} } \mathbf E_{k\sim \pi_{k}}  \mathbf E_{\mathbf R \sim U\left(\mathcal P_{\lambda \eta,k}(\mathbf Q(\lambda \eta)) \right)} \mathbf E_{(X,Y) \sim\mathbf R} \left\{  \mathbf 1 ( k > o)  \right\} \\
\le & \inf_{\hat \eta} ~\sup_{\beta \in \mathcal B(s, \lambda)} ~\sup_{\mathbf R \in \mathcal M (\beta,o)} \mathbf E_{(X,Y) \sim\mathbf R} \left\| \hat \eta(Y,X) - \eta(\beta) \right\|_2^2
~+~ 2s \cdot \pi_k (k > o),
\end{aligned}
\end{equation}
where $\hat T_j (Y,X) := \mathbf E_{\eta,k|Y,X}(\eta_j) \in [0,1]$.
Therefore,
\begin{equation}
\begin{aligned}
&\inf_{\hat S} ~\sup_{\beta \in \mathcal B(s, \lambda)} ~\sup_{\mathbf R \in \mathcal M (\beta,o)} \mathbf E_{(X,Y) \sim\mathbf R} \left| \hat S( X,Y) ~ \triangle~  \text{supp}(\beta)\right| \\
\ge & \frac s4 -  2s \cdot   \exp(-9o/16) \\
\ge&  \frac s5,
\end{aligned}
\end{equation}
where the last inequality follows from $o \ge 8$, and thus we complete the proof of Theorem 5.

\section{Technical Lemmas}\label{supp: Technical lemmas}

\begin{lemma}[Support error control and element-wise error control]\label{lemma: supp error}
Assume \(\xi_i \sim SG(0,\sigma^2)\) independently for \(i\in[n]\).
Recall \(S^*=\operatorname{supp}(\beta^*)\).
Then, under Assumptions~1 and~2, we have
    \begin{equation}\label{eq: XiS*}
        \mathbf P \left(   \frac1n \left\|  X_{\cdot S^*}^\top \xi \right\|_2 \le  \sigma \sqrt{\frac{4Ms +6M \log(1/\varrho) }{n}} \right) \ge 1-2p^{-2s} - \varrho.
    \end{equation}
    Additionally, we have
\begin{equation}\label{eq: entryXi}
    \mathbf P\left( \sup_{i \in [p]} \left| \frac1n X_{\cdot i}^\top \xi \right| \le 4 \sigma \sqrt{\frac{M\log p}{n}} \right) >1-2p^{-2s} - 2 p^{-3},
\end{equation}
and 
\begin{equation}\label{eq: entryxi}
\mathbf P\left( \sup_{k \in [n]} \left| \frac1{\sqrt n} \xi_k \right| \le 3 \sigma \sqrt{\frac{ \log n}{n} }\right) > 1- 2 n^{-3}.
\end{equation}
\end{lemma}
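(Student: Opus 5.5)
The plan is to condition on $X$ throughout, and to use the independence of $\xi$ and $X$ together with the restricted isometry event of Proposition~\ref{prop: rii}. The three bounds are handled in turn.

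\textbf{Bound \eqref{eq: XiS*}.} Fix $X$ and set $A:=X_{\cdot S^*}^\top\in\mathbb R^{s\times n}$. Conditionally on $X$, the vector $A\xi$ is a quadratic form in a $\sigma$-sub-Gaussian vector. Theorem~2.1 of \citet{hsu2012tail} gives, with conditional probability at least $1-e^{-u}$,
\[
\|A\xi\|_2^2\le\sigma^2\bigl(\operatorname{tr}(A^\top A)+2\|A^\top A\|_F\sqrt u+2\|A^\top A\|_2u\bigr).
\]
On the restricted isometry event of Proposition~\ref{prop: rii} (taking $C_1=1$, which has probability at least $1-2p^{-2s}$), we have $\|X_{\cdot S^*}^\top X_{\cdot S^*}\|_2\le 2Mn$. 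Hence the trace is at most $2Mns$ and the Frobenius norm is at most $2Mn\sqrt s$. Using $2\sqrt{su}\le s+u$, the right-hand side is bounded by $\sigma^2 2Mn(2s+3u)$. Taking $u=\log(1/\varrho)$, dividing by $n^2$, and integrating over $X$ (adding $\mathbf P(\mathcal E_X^c)$) yields \eqref{eq: XiS*}.

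\textbf{Bound \eqref{eq: entryXi}.} For each fixed $i$, conditionally on $X$, the quantity $n^{-1}X_{\cdot i}^\top\xi$ is sub-Gaussian with variance proxy $\sigma^2\|X_{\cdot i}\|_2^2/n^2$. On the same isometry event (singleton sets $S=\{i\}$), $\|X_{\cdot i}\|_2^2\le 2Mn$, so the proxy is at most $2M\sigma^2/n$. The Chernoff bound at level $4\sigma\sqrt{M\log p/n}$ then gives tail probability at most $2\exp(-16M\log p/(4M))=2p^{-4}$. A union bound over $p$ coordinates gives $2p^{-3}$, and adding the design failure probability $2p^{-2s}$ completes this bound.

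\textbf{Bound \eqref{eq: entryxi}.} This is a direct union bound: $\mathbf P(|\xi_k|>3\sigma\sqrt{\log n})\le 2e^{-9\log n/2}\le 2n^{-4}$, and summing over $n$ indices gives $2n^{-3}$.

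The only mildly delicate step is the first bound. The design event has to be stated uniformly, so that it does not depend on $\xi$; the conditional tail bound is then integrated. The constant bookkeeping in the Hsu--Kakade--Zhang bound must land exactly on $4Ms+6M\log(1/\varrho)$. Everything else is a routine sub-Gaussian tail computation.
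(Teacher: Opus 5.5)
Your proposal is correct and follows the paper's proof essentially step for step. You condition on $X$ and apply Theorem 2.1 of Hsu--Kakade--Zhang on the restricted-isometry event from Proposition~\ref{prop: rii} with $C_1=1$ (probability at least $1-2p^{-2s}$), bound the trace, Frobenius and operator norms by $2Mns$, $2Mn\sqrt{s}$ and $2Mn$, and use $2\sqrt{su}\le s+u$ with $u=\log(1/\varrho)$ to reach $4Ms+6M\log(1/\varrho)$, just as the paper does, and your two entrywise bounds are the paper's conditional sub-Gaussian tail plus union bound using $\max_i\|X_{\cdot i}\|_2^2\le 2Mn$ on that event.
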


\begin{proof}[Proof of Lemma \ref{lemma: supp error}]
In this lemma, Assumption~2 is used through Proposition~1 with \(C_1=1\), namely in the concrete form
\[
n\ge 30M^2C_\Sigma\max(s\log p,o\log n).
\]
Define 
\begin{equation}\label{eq: El1}
\mathcal E_{L1} = \left\{ \sup_{S \subset [p]:~ |S| \le s } ~ \max_{1\le k \le |S|}~\Lambda_k\left( \frac1n X_{\cdot S}^\top X_{\cdot S}\right) \le 2M\right\}
\end{equation}
Then by Proposition 1, $\mathbf P (\mathcal E_{L1}) \ge 1-2 \exp(-2s \log p)$.
The next two parts constitute the main proof. \\
\newline
{\bf For Equation \eqref{eq: XiS*}}\quad
By using Theorem 2.1 of \citet{hsu2012tail} and Assumption 1, for a given $X$, we have
\begin{equation*} 
  \mathbf P \left\{ \frac{\left\|  X_{\cdot S^*}^\top \xi \right\|_2^2}{n\sigma^2} \ge tr\left(\frac{X_{\cdot S^*}^\top X_{\cdot S^*}}n \right)+ 2\left\|\frac{X_{\cdot S^*}^\top X_{\cdot S^*}}n \right\|_F\sqrt{t}+2\Lambda_{\max}\left( \frac{X_{\cdot S^*}^\top X_{\cdot S^*}}n \right) t ~ \Bigg | X\right\} \le e^{-t}.
\end{equation*}
Combining with $|S^*|= s$, under event $\mathcal E_{L1}$,
\begin{equation*}
\begin{aligned}
tr\left(\frac{X_{\cdot S^*}^\top X_{\cdot S^*}}n \right) =&
 \sum_{k=1}^{s} \Lambda_k (\frac{X_{\cdot S^*}^\top X_{\cdot S^*}}n )
\le 2Ms,\\
\left\|\frac{X_{\cdot S^*}^\top X_{\cdot S^*}}n \right\|_F^2 = &
\sum_{k=1}^{s} \Lambda_k^2 \left( \frac{X_{\cdot S}^\top X_{\cdot S}}n \right)
\le 4M^2s.
\end{aligned}
\end{equation*} 
By taking $t= \log(1/\varrho)$ (where $\varrho \in (0,1)$), we obtain
\begin{equation*}
\begin{aligned}
&  \mathbf P \left(\frac{1}{n\sigma^2}\left\|  X_{\cdot S^*}^\top \xi \right\|_2^2 \ge 4Ms +6M \log(1/\varrho) \right)\\
\le&  \mathbf P (\mathcal E_{L1}^c) + \mathbf E_X \left\{ \mathbf1(\mathcal E_{L1}) \cdot \mathbf P \left(\frac{1}{n\sigma^2}\left\|  X_{\cdot S^*}^\top \xi \right\|_2^2 \ge 4Ms +6M \log(1/\varrho) ~\Big|~ X\right) \right\}\\
\le & 2p^{-2s} + \varrho,
\end{aligned}
\end{equation*}
which completes the proof of equation \eqref{eq: XiS*}.\\
\newline
{\bf For Equation \eqref{eq: entryXi} and \eqref{eq: entryxi}} \quad
For a given $X$, it is straightforward that $\Xi_i =  \frac1n X_{\cdot i}^\top \xi$ is sub-Gaussian random variable with sub-Gaussian parameter $ \sigma \|X_{\cdot i} \|_2 /n $ \citep{wainwright2019high}, leading that
\begin{align*}
&\mathbf P\left( \sup_{i \in [p]} \left| \frac1n X_{\cdot i}^\top \xi \right| > 4 \sigma \sqrt{\frac{M\log p}{n}} \right)\\
\le& \mathbf P (\mathcal E_{L1}^c) + \mathbf E_X \left\{ \mathbf1(\mathcal E_{L1}) \cdot \mathbf P \left( \sup_{i \in [p]} \left| \frac1n X_{\cdot i}^\top \xi \right| > 4 \sigma \sqrt{\frac{M\log p}{n}}  ~\Big|~ X\right) \right\}\\ 
\le& 2p^{-2s} + p\cdot 2e^{-4 \log p}
~=~ 2p^{-2s} + 2p ^{-3},
\end{align*}
where the second inequality applies the union bound and the fact $\max_{i \in [p]} \| X_{\cdot i} \|_2^2 \le 2Mn$ under the event $\mathcal E_{L1}$. 
A similar union bound yields that
\begin{equation*}
\mathbf P\left( \sup_{k \in [n]} \left| \frac1{\sqrt n} \xi_k \right| > 3 \sigma \sqrt{\frac{ \log n}{n} }\right)\le 2\exp\left(- 3\log n \right).
\end{equation*}
Therefore, we complete the proof of Lemma \ref{lemma: supp error}.
\end{proof}

Next, we control the error related to the oracle estimator. Recall that
\begin{align*}
    \beta^{\dag}_{S^*} &= \beta^{*}_{S^*} + (X^{\top}_{-O^*,S^*}X_{-O^*,S^*})^{-1}X_{-O^*,S^*}^{\top}\xi_{-O^*},\quad \beta^{\dag}_{-S^*} = \mathbf{0} , \\
    \theta^{\dag}_{O^*} &= \theta^{*}_{O^*} + \frac{1}{\sqrt{n}} \Big\{ 
 \xi_{O^*} - X_{O^*,S^*}(X_{-O^*,S^*}^{\top}X_{-O^*,S^*})^{-1}X^{\top}_{-O^*,S^*}\xi_{-O^*} \Big\},\quad \theta^{\dag}_{-O^*} = \mathbf{0} , \\
    \Xi_{S^*}^\dag &= \mathbf{0},\quad \xi_{O^*}^\dag = \mathbf{0},\\
    \Xi_{-S^*}^\dag &= \frac{1}{n}X_{-O^*,-S^*}^{\top} \Big\{  I_{n-o} -P_{-O^*,S^*} \Big\} \xi_{-O^*},\\
    \xi_{-O^*}^\dag &= \Big\{  I_{n-o} - P_{-O^*,S^*} \Big\} \xi_{-O^*},
\end{align*}
where we define 
\begin{equation*}
    P_{-O^*,S^*} :=  X_{-O^*,S^*}(X_{-O^*,S^*}^{\top}X_{-O^*,S^*})^{-1}X_{-O^*,S^*}^{\top} \in \mathbb R^{(n-o) \times (n-o)}.
\end{equation*}

\begin{lemma}[Oracle error control]\label{lemma: oracle error}
Assume that \(\xi_i\sim SG(0,\sigma^2)\) independently for \(i\in[n]\).
Under Assumptions~1 and~2, we have
\begin{equation}\label{eq: oracle error 1}
\begin{aligned}
\mathbf P\left( \sup_{i\in (S^*)^c}|\Xi^{\dag}_i| < 4 \sigma \sqrt{\frac{M \log p}{n} } \right) > &1- 2 p^{-3} - 2p^{-2s} , \\
\mathbf P\left( \sup_{i\in (O^*)^c}\left|\frac{\xi^{\dag}_i}{\sqrt{n}} \right| < 3\sigma \sqrt{\frac{\log n}n} \right) > & 1- 2n^{-3}.
\end{aligned}
\end{equation}
Additionally, under signal condition \eqref{eq: betathetamin}, there are element-wise estimation bounds on the oracle estimators $\beta^\dag$ and $\theta^\dag$:
\begin{equation}\label{eq: oracle error 2}
\begin{aligned}
        \mathbf P \left(\sup_{i\in S^*}\left|\beta^{\dag}_i -\beta^{*}_i\right| < 5 \sigma \sqrt{\frac{M \log p}n}  \right) &\ge 1- 4 p^{-2s} - 2 p^{-3},\\
       \mathbf P \left( \sup_{i\in O^*}\left|\theta^{\dag}_i -\theta^{*}_i\right| < 4 \sigma \sqrt{\frac{\log n}n} \right) &\ge 1- 4 p^{-2s}  - 2n^{-3} .
\end{aligned}
\end{equation}
\end{lemma}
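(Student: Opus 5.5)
The proof splits into four sub-Gaussian tail bounds, each handled by conditioning on $X$ and using that $\xi$ is independent of $X$. Throughout we work on the event $\mathcal E_X$ from Proposition~1, together with the restricted eigenvalue bound on $X_{-O^*,S^*}$ from Lemma~\ref{lemma: rrip}: $\Lambda_{\min}(X^\top_{-O^*,S^*}X_{-O^*,S^*})\ge n/(3M)$. All four bounds come from one mechanism. Each quantity is a linear form $a^\top\xi$ with $a$ measurable in $X$. Hence, conditional on $X$, it is $\sigma\|a\|_2$-sub-Gaussian. A union bound over coordinates then finishes, and we add $\mathbf P(\mathcal E_X^c)\lesssim p^{-2s}$.

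\textbf{The two oracle noise terms.} For $i\in(S^*)^c$, write
\[
\Xi^\dag_i=\tfrac1n a_i^\top\xi_{-O^*},\qquad a_i=(I-P_{-O^*,S^*})X_{-O^*,i}.
\]
Since $I-P_{-O^*,S^*}$ is a projection, $\|a_i\|_2\le\|X_{\cdot i}\|_2\le\sqrt{2Mn}$ on $\mathcal E_X$. Thus $\Xi^\dag_i$ is $\sigma\sqrt{2M/n}$-sub-Gaussian. The tail at level $4\sigma\sqrt{M\log p/n}$ is at most $2e^{-4\log p}$, and a union over $p$ coordinates gives $2p^{-3}$. This is exactly the argument for \eqref{eq: entryXi}.

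Similarly, for $k\in(O^*)^c$ we have $\xi^\dag_k=e_k^\top(I-P)\xi_{-O^*}$, and $\|(I-P)e_k\|_2\le1$. So $\xi^\dag_k/\sqrt n$ is $\sigma/\sqrt n$-sub-Gaussian. Level $3\sigma\sqrt{\log n/n}$ gives tail $2e^{-4.5\log n}$, and a union over $n$ yields $2n^{-3}$. This bound needs no design event.

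\textbf{The coordinate errors of $\beta^\dag$.} For $i\in S^*$,
\[
\beta^\dag_i-\beta^*_i=b_i^\top\xi_{-O^*},\qquad b_i=X_{-O^*,S^*}(X^\top_{-O^*,S^*}X_{-O^*,S^*})^{-1}e_i,
\]
so $\|b_i\|_2^2=e_i^\top(X^\top X)^{-1}_{-O^*,S^*}e_i\le 3M/n$. The sub-Gaussian tail at $5\sigma\sqrt{M\log p/n}$ is then $2\exp(-25\log p/6)$. A union over the $s\le p$ indices gives at most $2p^{-3}$, and adding the failure probabilities of $\mathcal E_X$ and the restricted eigenvalue event gives $4p^{-2s}$.

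\textbf{The coordinate errors of $\theta^\dag$.} For $k\in O^*$,
\[
\sqrt n(\theta^\dag_k-\theta^*_k)=\xi_k-X_{k,S^*}(X^\top_{-O^*,S^*}X_{-O^*,S^*})^{-1}X^\top_{-O^*,S^*}\xi_{-O^*}.
\]
The two pieces are independent, since $k\notin -O^*$, and the combined coefficient vector has squared norm $1+c_k$, where
\[
c_k=X_{k,S^*}(X^\top_{-O^*,S^*}X_{-O^*,S^*})^{-1}X_{k,S^*}^\top\le\tfrac{3M}{n}\|X_{k,S^*}\|_2^2.
\]
By the restricted incoherence \eqref{eq: inco} with $|O|=1$, we get $\|X_{k,S^*}\|_2^2\le C_M^2(s\log p+o\log n)$. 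Under the sample-size condition $n\ge C_{\mathrm{Th3}}(s\log p+o\log n)$, with $C_{\mathrm{Th3}}\ge 6MC_M^2$, this gives $c_k\le1/2$. So the variance proxy is at most $1.5\sigma^2$. Level $4\sigma\sqrt{\log n}$ has tail $2e^{-16\log n/3}$, and a union over $o\le n$ gives at most $2n^{-3}$.

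\textbf{Main obstacle.} The only delicate point is the conditioning. The events $\mathcal E_X$ and the restricted eigenvalue event must be fixed before applying the conditional sub-Gaussian tails. We then integrate, as in the proof of \eqref{eq: XiS*}, via $\mathbf P\le\mathbf P(\mathcal E_X^c)+\mathbf E_X[\mathbf 1(\mathcal E_X)\mathbf P(\cdot\mid X)]$. The remaining constants are bookkeeping, and they explain the $4p^{-2s}$ terms.

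The signal condition \eqref{eq: betathetamin} is not actually needed for these bounds. It appears in the statement only because the lemma is used under it.
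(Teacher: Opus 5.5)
Your proposal is correct and follows the paper's proof essentially step by step. Each quantity is written as a linear form in $\xi$ that is conditionally sub-Gaussian given $X$, with variance proxies controlled by the projection bound, the column bound $\|X_{\cdot i}\|_2^2\le 2Mn$, the eigenvalue bound $\Lambda_{\min}(X_{-O^*,S^*}^\top X_{-O^*,S^*})\ge n/(3M)$ from Lemma~\ref{lemma: rrip}, and restricted incoherence, followed by union bounds. The only cosmetic difference is that you bound the single row $\|X_{k,S^*}\|_2$ where the paper bounds the block $\|X_{O^*,S^*}\|_2$; you are also right that the signal condition plays no role.
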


\begin{proof}[Proof of Lemma \ref{lemma: oracle error}]
Two parts constitute the proof.\\
In this lemma, Assumption~2 is used in the concrete form
\[
n\ge C_{\mathrm{oracle}}\{s\log p+o\log n\},
\qquad
C_{\mathrm{oracle}}
\ge
\max\left\{
30M^2C_\Sigma,\,
6MC_M^2
\right\}.
\]
This condition covers the application of Proposition~1 with \(C_1=1\), and the minimum-eigenvalue bound in Lemma~\ref{lemma: rrip}.

{\bf For Equation \eqref{eq: oracle error 1}}\quad
For every $i \in (O^*)^c$, it is straightforward that $\xi_i^\dag = e_i^\top (I_{n-o} - P_{-O^*,S^*} )\xi_{-O^*}$ satisfies
\begin{equation}\label{eq: mgf referee}
\mathbf E_{\xi,X}\left( e^{\lambda\xi_i^\dag  } \right) 
\le  \mathbf E_{X}\exp\left( \frac{\lambda^2 \sigma^2}{2} \left\| e_i^\top  \left(I_{n-o} -P_{-O^*,S^*} \right) \right\|_2^2 \right) \le \exp\left( \frac{\lambda^2 \sigma^2}{2} \right)
\end{equation}
for all $\lambda \in \mathbb R$, where $e_i \in \mathbb R^{(n-o)\times 1}$ is a vector with 1 at the $i$-th position and 0 elsewhere.
Therefore, $\xi_i^\dag$ is a sub-Gaussian variable with sub-Gaussian parameter $\sigma$ \citep{wainwright2019high}, which leads that
\begin{align*}
\mathbf P_{\xi,X}\left( \sup_{i\in (O^*)^c}\left| \xi^{\dag}_i \right| \ge 3 \sigma \sqrt{ \log n } \right)
&\le \sum_{i\in (O^*)^c} 2 \exp\left(- \frac{9 \log n}{2} \right) 
~\le~ 2\exp\left(- 3\log n \right).
\end{align*}

Similarly, with a fixed design $X \in \mathbb R^{n \times p}$, for every $i \in (S^*)^c$,
$$
\mathbf E_{\xi |X}\left( e^{\lambda\Xi_i^\dag  } \right) 
\le \exp\left( \frac{\lambda^2 \sigma^2 }{2n^2} \left\| (I_{n-o}- P_{-O^*, S^*})  X_{-O^*, i } \right\|_2^2 \right) \le \exp\left( \frac{\lambda^2 \sigma^2}{2n^2} \left\| X_{\cdot, i } \right\|_2^2 \right),
$$
by the event $\mathcal E_{L1}$ (defined in \eqref{eq: El1}), we conclude that
\begin{align*}
\mathbf P\left( \sup_{i\in (S^*)^c}|\Xi^{\dag}_i| \ge 4 \sigma \sqrt{\frac{M \log p}{n} }\right)
& \le \mathbf P\left( \sup_{i\in (S^*)^c}|\Xi^{\dag}_i| \ge 4 \sigma \sqrt{\frac{M \log p}{n} } , X \in \mathcal E_{L1} \right) + \mathbf P (\mathcal E_{L1}^c) \\
&\le 2 p^{-3} + 2p^{-2s},
\end{align*}
therefore we complete the proof of \eqref{eq: oracle error 1}.\\
\newline
{\bf For Equation \eqref{eq: oracle error 2}}\quad
Similar to \eqref{eq: mgf referee}, with a fixed design $X \in \mathbb R^{n \times p}$, for every $i \in S^*$ and $\lambda \in \mathbb R$ we have
\begin{align*}
\mathbf E_{\xi|X}\left( e^{\lambda(\beta^{\dag}_i -\beta^{*}_i )} \right) 
\overset{(i)}\le& \exp\left( \frac{\lambda^2 \sigma^2}{2} \left\| e_i^\top (X^{\top}_{-O^*,S^*}X_{-O^*,S^*})^{-1}X_{-O^*,S^*}^{\top} \right\|_2^2 \right)\\
\le& \exp\left( \frac{\lambda^2 \sigma^2}{2}  \left\| (X^{\top}_{-O^*,S^*}X_{-O^*,S^*})^{-1}X_{-O^*,S^*}^{\top} \right\|_2^2 \right)\\
\overset{(ii)}=& \exp\left( \frac{\lambda^2 \sigma^2}{2}  \left\| (X^{\top}_{-O^*,S^*}X_{-O^*,S^*})^{-1} \right\|_2 \right)\\
=& \exp\left( \frac{\lambda^2 \sigma^2}{2}  \frac1{\Lambda_{\min} (X^{\top}_{-O^*,S^*}X_{-O^*,S^*}) } \right),
\end{align*}
where inequality (i) follows from the relationship \eqref{oracle solution}, inequality (ii) follows from $\|A\|_2^2 = \| A A^\top \|_2$.

Define the event
\[
\mathcal E_{L2} :=
\left\{
\left\| X_{O^*,S^*} \right\|_2
\le
C_M \sqrt{s \log p + o \log n },
\quad
\Lambda_{\min}\left(
X_{-O^*, S^*}^\top X_{-O^*, S^*}
\right)
\ge
\frac n{3M}
\right\}.
\]
Then, by Proposition~1 and Lemma~\ref{lemma: rrip}, we have
$
\mathbf P(\mathcal E_{L2}) \ge 1-4p^{-2s}.
$
Therefore, by the probability union bound, we have 
\begin{align*}
\mathbf P\left( \sup_{i\in S^*}\left|\beta^{\dag}_i -\beta^{*}_i\right| \ge 5 \sigma \sqrt{\frac{M \log p}n} \right)
&\le \mathbf P\left( \sup_{i\in S^*}\left|\beta^{\dag}_i -\beta^{*}_i\right| \ge 5 \sigma \sqrt{\frac{M \log p}n} ~ \big| \mathcal E_{L2} \right)\mathbf P( \mathcal E_{L2}) + \mathbf P( \mathcal E_{L2}^c )\\ 
& \le 2 p^{-3} +4 p^{-2s}.
\end{align*}

Additionally, with a fixed design $X \in \mathbb R^{n \times p}$, for every $i \in O^*$ and $\lambda \in \mathbb R$, we have
\begin{align*}
\mathbf E_{\xi|X}\left( e^{\lambda(\theta^{\dag}_i -\theta^{*}_i )} \right) 
=& \left\{ \mathbf E_\xi~ e^{\lambda \xi_i/\sqrt n} \right\} \cdot 
  \left\{  \mathbf E_{\xi|X}~ \exp\left( -\frac{\lambda}{\sqrt n} e_i^\top X_{O^*,S^*}(X_{-O^*,S^*}^{\top}X_{-O^*,S^*})^{-1}X^{\top}_{-O^*,S^*} \xi_{-O^*}\right)\right\} \\
\le& \exp\left( \frac{\lambda^2 \sigma^2}{2n}\right) \cdot \exp\left( \frac{\lambda^2 \sigma^2}{2n} \left\| e_i^\top X_{O^*,S^*} (X^{\top}_{-O^*,S^*}X_{-O^*,S^*})^{-1}X_{-O^*,S^*}^{\top} \right\|_2^2 \right)\\
\le& \exp\left\{ \frac{\lambda^2 \sigma^2}{2n} \left(1+ \left\| X_{O^*,S^*} \right\|_2^2 \left\|(X^{\top}_{-O^*,S^*}X_{-O^*,S^*})^{-1}\right\|_2 \right) \right\}\\
=& \exp\left\{ \frac{\lambda^2 \sigma^2}{2n} \left(1+ \frac{\left\| X_{O^*,S^*} \right\|_2^2}{\Lambda_{\min}\left(X^{\top}_{-O^*,S^*}X_{-O^*,S^*} \right) } \right) \right\}
\end{align*}

Therefore, under the assumption $n \ge 3 M C_M^2 (s \log p + o \log n)$, we have
\begin{align*}
\mathbf P\left( \sup_{i\in O^*}\left|\theta^{\dag}_i -\theta^{*}_i\right| \ge 4 \sigma \sqrt{\frac{\log n}n} \right)
& \le \mathbf P\left( \sup_{i\in O^*}\left|\theta^{\dag}_i -\theta^{*}_i\right| \ge 4 \sigma \sqrt{\frac{\log n}n} ~,~ X \in \mathcal E_{L2}\right) + \mathbf P\left(\mathcal E_{L2}^c \right) \\
&\le  2 n^{-3} + 4p^{-2s}.
\end{align*}
Therefore, we complete the proof of \eqref{eq: oracle error 2}, and hence the proof of Lemma \ref{lemma: oracle error} is completed.

\end{proof}

\begin{lemma}[Restricted minimum eigenvalue control]\label{lemma: rrip}
Recall $S^{*}=\operatorname{supp}(\beta^{*})$ and
$O^*=\operatorname{supp}(\theta^*)$.
Under Assumptions~1 and~2,
\[
\mathbf P\left(
\Lambda_{\min}\left(
\frac{X_{-O^*, S^*}^\top X_{-O^*, S^*}}{n}
\right)
\ge
\frac1{3M}
\right)
\ge
1-4p^{-2s}.
\]
\end{lemma}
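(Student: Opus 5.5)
Since $O^*$ and $S^*$ are fixed, deterministic sets (they depend only on $\theta^*$ and $\beta^*$, not on the design), the matrix $X_{-O^*,S^*}^\top X_{-O^*,S^*}$ is simply a sum of $n-o$ independent sub-Gaussian outer products restricted to the $s$ coordinates in $S^*$. I would therefore avoid any uniform-over-sets argument for the complement and instead use a subtraction decomposition:
\[
X_{-O^*,S^*}^\top X_{-O^*,S^*} = X_{\cdot,S^*}^\top X_{\cdot,S^*} - X_{O^*,S^*}^\top X_{O^*,S^*}.
\]
By Weyl's inequality,
\[
\Lambda_{\min}\bigl(X_{-O^*,S^*}^\top X_{-O^*,S^*}/n\bigr) \ge \Lambda_{\min}\bigl(X_{\cdot,S^*}^\top X_{\cdot,S^*}/n\bigr) - \|X_{O^*,S^*}\|_2^2/n.
\]

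First, I would apply Proposition~\ref{prop: rii} with $C_1=1$. Its restricted isometry part, applied to $S=S^*$ with $|S^*|=s$, gives $\Lambda_{\min}(X_{\cdot,S^*}^\top X_{\cdot,S^*}/n)\ge 1/(2M)$. Its restricted incoherence part, applied to $S=S^*$ and $O=O^*$ with $|O^*|\le o$, gives $\|X_{O^*,S^*}\|_2^2\le C_M^2(s\log p+o\log n)$. Both statements hold simultaneously with probability at least $1-4\exp(-2s\log p)=1-4p^{-2s}$, which is exactly the probability claimed in the lemma. The sample-size form $n\ge 30M^2C_\Sigma\max(s\log p,o\log n)$ required by the proposition is covered by the concrete versions of Assumption~\ref{assumption: sample} used in the surrounding proofs.

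Second, I would combine the two bounds:
\[
\Lambda_{\min} \ge \frac{1}{2M} - \frac{C_M^2(s\log p+o\log n)}{n}.
\]
Under the concrete sample-size condition $n\ge 6MC_M^2(s\log p+o\log n)$, which is the one listed in $C_{\mathrm{oracle}}$, the second term is at most $1/(6M)$. The right-hand side is then at least $1/(2M)-1/(6M)=1/(3M)$, as required.

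The only real subtlety is bookkeeping rather than mathematics. One must make sure the sample-size constant used here, $6MC_M^2$, is included among the constants assumed in Assumption~\ref{assumption: sample}; it is, via $C_{\mathrm{oracle}}$ and $C_{\mathrm{Th3}}$. One must also check that applying Proposition~\ref{prop: rii} with $C_1=1$ yields probability $1-4p^{-2s}$ rather than a weaker bound, and it does. No fresh concentration argument for the $(n-o)$-row submatrix is needed.
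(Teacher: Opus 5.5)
Your proof is correct and matches the paper's argument step for step. The paper also splits $X_{\cdot,S^*}^\top X_{\cdot,S^*}$ into the Gram matrices of the $O^*$ rows and the $(O^*)^c$ rows and applies Weyl's inequality. It then invokes both parts of Proposition~\ref{prop: rii} with $C_1=1$, which gives probability $1-4p^{-2s}$, and closes with $n\ge 6MC_M^2(s\log p+o\log n)$ to obtain $1/(2M)-1/(6M)=1/(3M)$. Your remark that $O^*$ and $S^*$ are deterministic is harmless but not needed, since the bounds in Proposition~\ref{prop: rii} hold uniformly over all $|S|\le s$ and $|O|\le o$.
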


\begin{proof}[Proof of Lemma~\ref{lemma: rrip}]
In this proof, Assumption~2 is used in the concrete form
\[
n\ge \max\left\{
30M^2C_\Sigma,\,
6MC_M^2
\right\} \cdot (s\log p+o\log n).
\]
This condition covers both the application of Proposition~1 with \(C_1=1\) and the bound
\begin{equation}\label{eq: lemma 3 sample}
C_M^2\frac{s\log p+o\log n}{n}
\le
\frac{1}{6M}.
\end{equation}
By Weyl's inequality and the decomposition $X_{\cdot,S^*}^\top X_{\cdot,S^*}
=X_{O^*,S^*}^\top X_{O^*,S^*}+
X_{-O^*,S^*}^\top X_{-O^*,S^*}$,
with a probability greater than $1- 4p^{-2s}$, we have
\[
\begin{aligned}
\Lambda_{\min}\left(
\frac{X_{-O^*,S^*}^\top X_{-O^*,S^*}}{n}
\right)
&\ge
\Lambda_{\min}\left(
\frac{X_{\cdot,S^*}^\top X_{\cdot,S^*}}{n}
\right)
-
\Lambda_{\max}\left(
\frac{X_{O^*,S^*}^\top X_{O^*,S^*}}{n}
\right) \\
&\ge
\frac{1}{2M}
-
\frac{\|X_{O^*,S^*}\|_2^2}{n} \\
&\ge
\frac{1}{2M}
-
C_M^2\frac{s\log p+o\log n}{n} \\
&\ge
\frac{1}{3M},
\end{aligned}
\]
where the second inequality follows from the restricted isometry in Proposition~1, the third inequality follows from the restricted incoherence in Proposition~1,  and the last inequality follows from the condition \eqref{eq: lemma 3 sample}.
This completes the proof of Lemma~\ref{lemma: rrip}.
\end{proof}

\section{Additional Details about the Extension to GLM}\label{supp: GLM}
In the Discussion section, we extend the two-stage AC-IHT algorithm to the setting of generalized linear models (GLMs). 
Compared to the linear model case, only the gradient descent step requires modification. Therefore, we begin by analyzing the decomposition of $H_{\beta}^{t+1}$ and $H_{\theta}^{t+1}$.

\subsection{Preliminary}
Let $X^{(i)} \in \mathbb R^{p \times 1}$ be the transformation of the $i$th row of X.
We have 
\begin{equation}\label{eq:GLMH_beta}
\begin{aligned}
H^{t+1}_{\beta} :=& \beta^t - \frac{ \eta }{n} \sum_{i=1}^n X^{(i)}\left(b'(\zeta_i^t) - Y_i \right)\\
=& \beta^* -(\beta^*-\beta^t) - \frac{ \eta }{n} \sum_{i=1}^n X^{(i)}\Big\{ \left(b'(\zeta_i^t) - b'(\zeta_i^*)\right)
    -\left(Y_i - b'(\zeta_i^*)\right) \Big\}\\
=& \beta^* + \left(\frac{ \eta }{n} \sum_{i=1}^n b''( \tilde\zeta_i^t ) X^{(i)} (X^{(i)})^{\top} - \mathbf I_p \right) \left( \beta^* -\beta^t\right) \\
    &+ \frac{{ \eta }}{\sqrt{n}}\left( \sum_{i=1}^n b''( \tilde\zeta_i^t)X^{(i)}e_i^\top\right)(\theta^* - \theta^t)+\frac{ \eta }{n} \sum_{i=1}^n X^{(i)}\left(Y_i-b'(\zeta_i^*)\right) \\
=& \beta^* + \tilde{\Phi}^t \left( \beta^* -\beta^t\right) + \frac{{ \eta }}{\sqrt{n}}(\tilde{X}^{t})^{\top}\left( \theta^* - \theta^t\right) +  { \eta }\tilde \Xi ,
\end{aligned}
\end{equation}

\begin{equation}\label{eq:GLMH_theta}
\begin{aligned}
H^{t+1}_{\theta} :=& \theta^t - \frac{ \eta }{\sqrt{n}} \sum_{i=1}^n e_i\left(b'(\zeta_i^t) - Y_i \right)\\
=& \theta^* -(\theta^*-\theta^t) - \frac{ \eta }{\sqrt{n}} \sum_{i=1}^n e_i\Big\{ \left(b'(\zeta_i^t) - b'(\zeta_i^*)\right)
    -\left(Y_i - b'(\zeta_i^*)\right) \Big\}\\
=& \theta^* + \frac{ \eta }{\sqrt{n}}\left( \sum_{i=1}^n b''( \tilde\zeta_i^t ) e_i (X^{(i)})^\top \right) \left( \beta^* -\beta^t\right) 
    + \left( { \eta }\sum_{i=1}^n b''( \tilde\zeta_i^t)e_ie_i^\top -\mathbf I_n\right)(\theta^* - \theta^t)\\
    &+\frac{ \eta }{\sqrt{n}} \sum_{i=1}^n e_i\left(Y_i-b'(\zeta_i^*)\right) \\
=& \theta^* + \frac{ \eta }{\sqrt{n}}\tilde{X}^{t} \left( \beta^* -\beta^t\right) + \tilde{D}^t\left( \theta^* - \theta^t\right) + \frac{ \eta }{\sqrt{n}}\tilde\xi ,
\end{aligned}
\end{equation}
where $\zeta^t_i := (X^{(i)})^\top \beta^t + 
\sqrt n e_i^\top\theta^t$, $\tilde\zeta_i^t = \lambda^t \zeta^t_i + (1-\lambda^t) \zeta^*_i $ for some $\lambda^t \in (0,1)$, and $\tilde\Phi^t := \frac{ \eta }{n} \sum_{i=1}^n b''(\tilde\zeta_i^t) X^{(i)} (X^{(i)})^\top -I_p \in \mathbb R^{p \times p}$, $\tilde{X}^t := \sum_{i=1}^n b''( \tilde\zeta_i^t)e_i(X^{(i)})^\top\in\mathbb R^{n \times p}$, $\tilde \Xi := \frac1{n} \sum_{i=1}^n X^{(i)}\left(Y_i-b'(\zeta^*_i)\right) \in \mathbb R^p$, \( e_i \in \mathbb{R}^{n\times 1} \) be the \( i \)-th standard basis vector, $\tilde{D}^t := { \eta }\text{diag}(b''( \tilde\zeta_1^t),\ldots,b''( \tilde\zeta_n^t)) - I_n = { \eta } \sum_{i=1}^n b''( \tilde\zeta_i^t) e_i e_i^\top - I_n  \in \mathbb R^{n \times n}$ and $\tilde \xi := \sum_{i=1}^n e_i\left(Y_i-b'(\zeta_i^*)\right)$. 
The decomposition of \( H^{t+1}_\beta \) and \( H^{t+1}_\theta \) reveals that the structure of \( H^{t+1}_\beta \) is similar to \eqref{eq: decomposition} in the proofs of Theorems~1 and~2.

\subsection{Useful lemmas}
The following lemmas establish that \( \tilde{X}^t \), \( \tilde{\Phi}^t \), \( \tilde \Xi \), and \( \tilde \xi \) retain the same properties as those in the proofs of Theorems~1 and~2.
\begin{lemma}[Proposition 1, GLM version]\label{lemma: ell2}
Under Assumptions~1, 2, and~5, for any fixed constant \(C_1>0\), with Assumption~2 imposed in the concrete form
\[
n\ge 30M^2C_1C_\Sigma\max(s\log p,o\log n),
\]
the following properties hold with probability greater than \(1-4\exp(-2C_1s\log p)\):
\begin{enumerate}
\item \textbf{(Restricted isometry)} For every index set $S \subset [p]$ with $|S| \le C_1 s$, the sample covariance matrix satisfies:
\begin{equation}\label{eq: rii GLM}
 \frac{L}{2M} \|u\|_2^2 ~\le ~ u^\top \left(  \frac{1}{n} \sum_{i=1}^n b''(\tilde\zeta_i^t) X^{(i)} (X^{(i)})^\top \right)_{S,S} u ~\le ~2MU \|u\|_2^2, ~\text{  for each } u \in \mathbb R^{|S|}.
\end{equation}
\item \textbf{(Restricted incoherence)} There exists a constant $C_M>0$ depending only on $M$ such that:
\begin{equation}\label{eq: inco GLM}
\sup_{S\subset[p]:~ |S| \le { C_1} s} ~
\sup_{O\subset[n]:~ |O| \le {C_1}o}
\left\| \tilde{X}^t_{O,S} \right\|_2 \le U { \sqrt{C_1}} C_M \sqrt{s \log p + o \log n}.
\end{equation}
\end{enumerate}
\end{lemma}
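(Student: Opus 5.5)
The plan is to reduce the lemma to Proposition~\ref{prop: rii} via deterministic sandwich inequalities, so that no new probabilistic argument is needed. We work on the event of Proposition~\ref{prop: rii} with the same constant $C_1$. That event holds with probability at least $1-4\exp(-2C_1 s\log p)$ and involves only $X$. The weights $b''(\tilde\zeta_i^t)$ depend on the iterates and on $Y$, so they are data-dependent; the bounds must therefore hold uniformly over all weight vectors $w\in[L,U]^n$. The sandwich arguments below give exactly this uniformity, since they use only the pointwise bounds $L\le w_i\le U$.

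\textbf{Restricted isometry.} Fix $S$ with $|S|\le C_1 s$ and $u\in\mathbb R^{|S|}$. Then
$$u^\top\Big(\tfrac1n\sum_i w_i X^{(i)}(X^{(i)})^\top\Big)_{S,S}u=\tfrac1n\sum_i w_i (X_{i,S}u)^2 .$$
Since each summand is nonnegative, this quantity lies between $L\cdot \frac1n\|X_{\cdot S}u\|_2^2$ and $U\cdot \frac1n\|X_{\cdot S}u\|_2^2$. Applying \eqref{eq: rii} then gives the two-sided bound $\frac{L}{2M}\|u\|_2^2$ and $2MU\|u\|_2^2$.

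One point needs care. Assumption~\ref{LU} only guarantees $b''\ge L$ on $\Theta$, whereas the upper bound $U$ holds on all of $\mathbb R$. We therefore need $\tilde\zeta_i^t\in\Theta$, and I expect this to be the main obstacle. Since $\tilde\zeta_i^t$ is a convex combination of $\zeta_i^t$ and $\zeta_i^*\in\Theta$, and $\Theta$ is an interval, it suffices to have $\zeta_i^t\in\Theta$. In the infinite-interval case, for example $\Theta=\mathbb R$, this is automatic. Otherwise I would state it as an implicit requirement of the lemma, namely that the iterates stay in $\Theta$ (e.g.\ via projection), or interpret $L$ as a lower bound on the segment between them. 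I would make this explicit rather than attempt to prove it from the iteration.

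\textbf{Restricted incoherence.} For $|O|\le C_1 o$ and $|S|\le C_1 s$, we have $\tilde X^t_{O,S}=D_O X_{O,S}$, where $D_O=\operatorname{diag}(w_i)_{i\in O}$. Submultiplicativity of the operator norm gives
$$\|\tilde X^t_{O,S}\|_2\le \|D_O\|_2\,\|X_{O,S}\|_2\le U\|X_{O,S}\|_2 .$$
Then \eqref{eq: inco} yields the bound $U\sqrt{C_1}C_M\sqrt{s\log p+o\log n}$. Taking suprema over $O$ and $S$ completes the argument, and the probability statement is inherited directly from Proposition~\ref{prop: rii}.
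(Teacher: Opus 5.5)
Your proposal is correct and follows the paper's proof. The paper likewise sandwiches the weighted Gram matrix between $L$ and $U$ times $\frac1n X^\top X$ on the $X$-only event of Proposition~\ref{prop: rii}, and it bounds $\|\tilde X^t_{O,S}\|_2$ via $\sum_{i\in O}\{b''(\tilde\zeta_i^t)\}^2 X^{(i)}_S (X^{(i)}_S)^\top \preceq U^2 X_{O,S}^\top X_{O,S}$, which is equivalent to your submultiplicativity step. Your caveat that the lower bound requires $\tilde\zeta_i^t\in\Theta$ is a genuine point that the paper's proof uses without justification, so it is appropriate to state it as an explicit requirement.
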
 

\begin{proof}[Proof of Lemma \ref{lemma: ell2}]
Under Assumption 5, for every index set $S \subset [p]$ with $|S| \le C_1 s$, we have
$$
\begin{aligned} 
\frac{L}{n} \sum_{i=1}^n  u^\top \left( X^{(i)} (X^{(i)})^\top \right)_{S,S} u \le  \frac{1}{n} \sum_{i=1}^n b''(\tilde\zeta_i^t)~u^\top \left( X^{(i)} (X^{(i)})^\top \right)_{S,S} u
\le \frac{U}{n} \sum_{i=1}^n  u^\top \left( X^{(i)} (X^{(i)})^\top \right)_{S,S} u.
\end{aligned}
$$
Therefore, following \eqref{eq: weyl}, with a probability greater than $1 - 4p^{-2C_1 s}$ we get 
$$
\begin{aligned} 
\left\| \frac{1}{n} \sum_{i=1}^n b''(\tilde\zeta_i^t) \left( X^{(i)} (X^{(i)})^\top \right)_{S,S} \right\|_2
& \le U \left( \| \Sigma\|_2 + \frac1{2M} \right) \le 2MU .
\end{aligned}
$$
A similar technique shows that 
$$
\begin{aligned} 
\Lambda_{\min} \left( \frac{1}{n} \sum_{i=1}^n b''(\tilde\zeta_i^t) \left( X^{(i)} (X^{(i)})^\top \right)_{S,S} \right)
& \ge L \left( \Lambda_{\min} (\Sigma) - \frac1{2M} \right) \ge \frac{L}{2M},
\end{aligned}
$$
therefore we complete the proof of \eqref{eq: rii GLM}.

By the definition of $\tilde X^t$, for every index set $S \subset [p]$ with $|S| \le C_1 s$ and set $O \subset [n]$ with $|O| \le C_1 o$, we have
$$
\begin{aligned}
    \left\| \tilde{X}^t_{O,S} \right\|_2^2 
    &=  \left\| (\tilde{X}^t_{O,S})^\top \tilde{X}^t_{O,S}  \right\|_2 
    = \left\|\sum_{i\in O} (b''( \tilde\zeta_i^t))^2 X^{(i)}_S(X^{(i)}_S)^\top \right\|_2 \\
    &\overset{(i)}\le U^2 \left\| X_{O,S}^\top X_{O,S} \right\|_2\\
    &\le U^2 C_1 C_M^2 \cdot (s\log p + o \log n),
\end{aligned}
$$
where inequality (i) follows from Assumption 5, and the last inequality follows from the restricted incoherence result in Proposition 1, which holds with a probability greater than $1- 4p^{-2C_1 s}$. Therefore, we complete the proof of Lemma \ref{lemma: ell2}.
\end{proof}

\begin{lemma}\label{lemma:subgaussian}
Assume that GLM (5.1) holds with $\zeta_i^* = (X^{(i)})^\top \beta^* + \sqrt n e_i^\top\theta^* ,~\text{for every}\ i \in [n]$. Then based on Assumption 5, $\tilde\xi_i : =Y_i - b'(\zeta_i^*)$ is sub-Gaussian with zero mean and sub-Gaussian parameter $\sigma = \sqrt{aU}$, that is,
\begin{equation}
    \mathbf{P}\left( |Y_i- b'(\zeta_i^*)| \ge t\right)
    \le 2 \exp\left( -\frac{t^2}{2aU} \right), ~\text{for all}\ t \ge 0.
\end{equation}
\end{lemma}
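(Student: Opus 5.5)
The plan is to verify sub-Gaussianity of $\tilde\xi_i = Y_i - b'(\zeta_i^*)$ through its moment generating function, which the exponential-family density \eqref{GLM} gives in closed form. First I will compute the cumulant generating function. Since $f(\cdot;\zeta_i^*)$ integrates to one for every admissible natural parameter, we have
\[
\int \exp\left(\frac{y\zeta - b(\zeta)}{a} + c(y,a)\right)\,\mathrm dy = 1 .
\]
Hence, for $\lambda\in\mathbb R$,
\[
\mathbf E\, e^{\lambda Y_i} = \exp\left(\frac{b(\zeta_i^* + a\lambda) - b(\zeta_i^*)}{a}\right).
\]
Taking $\zeta = \zeta_i^*$ and differentiating at $\lambda=0$ gives $\mathbf E Y_i = b'(\zeta_i^*)$, so $\tilde\xi_i$ is centered. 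It follows that
\[
\log \mathbf E\, e^{\lambda \tilde\xi_i} = \frac{b(\zeta_i^*+a\lambda) - b(\zeta_i^*) - a\lambda\, b'(\zeta_i^*)}{a}.
\]

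Next I apply a second-order Taylor expansion with Lagrange remainder. This gives $b(\zeta_i^*+a\lambda)-b(\zeta_i^*)-a\lambda b'(\zeta_i^*) = \tfrac12 a^2\lambda^2 b''(\bar\zeta)$ for some $\bar\zeta$ between $\zeta_i^*$ and $\zeta_i^*+a\lambda$. By Assumption \ref{LU}, $\sup_{t\in\mathbb R} b''(t)\le U$; the supremum there is taken over all of $\mathbb R$, not only over $\Theta$, and this is exactly what is needed. It yields $\log\mathbf E e^{\lambda\tilde\xi_i}\le aU\lambda^2/2$ for every $\lambda$, which is the definition of sub-Gaussianity with parameter $\sigma=\sqrt{aU}$.

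Finally, the tail bound follows from the Chernoff argument. For $t\ge 0$, $\mathbf P(\tilde\xi_i\ge t)\le \exp(-\lambda t + aU\lambda^2/2)$, and optimizing at $\lambda = t/(aU)$ gives $\exp(-t^2/(2aU))$. Applying the same bound to $-\tilde\xi_i$ and taking a union bound produces the factor $2$.

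The main obstacle is justifying the closed-form MGF for every $\lambda\in\mathbb R$, which requires the natural parameter space of the family to be all of $\mathbb R$, or at least to contain $\zeta_i^*+a\lambda$. I would handle this by reading Assumption \ref{LU}'s condition that $b$ is twice differentiable with $b''\le U$ on $\mathbb R$ as implicitly asserting that $b$ is finite on $\mathbb R$, i.e.\ that the cumulant function exists on all of $\mathbb R$. By the standard exponential-family fact, the normalization identity above then holds for every $\zeta\in\mathbb R$. If instead the natural parameter space were a proper subinterval, the argument would only give local sub-Gaussianity, so I will state this reading explicitly.
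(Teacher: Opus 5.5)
Your proposal is correct and follows the paper's proof essentially step for step: the closed-form exponential-family MGF $\mathbf E e^{\lambda Y_i}=\exp\{(b(\zeta_i^*+a\lambda)-b(\zeta_i^*))/a\}$ (which the paper simply cites from Lehmann's text), centering via $\mathbf E Y_i=b'(\zeta_i^*)$, a second-order Taylor expansion with $b''\le U$ giving the bound $\exp(aU\lambda^2/2)$, and the standard Chernoff tail with a factor $2$ from the two sides. Your remark that the MGF identity for every $\lambda\in\mathbb R$ requires the natural parameter space to be all of $\mathbb R$ is a point the paper uses without comment. Reading it off from $\sup_{t\in\mathbb R} b''(t)\le U$ in Assumption~\ref{LU} is the right way to make that step rigorous.
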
 
\begin{proof}[Proof of Lemma \ref{lemma:subgaussian}]
Note that for GLM (5.1), we have $\mathbf E_{Y_i | X^{(i)}}(Y_i) = b'(\zeta_i^*)$. From Theorem 5.10 of \citet{lehmann2006theory}, we also have
\begin{equation}
    \mathbf E_{Y_i | X^{(i)}}\left( \exp(\lambda Y_i) \right) = \exp\left( \frac{b(\zeta_i^* + \lambda a )- b(\zeta_i^*) }{a}\right),~ \forall \lambda \in \mathbb R,
\end{equation}
which leads to
\begin{equation}
\begin{aligned}
\mathbf E_{Y_i | X^{(i)}} \left( e^{\lambda (Y_i-b'(\zeta_i^*))} \right) 
&= \exp\left( \frac{b(\zeta_i^* + \lambda a )- b(\zeta_i^*) -\lambda a\cdot b'(\zeta_i^*)}{a}\right)\\
&\overset{(i)}{=} \exp\left( \frac{ \lambda^2 a b''(\tilde\zeta_i)}{2 }\right)\\
&\le \exp\left( \frac{ \lambda^2 a U}{2 }\right), ~\forall \lambda \in \mathbb R,
\end{aligned}
\end{equation}
where in equality (i), $\tilde \zeta_i$ is between $\zeta_i$ and $\zeta_i+ \lambda a$ based on Taylor's Theorem, and the last inequality follows from Assumption 5.
Hence, we prove that $Y_i - b'(\zeta_i^*)$ is sub-Gaussian with zero mean and sub-Gaussian parameter $\sigma = \sqrt{aU}$, and it satisfies the concentration inequality (see, e.g., \citet{wainwright2019high}):
\begin{equation}
    \mathbf{P}\left( |Y_i- b'(\zeta_i^*)| \ge t\right)
    \le 2 \exp\left( -\frac{t^2}{2aU} \right), ~\forall t \ge 0,
\end{equation}
which completes the proof of Lemma \ref{lemma:subgaussian}. 
\end{proof}

\subsection{Proof of Theorem 6}

We are now ready to present the proof of Theorem~6.
The proof is divided into three steps: \textbf{Step 1}: establishing the convergence property of the first-stage algorithm (Theorem 1, GLM version); \textbf{Step 2}: establishing the signal adaptive result of the second-stage algorithm (Theorem 2, GLM version).
Define $M ' : =\max \left( M , ~\frac{M}{L},~MU \right)$.

\textbf{Step 1 (First stage Algorithm 1, GLM version)}\quad 
Define the event
\begin{equation*} 
\mathcal E_{GLM}: = \left\{  \left\| \tilde \Xi  \right\|_\infty < 4\sqrt{\frac{aUM'\log p}{n}} , \text{ and }  \left\| \frac1{\sqrt n} \tilde \xi  \right\|_\infty < 3\sqrt{\frac{ aU \log n}{n} }\right\},
\end{equation*}
and
{\small
$$
\mathcal E_{X,GLM}:= \begin{Bmatrix}
\frac{1}{2M'} \le \Lambda_k\left( \frac{1}{n} \sum_{i=1}^n b''(\tilde\zeta_i^t) X^{(i)}_S (X^{(i)}_S)^\top \right) \le 2M',~ S \subset [p]:~ |S| \le (2B'+1)s \text{ and } 1\le k \le |S|,  \\
\sup_{S\subset[p]:~ |S| \le (B'+1) s} ~
\sup_{O\subset[n]:~ |O| \le (B'+1)o}
\left\| \tilde X_{O,S}^t \right\|_2 \le {(\sqrt {B'}+1)} f' \sqrt n 
\end{Bmatrix},
$$
}

\noindent
where
\begin{equation*}\label{eq: B delta GLM}
B':= \left(\frac{\kappa'+\delta'}{\kappa'- \delta'}\right)^2>1, \quad \delta' := \frac{4M'^2}{4M'^2 +1}\in (0,1), \quad f':= C_M U \sqrt{\frac{s \log p + o \log n}n},
\end{equation*}
and the decay rate $\kappa' \in \left(  \delta',1 \right)$. The event \(\mathcal E_{X,GLM}\) follows from Lemma~\ref{lemma: ell2} with \(C_1=2B'+1\) for the restricted isometry part and \(C_1=B'+1\) for the restricted incoherence part. These choices are fixed once \(M,L,U\) and \(\kappa'\) are fixed. 
By Lemmas \ref{lemma: ell2} and \ref{lemma:subgaussian}, we have $\mathbf P(\mathcal E_{GLM} \cap \mathcal E_{X,GLM}) \ge 1- O\left(p^{-2} + n^{-3}\right)$. 
Under the event $\mathcal E_{GLM} \cap \mathcal E_{X,GLM}$, we choose some proper parameters
\begin{equation}\label{eq: glm lambda}
\begin{aligned}
\lambda_{\beta, \infty} &\asymp \sqrt{aU} \left( \sqrt{\frac{ \log p}{n}} + \frac{o \log n}{n \sqrt s} \right), \\
\lambda_{\theta, \infty} &\asymp \sqrt{aU} \left(  \sqrt{\frac{\log n}{n}} + \frac{s \log p}{n \sqrt o} \right).
\end{aligned}
\end{equation}
Then, by choosing the learning rate $\eta' \in \left[\frac{2M'}{4M'^2 + 1}, \frac{4M'}{4M'^2 + 1} \right]$ and applying a proof process similar to that in Section \ref{sec: MI}, we ensure that, with probability at least $1 - O(p^{-2}+n^{-3})$, both estimators are $\ell_0$-sparse, i.e.,
$
\|\widehat{\beta}^{GLM}\|_0 \lesssim s,~\|\widehat{\theta}^{GLM}\|_0 \lesssim o,
$
and satisfies
\begin{equation*}
\begin{aligned}
\| \widehat{\beta}^{GLM} - \beta^* \|_2^2 \lesssim & aU\left( \frac{s \log p}{n} + \frac{o^2 \log^{2} n}{n^2} \right),\\
\| \widehat{\theta}^{GLM} - \theta^* \|_2^2 \lesssim& aU \left( \frac{o\log n}{n} + \frac{s^2 \log^{2} p}{n^2} \right). 
\end{aligned}
\end{equation*}

\textbf{Step 2 (Second-stage Algorithm 2, GLM version)}\quad  
Following Lemmas \ref{lemma: supp error} and \ref{lemma:subgaussian}, we have
\begin{align*}
\mathbf P\left\{\underbrace{ \frac1n \left\|  X_{\cdot S^*}^\top \tilde \xi \right\|_2 \le \sqrt{aUM} \sqrt{\frac{4 s +6 \log(1/\varrho)}n} }_{=:\mathcal E'_{GLM}} \right\}
\ge 1- O\left(p^{-2s} + \varrho \right).
\end{align*}
We fix the threshold parameters $\lambda_\beta$, $\lambda_\theta$ as in \eqref{eq: glm lambda}. 
Then, under the signal condition 
\begin{equation}\label{eq: betamin in th6}
\min_{i \in S^*} |\beta_i^*| \ge  \left( \frac{4 \kappa'}{\kappa'-\delta'} + \frac{\kappa'-\delta'}{4 \kappa'} \right) \cdot \lambda_\beta 
\asymp  \sqrt{aU} \left( \sqrt{\frac{ \log p}{n}} + \frac{o \log n}{n \sqrt s} \right),
\end{equation}
by applying a proof technique similar to that in Section \ref{sec: prove of th2-3}, we get the sharper bound 
\begin{equation*}
\| \tilde \beta^{GLM} - \beta^*\|_2 
\lesssim \sqrt{\frac{s +\log (1/\varrho)}n} + \frac{s\log p+ o \log n}{n},
\end{equation*}
with a probability greater than $1 - \varrho - O(p^{-2}+n^{-3})$.
Therefore, we complete the proof of Theorem 6, demonstrating the signal adaptivity of our procedure under the GLM setting.

\section{Proof of Theorem 7}\label{supp: heavy tail}
We first establish the estimation error bound for the regime where $\delta \in (0, 1)$.
Before proceeding to the main analysis, we provide some necessary probabilistic inequalities.

\paragraph{Concentration of $|\Xi_j'|$.} Define 
$$
\Xi_j' := \frac1n \sum_{i=1}^n X_{ij} \xi_i =  \frac1n \sum_{i=1}^n X_{ij} \psi_\tau(\epsilon_i), \text{ for every } j \in [p].
$$
For every $k\ge 2$, by the definition of $\psi_\tau(\cdot)$, we have the moment inequality
$$
\begin{aligned}
\mathbf E( |\xi_i|^k) =& \mathbf E\left\{ \tau^k \mathbf 1 ( |\epsilon_i| > \tau) \right\} + \mathbf E\left\{ |\epsilon_i|^k \mathbf 1 ( |\epsilon_i| \le \tau) \right\}\\
\le &  \mathbf E\left\{ \tau^k \frac{|\epsilon_i|^{1+\delta}}{\tau^{1+\delta}}\cdot \mathbf 1 ( |\epsilon_i| > \tau) \right\} + \mathbf E\left\{ |\epsilon_i|^k \frac{\tau^{k-1-\delta}}{|\epsilon_i|^{k-1-\delta}} \cdot \mathbf 1 ( |\epsilon_i| \le \tau) \right\}\\
\le& \tau^{k-1-\delta} \mathbf E\left\{ |\epsilon_i|^{1+\delta} \mathbf 1 ( |\epsilon_i| > \tau) + |\epsilon_i|^{1+\delta}\mathbf 1 ( |\epsilon_i| \le \tau) \right\}\\
\le & v_\delta \tau^{k-1-\delta}.
\end{aligned}
$$
By the sub-Gaussian property, we obtain
$$
\mathbf E ( |X_{ij}|^k ) 
= \int_{t \in \mathbb R^+} \mathbf P(|X_{ij}|^k > t) \mathrm d t
\le (2\Sigma_{jj})^{k/2} k \Gamma(k/2).
$$
Then, for every $\lambda$ satisfying $|\lambda| \le \frac{1}{2 \tau \sqrt{ 2 \Sigma_{jj} } }$, we get a Bernstein-type bound: 
$$
\begin{aligned}
\mathbf E(e^{\lambda X_{ij} \xi_j }) 
\le & 1 + \frac12 \lambda^2 \mathbf EX_{ij}^2 \xi_j^2 + \sum_{k\ge 3} \frac{\lambda^k}{k!} \mathbf E |X_{ij} \xi_j|^k\\
\le &  1 +  2\lambda^2\Sigma_{jj} v_\delta \tau^{1-\delta} + \sum_{k\ge 3} \frac{\lambda^k v_\delta \tau^{k-1-\delta} \cdot (\sqrt{2 \Sigma_{jj}})^k ~k!}{k!} \\
\le & 1+ 4\lambda^2\Sigma_{jj} v_\delta \tau^{1-\delta} \\
\le& \exp(4\lambda^2\Sigma_{jj} v_\delta \tau^{1-\delta}), \text{ for each } (i,j) \in [n] \times [p].
\end{aligned}
$$
Therefore, by the probability union bound, we get the concentration inequality
\begin{equation}\label{eq: heavy Xi}
\mathbf P \left( \max_{j \in [p]}\left[ |\Xi_j'| - \frac{v_\delta \sqrt{2 \Sigma_{jj}} }{\tau^\delta} - \frac{6 \sqrt{2 \Sigma_{jj}}\cdot \tau \log p }{n}\right] \ge 0 \right)   \le 2 p^{-2}.
\end{equation}

\paragraph{Concentration of $\|\theta^*\|_0$.} 
We next bound the term $\|\theta^*\|_0 = \sum_{i \in [n]} \mathbf 1(\epsilon_i - \psi_\tau(\epsilon_i) \ne 0 ) = \sum_{i \in [n]} \mathbf 1( |\epsilon_i| > \tau )$. 
Define $q:= \mathbf P( |\epsilon_i| > \tau ) \le \frac{v_\delta}{\tau^{1+\delta}}$.
By Bernstein's inequality (Appendix D.4 in \citet{FML2018}), we have 
\begin{equation}\label{eq: heavy o}
\begin{aligned}
\mathbf P\left( \| \theta^*\|_0 \ge \frac{2nv_\delta}{\tau^{1+\delta}}  \right)
\le &\mathbf P\left\{ \frac1n \sum_{i \in [n]} \mathbf1(|\epsilon_i| > \tau) \ge q+ \frac{v_\delta}{\tau^{1+\delta}} \right\} \\
\le &\exp\left( -\frac{n (v_\delta/\tau^{1+\delta})^2}{2q(1-q) + 2/3 \cdot v_\delta/\tau^{1+\delta}}\right)
\le \exp\left( -\frac{n v_\delta}{ 3\tau^{1+\delta} }\right).
\end{aligned}
\end{equation}

Take $\tau = \left( \frac{n v_\delta}{6 \log p}\right)^{\frac1{1+\delta}}$.
We now replace the event $\mathcal E$ \eqref{eq: good event of Th1} (in the proof of Theorem 1) by the newly defined $\mathcal E_{heavy}$:
$$
\mathcal E_{heavy}:= \begin{Bmatrix}
\left\| \Xi' \right\|_\infty \le \frac{v_\delta \sqrt{2 \max_j \Sigma_{jj}} }{\tau^\delta} + \frac{6 \sqrt{2 \max_j \Sigma_{jj}}\cdot \tau \log p }{n}, \\
o \le \frac{2nv_\delta}{\tau^{1+\delta}}, \text{ and }~\left\| \xi \right\|_\infty \le\tau
\end{Bmatrix}.
$$ 
By \eqref{eq: heavy Xi} and \eqref{eq: heavy o}, we have $\mathbf P(\mathcal E_{heavy}) \ge 1- 3p^{-2}$.

\paragraph{Near-optimal error bound.}

We first construct a suitable $\lambda_{\theta,0}$. According to \eqref{eq: lambda0s}, it suffices to ensure that
\begin{equation}\label{eq: lambda 0 heavy}
    \sqrt o \lambda_{\theta,0} > \|\theta^* \|_2 \vee (\sqrt o \lambda_{\theta, \infty}).
\end{equation}
Following the proof of Theorem 1, we aim to choose a proper $\lambda_{\theta, \infty}$ such that
$$
\sqrt o \lambda_{\theta, \infty}
\gtrsim \sqrt{o} \left\| \frac1{\sqrt n} \xi \right\|_\infty + \sqrt{\frac{s\log p + o \log n}{n}} \cdot \sqrt s \| \Xi' \|_\infty.
$$
Additionally,
$$
\begin{aligned}
&\sqrt{o} \left\| \frac1{\sqrt n} \xi \right\|_\infty + \sqrt{\frac{s\log p + o \log n}{n}} \cdot \sqrt s \| \Xi' \|_\infty\\
\lesssim & \tau \sqrt{\frac on} + \sqrt{\frac{s\log p + o \log n}{n}} \cdot \sqrt s \left( \frac{v_\delta  }{\tau^\delta} + \frac{\tau \log p }{n} \right)\\
\asymp& \tau \sqrt{\frac on} \left( 1 + \sqrt{\frac{s\log p}{n}} \sqrt{\frac{s\log p + o \log n}{n}} \right)\\
\asymp& \tau \sqrt{\frac on},
\end{aligned}
$$
where the second equality uses $o \asymp \log p$, and the last equality follows from the sample size condition $n \gtrsim (s+\log n)\log p$.
Therefore, it is enough to choose
$$
\lambda_{\theta,0} \gtrsim \frac1{\sqrt n} \left( \|\epsilon  \|_\infty \vee \tau \right) \asymp  \frac1{\sqrt n} \left( \|\epsilon  \|_\infty \bigvee \left( \frac{n v_\delta}{\log p}\right)^{\frac1{1+\delta}} \right), 
$$
which guarantees \eqref{eq: lambda 0 heavy}.
Moreover, by the union bound and the Markov inequality, for every $r >0$, 
$$ 
\mathbf P ( \|\epsilon  \|_\infty > r)
\le n \mathbf E\left\{  \mathbf1 \left( |\epsilon|^{1+\delta}>r^{1+\delta} \right)\right\} \le \frac{n v_\delta}{r^{1+\delta}} .
$$
Hence, by taking $\lambda_{\theta,0} \gtrsim n^{-1/2} (nv_\delta/\varrho)^{\frac1{1+\delta}}$, we ensure that, with probability at least $1-\varrho-O(p^{-2})$, the initial threshold $\lambda_{\theta,0}$ satisfies \eqref{eq: lambda 0 heavy}.

Following the proof of Theorems 1 and 2, with a probability greater than $1-\varrho- O(p^{-2})$, the output of our AC-IHT algorithm satisfies
$$
\begin{aligned}
\| \tilde \beta  - \beta^*\|_2 \lesssim& \sqrt s \|\Xi' \|_\infty + \sqrt{\frac{s\log p + o \log n}{n}} \cdot \sqrt{o} \left\| \frac1{\sqrt n} {\xi} \right\|_\infty\\
\lesssim & \frac{\sqrt s v_\delta}{\tau^\delta} + \frac{\sqrt s \log p}{n}\tau + \sqrt{\frac{v_\delta s \log p}{n} \tau^{1-\delta} } + \frac{v_\delta \sqrt{\log n}}{\tau^\delta} \\
\lesssim& \frac{ v_\delta \sqrt{s+\log n}}{\tau^\delta} + \frac{\sqrt{s+\log n} \log p}{n}\tau \\
\asymp & v_\delta^{\frac{1}{1+\delta}} \sqrt{s+\log n} \left( \frac{\log p}{n} \right)^{\frac{\delta}{1+\delta}},
\end{aligned}
$$
where the first two inequalities follow from Assumption 1, which imposes constant upper and lower bounds on the spectrum of $\Sigma$.

In the case $\delta \ge 1$, by the moments inequality, we have $v_1^{1/2} := (\mathbf E |\epsilon_i|^2)^{1/2} \le (\mathbf E |\epsilon_i|^{1+\delta})^{1/(1+\delta)} \le v_\delta^{1/(1+\delta)} $.
Then we take $\delta'=1$ and $v_{\delta'} = v_1$.
Following the proof sketch provided above, with a probability at least $1 - \varrho -O(p^{-2})$, we can choose $\lambda_{\theta,0} \gtrsim  \sqrt{v_1/\varrho }$ and get the upper bound
$$ 
\| \tilde \beta  - \beta^*\|_2 
\lesssim
v_{\delta'}^{\frac{1}{1+{\delta'}}} \sqrt{s+\log n} \left( \frac{\log p}{n} \right)^{\frac{{\delta'}}{1+{\delta'}}} 
= \sqrt{\frac{v_1 (s+ \log n) \log p}{n}}.
$$
Therefore, we complete the proof of Theorem 7, demonstrating the applicability of our algorithm to the high-dimensional heavy-tailed regression.

\end{appendix}

\end{document}